\documentclass{article}

\usepackage{graphicx}
\usepackage{arxiv}
\usepackage[numbers]{natbib}

\usepackage[utf8]{inputenc} % allow utf-8 input
\usepackage[T1]{fontenc}    % use 8-bit T1 fonts
\usepackage{hyperref}       % hyperlinks
\usepackage{url}            % simple URL typesetting
\usepackage{color}
\usepackage{wrapfig}

\usepackage{amsmath}
\usepackage{amsfonts}
\usepackage{amssymb}
\usepackage{amsthm}
\usepackage[mathscr]{eucal}
\theoremstyle{definition}
\newtheorem{definition}{Definition}

\theoremstyle{theorem}
\newtheorem{theorem}{Theorem}
\newtheorem{proposition}{Proposition}
\newtheorem{lemma}{Lemma}
\newtheorem{remark}{Remark}

%%%%%%%%%%%%%%%%%%%%%%%%%%%%%%%%%%%%%%%%%%
%%% Samory's Defs

\newcommand{\thisHypName}{}
\newtheorem*{genericHyp}{\thisHypName}
\newenvironment{namedHyp}[1]
  {\renewcommand{\thisHypName}{#1}%
   \begin{genericHyp}}
  {\end{genericHyp}}

\newcommand{\AlgA}{Algorithm 1}
\newcommand{\AlgB}{Algorithm 2}
\newcommand{\AlgC}{Algorithm 3}
\newcommand{\AlgD}{Algorithm 4}

\newcommand{\nc}{\text{(NC)}}
\newcommand{\rcs}{\text{(RCS)}}

\newcommand{\semiMetric}{\text{dist}}
\newcommand{\semiDist}[2]{\semiMetric \left( #1 , #2 \right)}
\newcommand{\KLDiv}[2]{\mathcal{D}_{\text{kl}} \left(#1 | #2 \right)}

\newcommand{\dist}{\mathcal{F}}
\newcommand{\X}{\mathcal{X}}

\newcommand{\Hyp}{\mathcal{H}}
\newcommand{\Gyp}{\mathcal{G}}
\renewcommand{\H}{\Hyp}

\newcommand{\V}{d_\Hyp}
\newcommand{\Pdim}{d_{p}}
\newcommand{\VV}{d}
\newcommand{\E}{\mathcal{E}}

\newcommand{\expec}{\mathbb{E} }

\newcommand{\abs}[1]{\left|#1\right|}
\newcommand{\paren}[1]{\left(#1\right)}
\newcommand{\braces}[1]{\left\{#1\right\}}

\newcommand{\prob}{\mathbb{P}}

%\Declare\Delta_{\Y}athOperator*{\diameter}{diam}

%\newcommand{\diam}[1]{\bar{\Delta}_n^2\left(#1\right)}

%\newcommand{\ind}[1]{\mathbf{1}\left\{#1\right\}}

\usepackage{dsfont}
\DeclareSymbolFont{bbold}{U}{bbold}{m}{n}
\DeclareSymbolFontAlphabet{\mathbbold}{bbold}
\newcommand{\ind}{\mathbbold{1}}

\DeclareMathOperator*{\argmin}{argmin}

\DeclareMathOperator*{\Expectation}{\expec}
\DeclareMathOperator*{\Prob}{\prob}

\newcommand{\cost}{\mathfrak{c}}
\newcommand{\nats}{\mathbb{N}}

\newsavebox{\savepar}
\newenvironment{bigboxit}{\begin{center}\begin{lrbox}{\savepar}
\begin{minipage}[h]{4.6in}
%\begin{minipage}[h]{5.8in}
%\sffamily
\normalfont
\begin{flushleft}}
{\end{flushleft}\end{minipage}\end{lrbox}\fbox{\usebox{\savepar}}
\end{center}}

\newcommand{\Pclass}{{\mathscr{P}}}

%%%%%%%%%%%%%%%%%%%%%%%%%%%%%%%%%%%%%%%%%%%%%%%%%%%%%%%%%%%%%%%%%%%%%%

\title{On the Value of Target Data in Transfer Learning}
%\title{On the Value of Source and Target Data in Transfer}

\author{
 Steve Hanneke \\
   Toyota Technological Institute at Chicago\\
   \texttt{steve.hanneke@gmail.com}
%   Cranberry-Lemon University\\
%   Pittsburgh, PA 15213 \\
%   \texttt{hippo@cs.cranberry-lemon.edu} \\
  %% examples of more authors
   \And
 Samory Kpotufe \\
   Columbia University, Statistics\\
   \texttt{skk2175@columbia.edu}
  %% \texttt{email} \\
}

\begin{document}
\maketitle

\begin{abstract}
We aim to understand the value of additional labeled or unlabeled target data in transfer learning, for any given amount of source data; this is motivated by practical questions around minimizing sampling costs, whereby, target data is usually harder or costlier to acquire than source data, but can yield better accuracy. 

%such as how much labeled target data to sample if any, how to optimally aggregate source and target data, while accounting for the usually higher costs of obtaining labeled target data. 

To this aim, we establish the first minimax-rates in terms of both source and target sample sizes, and show that performance limits are captured by new notions of discrepancy between source and target, 
which we refer to as \emph{transfer exponents}. 

 Interestingly, we find that attaining minimax performance is akin to ignoring one of the source or target samples, provided distributional parameters were known a priori. Moreover, we show that practical decisions -- w.r.t. minimizing sampling costs -- can be made in a minimax-optimal way \emph{without} knowledge or estimation of distributional parameters nor of the discrepancy between source and target. 
\end{abstract}

\section{Introduction}
The practice of transfer-learning often involves acquiring some amount of target data, and involves various practical decisions as to how to best combine source and target data; however much of the theoretical literature on transfer only addresses the setting where no target labeled data is available. 

We aim to understand the value of target labels, that is, given $n_P$ labeled data from some source distribution $P$, and $n_Q$ labeled target labels from a target $Q$, what is the best $Q$ error achievable by any classifier in terms of \emph{both} $n_Q$ and $n_P$, and which classifiers achieve such optimal transfer. { In this first analysis, we mostly restrict ourselves to a setting, similar to the traditional \emph{covariate-shift} assumption, where the best classifier -- from a fixed VC class $\Hyp$ -- is the same under $P$ and $Q$.}

We establish the first minimax-rates, for bounded-VC classes, in terms of both source and target sample sizes $n_P$ and $n_Q$, and show that performance limits are captured by new notions of discrepancy between source and target, 
which we refer to as \emph{transfer exponents}. 

The first notion of transfer-exponent, called $\rho$, is defined in terms of discrepancies in excess risk, and is most refined. Already here, our analysis reveals a surprising fact: the best possible rate (matching upper and lower-bounds) in terms of $\rho$ and both sample sizes $n_P, n_Q$ is - up to constants - achievable by an oracle which simply ignores the %worse of the source or target labeled data. 
least informative of the source or target datasets.
In other words, if $\hat h_P$ and $\hat h_Q$ denote the ERM on  data from $P$, resp. from $Q$, one of the two achieves the optimal $Q$ rate over any classifier having access to both $P$ and $Q$ datasets. However, which of $\hat h_P$ or $\hat h_Q$ is optimal is not easily decided without prior knowledge: for instance, cross-validating on a holdout target-sample would naively result in a rate of $n_Q^{-1/2}$ which can be far from optimal given large $n_P$. Interestingly, we show that the optimal $(n_P, n_Q)$-rate is achieved by a generic approach, akin to so-called \emph{hypothesis-transfer} \cite{kuzborskij2013stability, du2017hypothesis}, which optimizes $Q$-error under the constraint of low $P$-error, and does so without knowledge of distributional parameters such as $\rho$.

We then consider a related notion of \emph{marginal} transfer-exponent, called $\gamma$, defined w.r.t. marginals $P_X, Q_X$. 
This is motivated by the fact that practical decisions in transfer often involve the use of cheaper unlabeled data (i.e., data drawn from $P_X, Q_X$). We will show that, when practical decisions are driven by observed changes in marginals $P_X, Q_X$, the marginal notion $\gamma$ is then most suited to capture performance as it does not require knowledge (or observations) of label distribution $Q_{Y|X}$.

%tes that can be shown to be tight in various distributional regimes. 

In particular, the marginal exponent $\gamma$ helps capture performance limits in the following scenarios of current practical interest: 

$\bullet$ {\bf Minimizing sampling cost.} Given different costs of labeled source and target data, and a desired target excess error at most $\epsilon$, how to use unlabeled data to decide on an optimal sampling scheme that minimizes labeling costs while achieving target error at most $\epsilon$. (Section \ref{sec:adaptive})

$\bullet$ {\bf Choice of transfer.} Given two sources $P_1$ and $P_2$, each at some unknown distance from $Q$, given unlabeled data and some or no labeled data from $Q$, how to decide which of $P_1, P_2$ transfers best to the target $Q$. 
(Appendix \ref{sec:P1vsP2})

$\bullet$ {\bf Reweighting.} Given some amount of unlabeled data from $Q$, and some or no labeled $Q$ data, how to optimally re-weight (out of a fixed set of schemes) the source $P$ data towards best target performance. While differently motivated, this problem is related to the last one. (Appendix  \ref{sec:reweighting})

Although optimal decisions in the above scenarios depend tightly on unknown distributional parameters such as different label noise in source and target data, and on unknown \emph{distance} from source to target (as captured by  $\gamma$), we show that such practical decisions can be made, near optimally, with no knowledge of distributional parameters, and perhaps surprisingly, without ever estimating $\gamma$. Furthermore, the unlabeled sampling complexity can be shown to remain low.  
Finally, the procedures described in this work remain of a theoretical nature, but 
yield new insights into how various practical decisions in transfer can be made near-optimally in a data-driven fashion.

%\subsection*{Related Work}
\paragraph{Related Work.}
Much of the theoretical literature on transfer can be subdivided into a few main lines of work. As mentioned above, the main distinction with the present work is in that they mostly focus on situations with no labeled target data, and consider distinct notions of discrepancy between $P$ and $Q$. We contrast these various notions with the transfer-exponents $\rho$ and $\gamma$ in Section \ref{sec:examples}. 

A first direction considers refinements of total-variation that quantify changes in error over classifiers in a fixed class $\Hyp$. 
The most common such measures are the so-called $d_{\mathcal{A}}$-divergence \citep{ben2010theory, david2010impossibility, germain2013pac} and the $\mathcal{Y}$-discrepancy \citep{mansour2009domain, mohri2012new, cortesadaptation}. In this line of work, the rates of transfer, largely expressed in terms of $n_P$ alone, take the form $o_p(1) + C\cdot \text{divergence}(P, Q)$.  In other words, 
transfer down to $0$ error seems impossible whenever these divergences are non-negligible; we will carefully argue that such intuition can be overly pessimistic.

Another prominent line of work, which has led to many practical procedures, considers so-called density ratios $f_Q/f_P$ (importance weights) as a way to capture the similarity between $P$ and $Q$ \citep{quionero2009dataset, sugiyama2012density}. %
A related line of work considers information-theoretic measures such as KL-divergence or Renyi divergence 
\citep{sugiyama2008direct, mansour2009multiple} but has received relatively less attention. Similar to these notions, the transfer-exponents $\rho$ and $\gamma$ are \emph{asymmetric} measures of distance, attesting to the fact that it could be easier to transfer from some $P$ to $Q$ than the other way around. 
However, a significant downside to these notions is that they do not account for the specific 
structure of a hypothesis class $\Hyp$ as is the case  with the aforementionned divergences.
As a result, they can be sensitive to issues such as minor differences of support in $P$ and $Q$, 
which may be irrelevant when learning with certain classes $\Hyp$.

On the algorithmic side, many approaches assign importance weights to source data from $P$ so as to minimize some prescribed \emph{metric} between $P$ and $Q$ \cite{cortes2008sample, gretton2009covariate}; as we will argue, \emph{metrics}, being symmetric, can be inadequate as a measure of discrepancy given the inherent asymmetry in transfer.

The importance of unlabeled data in transfer-learning, given the cost of target labels, has always been recognized, with various approaches developed over the years \citep{huang2007correcting, ben2012hardness}, 
including more recent research efforts into so-called  \emph{semisupervised} or \emph{active} transfer, 
where, given unlabeled target data, the goal is to request as few target labels as possible to improve 
classification over using source data alone \citep{saha2011active, chen2011co, pmlr-v28-chattopadhyay13, yang2013theory, berlind2015active}. 

More recently, \cite{blanchard2017domain, scott2019generalized, cai2019transfer} consider nonparametric transfer settings (unbounded VC) allowing for changes in conditional distributions. Also recent, but more closely related, 
\cite{kpotufe2018marginal} proposed a nonparametric measure of discrepancy which successfully captures the interaction between labeled source and target under nonparametric conditions and 0-1 loss; these notions however ignore the additional structure afforded by transfer in the context of a fixed hypothesis class. 

%However the measure of \cite{kpotufe2018marginal} omits the interaction with a fixed hypothesis class $\Hyp$, and therefore shares similar downsides as the density-ratio approaches discussed above. 

%can therefore be viewed as capturing the worst-case rates of transfer when no hypothesis class is given. 

\section{Setup and Definitions}
We consider a classification setting where the input $X\in \X$, some measurable space, and the output $Y \in \braces{0, 1}$. We let $\Hyp \subset 2^\X$ denote a fixed hypothesis class over $\X$, denote $\V$ the VC dimension \cite{VC:72}, and the goal is to return a classifier $h\in \Hyp$ with low error $R_Q(h) \doteq \expec_{Q} [h(X) \neq Y]$ under some joint distribution $Q$ on $X, Y$. 
The learner has access to two independent labeled samples $S_P\sim P^{n_P}$ and $S_Q \sim Q^{n_Q}$, i.e., drawn from \emph{source} distributions $P$ and target $Q$, of respective sizes $n_P, n_Q$. Our aim is to bound the excess error, under $Q$, of any $\hat h$ learned from both samples, in terms of $n_P, n_Q$, and (suitable) notions of discrepancy between $P$ and $Q$. We will let $P_X, Q_X, P_{Y|X}, Q_{Y|X}$ denote the corresponding marginal and conditional distributions under $P$ and $Q$. 

\begin{definition}
For $D \in \{Q,P\}$, denote $\E_{D}(h) \doteq R_D(h) - \inf_{h'\in \Hyp}R_D(h')$, 
the {\bf excess error} of $h$. %of $h$.% under $D$.
%We let $\E_Q(h) \doteq R_Q(h) - \inf_{h'\in \Hyp}R_Q(h')$ denote the {\bf excess error} of $h$ under $Q$. We similarly define $\E_P(h)$ as the excess error of $h$ under distribution $P$.
\end{definition}

%\subsection{Distributional Conditions}
\paragraph{Distributional Conditions.}
We consider various traditional assumptions in classification and transfer. The first one is a so-called \emph{Bernstein Class Condition} on noise \cite{bartlett:06,MN:06,koltchinskii:06,tsybakov:04,mammen:99}.

\begin{namedHyp}{\nc}
Let $h^*_P \doteq \argmin\limits_{h \in \Hyp}R_P(h)$ and $h^*_Q \doteq \argmin\limits_{h \in \Hyp}R_Q(h)$ exist. 
%There exists $\beta_P, \beta_Q \in [0, 1], c_P, c_Q >0$ such that: 
$\exists \beta_P, \beta_Q \in [0, 1], c_P, c_Q >0$ s.t. 
\begin{align}
    P_X (h\neq h^*_P) \leq c_p \cdot \E_P^{\beta_P}(h), \quad \text{and} \quad  Q_X (h\neq h^*_Q) \leq c_q \cdot \E_Q^{\beta_Q}(h). \label{eq:bern-class}
\end{align}
\end{namedHyp}
For instance, the usual Tsybakov noise condition, say on $P$, corresponds to the case where 
$h^*_P$ is the Bayes classifier, with corresponding regression function $\eta_P(x) \doteq \expec [Y| x]$ satisfying $P_X(\abs{\eta_P(X) -1/2} \leq \tau) \leq C\tau^{\beta_P/(1-\beta_P)}$. Classification is easiest w.r.t. $P$ (or $Q$) when $\beta_P$ (resp. $\beta_Q$) is largest. We will see that this is also the case in Transfer.

{ The next assumption is stronger, but can be viewed as a relaxed version of the usual \emph{Covariate-Shift} assumption which states that $P_{Y|X} = Q_{Y|X}$. 

\begin{namedHyp}{\rcs}
Let $h^*_P, h^*_Q$ as defined above. We have $\E_Q (h^*_P) = \E_Q(h^*_Q) = 0$. We then define $h^* \doteq h^*_P$.
\end{namedHyp}
Note that the above allows $P_{Y|X} \neq Q_{Y|X}$. However, it is not strictly weaker than \emph{Covariate-Shift}, since the latter allows $h^*_P \neq h^*_Q$ provided the Bayes $\notin \Hyp$. The assumption is useful as it serves to isolate the sources of hardness in transfer beyond just shifts in $h^*$. 
%{\color{red} 
We will in fact see later that it is easily removed, but at the additive (necessary) cost of $\E_Q(h^*_P)$.
%%% SH: actually, on second thought, the proof of the Q term in the bound actually does rely 

\section{Transfer-Exponents from $P$ to $Q$.}
We consider various notions of \emph{discrepancy} between $P$ and $Q$, which will be shown to tightly capture the complexity of transfer $P$ to $Q$. 
%%% TODO: omitted mention of localization for this version, but we should add a definition to the long version
%The finest notion is the \emph{localized transfer-exponent} $\rho(\epsilon, \epsilon')$, while the \emph{transfer-exponent} $\rho$ -- always larger -- relaxes this notion and is a more intuitive first step. 

\begin{definition}
\label{def:rho}
We call $\rho>0$ a {\bf transfer-exponent} from $P$ to $Q$, w.r.t. $\Hyp$, if there exists $C_\rho$ such that  
\begin{align}
   \forall h \in \Hyp, \quad C_\rho \cdot \E_P(h) \geq \E_Q^\rho(h). 
\end{align}
\end{definition}
We are interested in the smallest such $\rho$ with small $C_\rho$. We generally would think of $\rho$ as at least $1$, although there are situations -- which we refer to as \emph{super-transfer}, to be discussed, where we have $\rho<1$; in such situations, data from $P$ can yield faster $\E_Q$ rates than data from $Q$.

While the transfer-exponent will be seen to tightly capture the two-samples minimax rates of transfer, and can be \emph{adapted to}, practical learning situations call for \emph{marginal} versions that can capture the rates achievable when one has access to unlabeled $Q$ data. 

\begin{definition}
We call $\gamma >0$ a {\bf marginal transfer-exponent} from $P$ to $Q$ if  $\exists C_\gamma$ such that 
\begin{align} 
\forall h \in \Hyp, \quad C_\gamma \cdot P_X(h \neq h^*_P)\geq  Q_X^\gamma(h\neq h^*_P). \label{eq:gamma}
\end{align}
\end{definition}

The following simple proposition relates $\gamma$ to $\rho$. 
\begin{proposition}[From $\gamma$ to $\rho$]
\label{prop:gammatorho}
Suppose Assumptions \nc\, and \rcs\, hold, and that $P$ has marginal transfer-exponent $(\gamma, C_\gamma)$ w.r.t. $Q$. Then $P$ has transfer-exponent  
$\rho \leq \gamma/\beta_P$, where 
$C_\rho = C_\gamma^{\gamma/\beta_P}$.
\end{proposition}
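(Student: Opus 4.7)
The plan is to build a short chain of inequalities starting from $\E_Q(h)$ and ending at $\E_P(h)$, using \rcs{} to identify the optimal classifier for $Q$ with $h^*_P$, using the marginal transfer-exponent to move from $Q_X$ to $P_X$, and finally using \nc{} (on $P$) to convert marginal mass into excess $P$-risk.

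First I would note that under \rcs, since $\E_Q(h^*_P) = 0$, we may replace $h^*_Q$ by $h^*_P$ in the definition of $\E_Q$, so $\E_Q(h) = R_Q(h) - R_Q(h^*_P)$. Then the elementary pointwise bound on the 0--1 loss,
\[
\ind\{h(x)\neq y\} - \ind\{h^*_P(x)\neq y\} \leq \ind\{h(x)\neq h^*_P(x)\},
\]
integrated under $Q$, gives $\E_Q(h) \leq Q_X(h\neq h^*_P)$.

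Next I would raise this to the $\gamma$-th power and apply the marginal transfer-exponent hypothesis \eqref{eq:gamma}, obtaining
\[
\E_Q^{\gamma}(h) \;\leq\; Q_X^{\gamma}(h \neq h^*_P) \;\leq\; C_\gamma\, P_X(h\neq h^*_P).
\]
Then I would apply \nc{} on $P$, which yields $P_X(h\neq h^*_P)\leq c_P\,\E_P^{\beta_P}(h)$. Taking $1/\beta_P$-th powers on both sides of the resulting inequality gives
\[
\E_Q^{\gamma/\beta_P}(h) \;\leq\; (C_\gamma c_P)^{1/\beta_P}\, \E_P(h),
\]
which is the desired transfer-exponent bound with $\rho = \gamma/\beta_P$ (up to absorbing $c_P$ into $C_\rho$, giving the stated form $C_\rho = C_\gamma^{\gamma/\beta_P}$ when $c_P$ is treated as a universal constant or normalized into $C_\gamma$).

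There is no real obstacle here; the only conceptual step worth highlighting is that \rcs{} is exactly what allows the passage from $\E_Q(h)$ to a disagreement mass with respect to $h^*_P$ (rather than $h^*_Q$), which in turn is the quantity that the marginal transfer-exponent controls. Without \rcs, one would only get $\E_Q(h) \leq Q_X(h\neq h^*_Q) + \E_Q(h^*_P)$, which is the additive slack mentioned in the paper when that assumption is relaxed.
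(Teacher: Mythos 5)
Your proof is correct and follows essentially the same chain of inequalities as the paper's one-line argument: bound $\E_Q(h)$ by $Q_X(h\neq h^*_P)$ (using \rcs), pass to $P_X(h\neq h^*_P)$ via the marginal transfer exponent, then to $\E_P(h)^{\beta_P}$ via \nc, and rearrange exponents. Your closing observation about the constant $c_P$ (which the paper silently absorbs) is a fair point but immaterial to the claim.
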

\begin{proof} 
$\forall h \in \Hyp$, we have 
$\E_Q(h) \leq Q_X(h\neq h^*_P) \leq C_\gamma \cdot P_X(h\neq h^*_P)^{1/\gamma}
\leq C_\gamma\cdot \E_P(h)^{\beta_P/\gamma}$.
\end{proof}

\subsection{Examples and Relation to other notions of discrepancy.}
\label{sec:examples}
{\vskip -2mm}In this section, we consider various examples that highlight interesting aspects of $\rho$ and $\gamma$, and their relations to other notions of distance $P\to Q$ considered in the literature. 
Though our results cover noisy cases, in all these examples we assume no noise for simplicity, 
and therefore $\gamma = \rho$. 

%In particular,

 %and also $d_{\cal A} = d_{\cal Y}$. 

%\end{example}

{\bf Example 1.} (Non-overlapping supports)
This first example emphasizes the fact that, unlike in much of previous analyses of transfer, the exponents $\gamma, \rho$ do 
not require that $Q_X$ and $P_X$ have overlapping support. This is a welcome property shared also by the $d_{\cal A}$ and $\cal Y$ discrepancy. 

\begin{wrapfigure}{R}{0.15\textwidth}
\vspace{-30pt}
\centering
\includegraphics[width=0.13\textwidth]{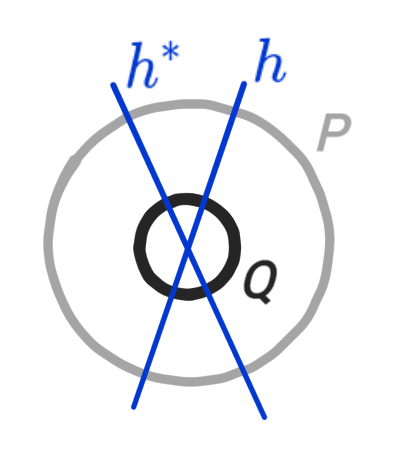}
%\vspace{-1pt}
\end{wrapfigure} 

  In the example shown on the right, $\Hyp$ is the class of homogeneous linear separators, while $P_X$ and $Q_X$ are uniform on the surface of the spheres depicted (e.g., corresponding to different scalings of the data).
We then have that $\gamma = \rho =1$ with $C_\gamma =1$, while notions such as \emph{density-ratios}, KL-divergences, or the recent nonparameteric notion of \cite{kpotufe2018marginal}, are ill-defined or diverge to $\infty$. 

%However, notions such as $d_{\cal A}$ divergence and $\cal Y$ discrepancy (see Example 2) are well behaved in these situations. 

{\bf Example 2.} (Large $d_{\cal A}, d_{\cal Y}$) 
Let $\Hyp$ be the class of one-sided thresholds on the line, and let 
$P_X \doteq \mathcal{U}[0, 2]$ and $Q_X \doteq \mathcal{U}[0, 1]$.
\begin{wrapfigure}{R}{0.3\textwidth}
\vspace{-16pt}
\centering
\includegraphics[width=0.25\textwidth]{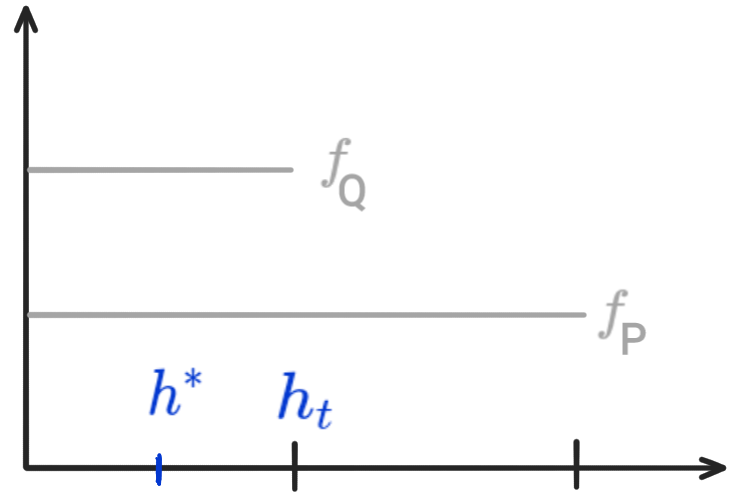}
\vspace{-15pt}
\end{wrapfigure} 
Let $h^*$ be thresholded at $1/2$. We then see that for all $h_t$ thresholded at $t \in [0, 1]$, 
$2P_X(h_t \neq h^*) = Q_X(h_t \neq h^*)$, where for $t>1$, 
$P_X(h_t \neq h^*) = \frac{1}{2}(t-1/2) \geq \frac{1}{2}Q_X(h_t \neq h^*) =  \frac{1}{4}$. 
Thus, the marginal transfer exponent $\gamma = 1$ with $C_\gamma = 2$, so we have fast 
transfer at the same rate $1/n_P$ as if we were sampling from $Q$ {(Theorem \ref{thm:lepski})}.

On the other hand, recall that the $d_{\cal A}$-divergence takes the form 
$d_{\cal A}(P, Q) \doteq \sup_{h \in \Hyp} \abs{P_X(h\neq h^*) - Q_X(h\neq h^*)}$, while the $\cal Y$-discrepancy takes the form $d_{\cal Y}(P, Q) \doteq \sup_{h\in \Hyp} \abs{\E_P(h) - \E_Q(h)}$. The two coincide whenever there is no noise in $Y$.  

Now, take $h_t$ as the threshold at $t = 1/2$, and $d_{\cal A} = d_{\cal Y} =\frac{1}{4}$ which would wrongly imply that transfer is not feasible at a rate faster than $\frac{1}{4}$; we can in fact make this situation worse, i.e., let $d_{\cal A} = d_{\cal Y} \to \frac{1}{2}$ by letting $h^*$ correspond to a threshold close to $0$. A first issue is that these divergences get large in large disagreement regions; this is somewhat mitigated by \emph{localization}, as discussed in Example 4.

{\bf Example 3.} (Minimum $\gamma$, $\rho$, and the inherent {\bf asymmetry} of transfer)
Suppose $\Hyp$ is the class of one-sided thresholds on the line, $h^* = h^*_P = h^*_Q$ is a threshold at $0$. 
\begin{wrapfigure}{R}{0.3\textwidth}
\vspace{-25pt}
\centering
\includegraphics[width=0.25\textwidth]{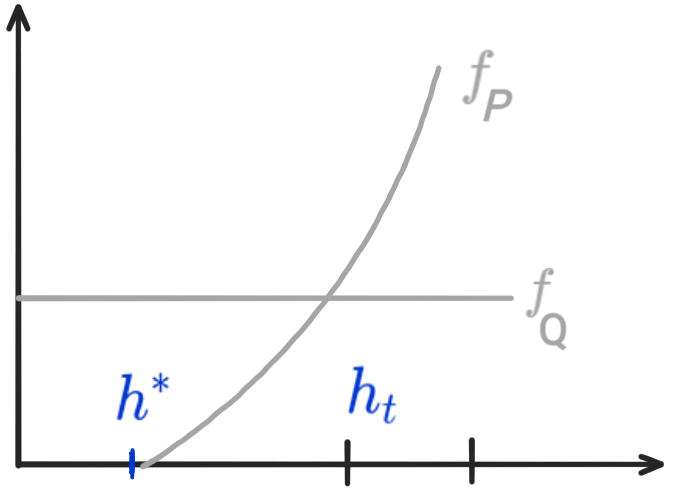}
\vspace{-15pt}
\end{wrapfigure} 
The marginal $Q_X$ has uniform density $f_Q$ (on an interval containing $0$), 
while, for some $\gamma\geq 1$, $P_X$ has 
density $f_P(t) \propto t^{\gamma -1}$ on $t>0$ (and uniform on the rest of the support of $Q$, not shown).  
Consider any $h_t$ at threshold $t>0$, we have $P_X(h_t \neq h^*) = \int_0^t f_P \propto t^\gamma$, 
while $Q_X(h_t \neq h^*) \propto t$. Notice that for any fixed $\epsilon>0$, 
$ \lim\limits_{t>0, \,  t \to 0} \frac{Q_X(h_t \neq h^*)^{\gamma -\epsilon}}{P_X(h_t \neq h^*)} 
= \lim\limits_{t>0, \, t \to 0} C\frac{t^{\gamma -\epsilon}}{t^\gamma} = \infty. $

{\vskip -1mm}We therefore see that $\gamma$ is the smallest possible marginal transfer-exponent (similarly, 
$\rho = \gamma$ is the smallest possible transfer exponent). Interestingly, now consider transferring instead 
from $Q$ to $P$: we would have $\gamma(Q\to P) =1 \leq \gamma \doteq \gamma(P\to Q)$, i.e., it could be easier to transfer from $Q$ to $P$ than from $P$ to $Q$, which is \emph{not captured by symmetric notions of distance}, e.g. metrics ($d_{\cal A}$, $d_{\cal Y}$, MMD, Wassertein, etc ...). 

%Finally, notice that we can let $0<\gamma<1$, by simply reflecting $f_P$ around the $Y$ axis at $t = 0$. 
{\vskip -2mm}Finally note that the above example can be extended to more general 
hypothesis classes as it simply plays on how fast $f_P$ decreases w.r.t. $f_Q$ in regions of space.

{\bf Example 4.} (\emph{Super-transfer} and localization). 
We continue on the above Example 2. Now let $0<\gamma <1$, and let $f_P(t) \propto \abs{t}^{\gamma -1}$ on $[-1, 1]\setminus\{0\}$, 
with $Q_X \doteq \mathcal{U}[-1, 1]$, $h^*$ at $0$. As before, $\gamma$ is a transfer-exponent $P\to Q$, and following from 
{Theorem \ref{thm:lepski}}, we attain transfer rates of $\E_Q \lesssim n_P^{-1/\gamma}$, faster than the rates of $n_Q^{-1}$ attainable with data from $Q$. We call these situations \emph{super-transfer}, i.e., ones where the source data gets us faster to $h^*$;
here $P$ concentrates more mass close to $h^*$, while more generally,  
such situations can also be constructed by letting $P_{Y|X}$ be less noisy than $Q_{Y|X}$ data, for instance corresponding to controlled lab data as source, vs noisy real-world data. 

{\vskip -1mm}Now consider the following $\epsilon$-\emph{localization} fix to the $d_{\cal A}=d_{\cal Y}$ divergences  over $h$'s with small $P$ error (assuming we only observe data from $P$): $d_{\cal Y}^* \doteq \sup_{h\in \Hyp: \, \E_P(h) \leq \epsilon} \abs{\E_P(h) - \E_Q(h)}.$
This is no 
longer worst-case over all $h$'s, yet it is still not a complete fix. To see why, consider that, given $n_P$ data from $P$, the best $P$-excess risk attainable is $n_P^{-1}$ so we might set $\epsilon \propto n_P^{-1}$.  Now the subclass 
$\{h\in \Hyp: \, \E_P(h) \leq \epsilon\}$ corresponds to thresholds $t \in [\pm n_P^{-1/\gamma}]$, since $\E_P(h_t) =P([0, t])\propto \abs{t}^\gamma$. We therefore have $d^*_{\cal Y} \propto \abs{n_P^{-1} - n_P^{-1/\gamma}} \propto n_P^{-1}$, wrongly suggesting a transfer rate 
$\E_Q \lesssim n_P^{-1}$, while the super-transfer rate $n_P^{-1/\gamma}$ is achievable as discussed above. The problem is that, even after localization, $d_{\cal Y}^*$ treats errors under $P$ and $Q$ symmetrically.

%{\bf Example 3.} (Localization, asymmetry and )
%In this example we consider \emph{localizing} divergences to hypotheses which differ in error by at most some $\epsilon$. For simplicity we would let $\Hyp$ just consist of two hypotheses which differ in error by at most $\epsilon$. 
%Let both $P_X$ and $Q_X$ be supported on two points $x_0$, $x_1$ (this can be extended to any two regions of space). Let $0\leq \beta \leq 1$, and $\rho \geq 1$. 
%$Q_X(x_0) = 1-\epsilon^{\beta}$, $Q_X(x_1) = \epsilon^\beta$ while $\eta_Q(x_0) = \frac{1}{2}$ and 
%$\eta_Q(x_1) = \frac{1}{2} + \epsilon^{1-\beta}$. Now let $P_X(x_0) = 1-\epsilon^{\rho}$, $P_X(x_1) = \epsilon^\rho$ while $\eta_P(x_0) = \frac{1}{2}$ and 
%$\eta_P(x_1) = 1$. Let $\Hyp = \{h^* \doteq 1, h \doteq 0\}$. 
%
%It is easy to see that $\E_Q (h) = \epsilon$ while $\E_Q(h) = \epsilon^\rho$ so $\rho$ is a transfer exponent with $C_\rho =1$. 
%Now we have $d_{\cal Y}(P, Q) = \epsilon - \epsilon^\rho$. 

\section{Lower-Bounds}
\label{sec:lowerboounds}

%\subsection{2-Sample Lower-bounds in terms of $\rho$}

\begin{definition}[\nc \, Class]
Let $\dist_\nc(\rho, \beta_P, \beta_Q, C)$ denote the class of pairs of distributions 
$(P, Q)$ with transfer-exponent $\rho$, $C_\rho \leq  C$, satisfying \nc\, with parameters 
$\beta_P, \beta_Q$, and $c_P, c_Q \leq C$. 
\end{definition}

The following lower-bound in terms of $\rho$ is obtained via information theoretic-arguments. In effect, given the VC class $\H$, we construct a set of distribution pairs $\{(P_i, Q_i)\}$ supported on $\V$ datapoints, which all belong to $\dist_\nc(\rho, \beta_P, \beta_Q, C)$. 
All the distributions share the same marginals $P_X, Q_X$. 
Any two pairs are close to each other in the sense that $\Pi_i, \Pi_j$, where $\Pi_i \doteq P_i^{n_P} \times Q_i^{n_Q}$, are close in KL-divergence, 
while, however maintaining pairs $(P_i, Q_i), (P_j, Q_j)$ far in a pseudo-distance induced by $Q_X$. 
All the proofs from this section are in Appendix \ref{sec:lowerboundproof}.

\begin{theorem}[$\rho$ Lower-bound]\label{theo:lowrho}
Suppose the hypothesis class $\Hyp$ has VC dimension $\V \geq 9$. 
Let $\hat h = \hat h (S_P, S_Q)$ denote any (possibly improper) classifier with access 
to two independent labeled samples $S_P\sim P^{n_P}$ and $S_Q \sim Q^{n_Q}$. Fix any 
$ \rho\geq 1$, $0\leq \beta_P, \beta_Q < 1$. Suppose either $n_P$ or $n_Q$ is sufficiently large so that 
$$ \epsilon(n_P, n_Q) \doteq  \min\braces{\paren{\frac{\V}{n_P}}^{1/(2-\beta_P)\rho}, \paren{\frac{\V}{n_Q}}^{1/(2- \beta_Q)}} \leq 1/2.$$

Then, for any $\hat h$, there exists $(P, Q) \in \dist_\nc(\rho, \beta_P, \beta_Q, 1)$, and 
a universal constant $c$ such that,
\begin{align*}
    \Prob_{S_P, S_Q}\paren{\E_Q(\hat h) > c\cdot \epsilon(n_P, n_Q) } \geq \frac{3 - 2 \sqrt{2}}{8}.
\end{align*}
\end{theorem}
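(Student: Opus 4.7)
My approach would be Fano's method on a packing of distribution pairs. First, invoke the VC dimension hypothesis to obtain $d_\H$ points $\{x_0, \ldots, x_{d_\H -1}\} \subset \X$ shattered by $\H$. I would reserve $x_0$ as a ``reference'' point of high mass, labeled deterministically and identically across the family, and index the remaining $d_\H - 1$ ``hard'' points by $j$. A Gilbert--Varshamov argument provides a code $\mathcal{C} \subset \{\pm 1\}^{d_\H - 1}$ with $|\mathcal{C}| \geq 2^{c d_\H}$ and pairwise Hamming distance $\geq (d_\H - 1)/8$. For each $\sigma \in \mathcal{C}$, define $(P_\sigma, Q_\sigma)$ with common marginals $P_X, Q_X$ (mass $w_P$, resp.\ $w_Q$, on each hard point and the remainder at $x_0$) and with conditionals $\prob_{P_\sigma}(Y = 1 \mid x_j) = \tfrac{1}{2}(1 + \sigma_j \tau_P)$, $\prob_{Q_\sigma}(Y = 1 \mid x_j) = \tfrac{1}{2}(1 + \sigma_j \tau_Q)$, so that $h^*_P(\sigma) = h^*_Q(\sigma)$ is the hypothesis matching $\sigma$ on the hard points.

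The heart of the argument is the calibration of $(w_P, \tau_P, w_Q, \tau_Q)$. Writing $\delta_D := w_D \tau_D$ for $D \in \{P,Q\}$, any $h \in \H$ flipping $k$ hard bits relative to $h^*_D$ satisfies $D_X(h \neq h^*_D) = k w_D$ and $\E_D(h) = k \delta_D$. The Bernstein inequality is tightest at the extremal $k = d_\H - 1$, which pins $w_D \asymp \delta_D^{\beta_D}/d_\H^{1-\beta_D}$ and hence $\tau_D \asymp (d_\H \delta_D)^{1-\beta_D}$. The transfer-exponent constraint $C_\rho \E_P(h) \geq \E_Q^\rho(h)$ must hold for every such $k$; for $\rho \geq 1$ the worst case is again $k = d_\H - 1$, forcing the coupling $d_\H \delta_P \asymp (d_\H \delta_Q)^\rho$ between the two per-bit scales.

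The information-theoretic step is then routine. By tensorization and the standard bound $\KLDiv{\text{Bern}(\tfrac{1+\tau_D}{2})}{\text{Bern}(\tfrac{1-\tau_D}{2})} \asymp \tau_D^2$, a direct computation gives
\[
  \KLDiv{P_\sigma^{n_P} \times Q_\sigma^{n_Q}}{P_{\sigma'}^{n_P} \times Q_{\sigma'}^{n_Q}} \;\lesssim\; n_P (d_\H \delta_P)^{2-\beta_P} + n_Q (d_\H \delta_Q)^{2-\beta_Q}.
\]
Fano's inequality on the packing requires this be $\lesssim \log|\mathcal{C}| \asymp d_\H$, forcing both summands to be $\lesssim d_\H$. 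Substituting the transfer coupling into the $P$-side yields $d_\H \delta_Q \lesssim (d_\H/n_P)^{1/\rho(2-\beta_P)}$, and the $Q$-side directly gives $d_\H \delta_Q \lesssim (d_\H/n_Q)^{1/(2-\beta_Q)}$. The largest admissible $d_\H \delta_Q$ is therefore exactly $\epsilon(n_P, n_Q)$; since any two codewords produce hypotheses whose $Q$-excess risks differ by $\gtrsim d_\H \delta_Q$, a Bretagnolle--Huber conversion of Fano's testing bound into an estimation bound (source of the explicit constant $(3-2\sqrt{2})/8$) delivers the claim.

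The main obstacle I expect is verifying the transfer and Bernstein conditions \emph{globally} over $\H$ at the same time. Because $\H$ shatters the $d_\H$ points, every subset of hard-bit deviations corresponds to a legitimate $h \in \H$; checking $C_\rho \E_P(h) \geq \E_Q^\rho(h)$ and both Bernstein inequalities uniformly, and finding that they are all tightest at $k = d_\H - 1$, is precisely what pins the exponent $\rho(2-\beta_P)$ and the $d_\H/n_P$ (rather than $1/n_P$) scale in the final rate. A secondary concern is ensuring $d_\H \delta_P, d_\H \delta_Q \leq 1$ so that $\tau_D \in [0,1]$ and the total hard-point mass stays below $1$; this is exactly the non-degeneracy hypothesis $\epsilon(n_P, n_Q) \leq 1/2$ in the statement.
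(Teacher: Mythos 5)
Your plan is essentially the paper's own proof: a Fano/Tsybakov-2.5 reduction over a Gilbert--Varshamov packing of $\{\pm 1\}^{\V-1}$ on shattered points with a high-mass anchor $x_0$, the same product-form conditionals with per-point bias $\tau_D$, and the same calibration (your $d_\H\delta_Q \asymp \epsilon$, $d_\H\delta_P \asymp \epsilon^\rho$, with $w_D, \tau_D$ pinned by the Bernstein and transfer constraints being tightest at maximal Hamming distance) matching the paper's explicit choices $Q_X(x_i) = \epsilon^{\beta_Q}/\VV$, $\tau_Q = \epsilon^{1-\beta_Q}$, $P_X(x_i) = \epsilon^{\rho\beta_P}/\VV$, $\tau_P = \epsilon^{\rho(1-\beta_P)}$. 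The KL tensorization bound and the final constant come out identically; the only thing to be careful about, which you flag, is the admissibility constraint $\epsilon \leq 1/2$ keeping $\tau_D$ bounded away from $1$ and the $x_0$ mass nonnegative, and the introduction of a small numerical constant $c_1$ multiplying $\epsilon(n_P,n_Q)$ to satisfy Fano's $\alpha < 1/8$ budget.
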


% {\color{red} {\bf Remark} For $\beta_Q$ or $\beta_P$ = $1$, the KL divergences are not defined ..., so should be handled separately. For $\beta_Q$ or $\beta_P$ = $0$, careful as $x_0$ has $0$ mass, but should be fine.} 

%\subsection{2-sample Lower-bounds in terms of $\gamma$}
As per Proposition \ref{prop:gammatorho} we can translate any upper-bound in terms of $\rho$ to an upper-bound in terms of $\gamma$ since $\rho \leq \gamma/\beta_P$. We investigate whether such upper-bounds in terms of $\gamma$ are tight, i.e., given a class $\dist_\nc(\rho, \beta_P, \beta_Q, C)$, are there distributions with $\rho = \gamma/\beta_P$ where the rate is realized.  

The proof of the next result is similar to that of Theorem \ref{theo:lowrho}, however with the added difficulty that we need the construction to yield two forms of rates $\epsilon_1(n_P, n_Q), \epsilon_2(n_P, n_Q)$ over the data support 
(again $\V$ points).
Combining these two rates matches the desired upper-bound. In effect, we follow the intuition that, to have $\rho = \gamma/\beta_P$ achieved on some subset $\X_1\subset \X$, we need $\beta_Q$ to behave as $1$ locally on $\X_1$, while matching the rate requires larger $\beta_Q$ on the rest of the suppport (on $\X \setminus \X_1$).

\begin{theorem}[$\gamma$ Lower-bound]\label{theo:lowgamma}
Suppose the hypothesis class $\Hyp$ has VC dimension $\V,\, \lfloor \V/2\rfloor \geq 9$. 
Let $\hat h = \hat h (S_P, S_Q)$ denote any (possibly improper) classifier with access 
to two independent labeled samples $S_P\sim P^{n_P}$ and $S_Q \sim Q^{n_Q}$. Fix any 
$0<\beta_P, \beta_Q < 1$, $ \rho\geq \max\braces{1/{\beta_P}, 1/\beta_Q}$. Suppose either $n_P$ or $n_Q$ is sufficiently large so that 
\begin{align*}
\epsilon_1(n_P, n_Q) &\doteq  \min\braces{\paren{\frac{\V}{n_P}}^{1/(2-\beta_P)\rho\cdot \beta_Q}, \paren{\frac{\V}{n_Q}}^{1/(2- \beta_Q)}} \leq 1/2, \text{ and } \\
\epsilon_2(n_P, n_Q) &\doteq  \min\braces{\paren{\frac{\V}{n_P}}^{1/(2-\beta_P)\rho}, \paren{\frac{\V}{n_Q}}} \leq 1/2.
\end{align*}

Then, for any $\hat h$, there exists $(P, Q) \in \dist_\nc(\rho, \beta_P, \beta_Q, 2)$, with 
{\bf marginal-transfer-exponent} $\gamma = \rho\cdot \beta_P \geq 1$, with $C_\gamma \leq 2$, 
and a universal constant $c$ such that,
\begin{align*}
  \Expectation_{S_P, S_Q}  \E_Q(\hat h) \geq 
  c\cdot \max \braces{\epsilon_1(n_P, n_Q), \epsilon_2(n_p, n_Q)}.
\end{align*}

% \begin{align*}
%     \Prob_{S_P, S_Q}\paren{\E_Q(\hat h) > c\cdot \epsilon(n_P, n_Q) } \geq \frac{3 - 2 \sqrt{2}}{8}.
% \end{align*}
\end{theorem}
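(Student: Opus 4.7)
The plan is to closely parallel the proof of Theorem~\ref{theo:lowrho}, but using a two-regime construction on a partition of the shattered set. First, I fix $d_\Hyp$ shattered points and split them into halves $\X_1$ and $\X_2$, each of size $d' \doteq \lfloor d_\Hyp/2 \rfloor$. On $\X_1$ I set the $Q$-conditional noise to a fixed constant $\tau_Q^{(1)} = \Theta(1)$: as noted in the paragraph preceding the theorem, this is precisely what forces the marginal-transfer inequality to be tight with exponent exactly $\gamma = \rho\beta_P$, while the $Q$-side Bernstein condition with $\beta_Q < 1$ is then trivially satisfied on that half. On $\X_2$ I instead set $\tau_Q^{(2)}$ so that Bernstein$(\beta_Q)$ on $Q$ is tight. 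In both halves the $P$-masses and $P$-noise levels are determined by simultaneously saturating $P$-side Bernstein$(\beta_P)$ and a transfer-exponent equality, using exponent $\rho$ on $\X_1$ but exponent $\rho\beta_Q$ on $\X_2$ (which is $\le \rho$ under the theorem's hypotheses, and is what yields the $\epsilon_1$ rate). Concretely this produces $d'p_1 \asymp (d'q_1)^{\rho\beta_P}$ with $\tau_P^{(1)} \asymp (d'q_1)^{\rho(1-\beta_P)}$ on $\X_1$, and $\tau_P^{(2)} \asymp (\tau_Q^{(2)})^{\rho\beta_Q(1-\beta_P)/(1-\beta_Q)}$ on $\X_2$.

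Next I would verify that the combined pair $(P, Q)$ lies in $\dist_\nc(\rho, \beta_P, \beta_Q, 2)$ and has marginal-transfer-exponent $\gamma = \rho\beta_P$ with $C_\gamma \le 2$. On hypotheses supported in a single half the required inequalities are tight by construction. For ``mixed'' hypotheses disagreeing with $h^*$ on both halves, I would bound the mixed quantities by sums of single-half contributions and invoke subadditivity together with elementary inequalities for $x \mapsto x^\beta$; the conditions $\rho\beta_P \ge 1$ and $\rho\beta_Q \ge 1$ (encoded in $\rho \ge \max\{1/\beta_P, 1/\beta_Q\}$) are what ensure the local transfer exponents are well defined and that the global transfer exponent $\rho$ is respected (tight on $\X_1$, strictly slack on $\X_2$), while global $\gamma = \rho\beta_P$ is tight on $\X_1$ and slack on $\X_2$.

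The lower bound itself then follows from the same Fano/Assouad machinery used in Theorem~\ref{theo:lowrho}, applied separately to two sub-families. For each half I take a Varshamov--Gilbert packing $\Sigma \subset \{0,1\}^{d'}$ of size $\ge 2^{d'/8}$ and pairwise Hamming distance $\ge d'/8$, producing hypotheses $(P_\sigma, Q_\sigma)$ that vary only on that half. Decomposing the sample KL-divergence coordinate-wise and requiring the per-informative-coordinate contribution to be $\lesssim 1$ gives the constraints $n_P p_j (\tau_P^{(j)})^2 \lesssim 1$ and $n_Q q_j (\tau_Q^{(j)})^2 \lesssim 1$. Combined with the parameter relations above, Fano then yields $\Expectation \, \E_Q(\hat h) \gtrsim d' q_j \tau_Q^{(j)}$, which on the $\X_1$ sub-family evaluates to $\epsilon_2(n_P,n_Q)$ and on the $\X_2$ sub-family evaluates to $\epsilon_1(n_P,n_Q)$; the lower bound therefore holds with the maximum of the two, as claimed.

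The hard part will be the global verification in the second step when hypotheses straddle the two regimes. In particular, the $Q$-side Bernstein inequality is tight only on $\X_2$-hypotheses, so I must check that appending $\X_1$-disagreement does not violate it; this relies on $\tau_Q^{(1)}$ being order-$1$, so that each $\X_1$-bit of disagreement contributes comparably to $Q_X(h\neq h^*)$ and to $\E_Q(h)$, making the $\beta_Q < 1$ Bernstein inequality robust under such additions. An analogous delicate check is needed for the transfer-exponent and marginal-transfer inequalities to keep $C_\rho, C_\gamma \le 2$; this reduces to choosing $q_1, q_2$ small enough within the regime imposed by the KL constraints that the tight inequalities hold with unit constants and the slack inequalities absorb only bounded multiplicative factors.
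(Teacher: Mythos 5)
Your proposal matches the paper's construction essentially exactly (the paper's $I_1, I_2$ correspond to your $\X_2, \X_1$): a two-regime family on a split shattered set, where one half carries constant $\Theta(1)$ noise under $Q$ (so $\beta_Q$ is slack, $\rho$ and $\gamma$ are tight, yielding rate $\epsilon_2$) and the other half carries $\beta_Q$-tight noise with slack local transfer exponent $\rho\beta_Q$ (yielding rate $\epsilon_1$), with the global \nc/transfer/marginal-transfer conditions verified for mixed hypotheses via a Jensen-type inequality for $x \mapsto x^\alpha$, and the bound obtained by running Fano separately on the two sub-families. This is precisely the route the paper takes.
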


\begin{remark}[Tightness with upper-bound]
Write $\epsilon_1(n_P, n_Q) = \min \{\epsilon_1(n_P), \epsilon_1(n_Q)\}$, and similarly, $\epsilon_2(n_P, n_Q) = \min \{\epsilon_2(n_P), \epsilon_2(n_Q)\}$. Define $\epsilon_{L} \doteq \max \{\epsilon_1(n_P, n_Q), \epsilon_2(n_P, n_Q)\}$ as in the above lower-bound of Theorem \ref{theo:lowgamma}. Next, define 
$\epsilon_{H} \doteq \min\{\epsilon_2(n_P), \epsilon_1(n_Q)\}$. It turns out that the best upper-bound we can show (as a function of $\gamma$) is in terms of $\epsilon_H$ so defined. It is therefore natural to ask whether or when $\epsilon_H$ and $\epsilon_L$ are of the same order.

Clearly, we have $\epsilon_1(n_P) \leq \epsilon_2(n_P)$ and $\epsilon_1(n_Q) \geq \epsilon_2(n_Q)$ so that $\epsilon_L \leq \epsilon_H$ (as to be expected). 

Now, if $\beta_Q = 1$, we have 
$\epsilon_1(n_P) = \epsilon_2(n_P)$ and $\epsilon_1(n_Q) = \epsilon_2(n_Q)$, so that $\epsilon_L = \epsilon_H$. 
More generally, from the above inequalities, we see that $\epsilon_L = \epsilon_H$ in the two regimes where either 
$\epsilon_1(n_Q) \leq \epsilon_1(n_P)$ (in which case $\epsilon_L = \epsilon_H = \epsilon_1(n_Q)$), or $\epsilon_2(n_P) \leq \epsilon_2(n_Q)$ (in which case $\epsilon_L = \epsilon_H = \epsilon_2(n_P)$). 
\end{remark}

\section{Upper-Bounds}
\label{sec:lepski}

The following lemma is due to \cite{VC:74}.

\begin{lemma}
\label{lem:vc-bernstein}
Let $A_{n} = \frac{\V}{n} \log\paren{ \frac{\max\{n,\V\}}{\V} } + \frac{1}{n} \log\paren{\frac{1}{\delta} }$.
With probability at least $1-\frac{\delta}{3}$, $\forall h,h^{\prime} \in \Hyp$, 
\begin{equation}
\label{eqn:vc-bernstein}
R(h) - R(h^{\prime}) \leq \hat{R}(h) - \hat{R}(h^{\prime}) + c \sqrt{ \min\{ P(h \neq h^{\prime}), \hat{P}( h \neq h^{\prime} ) \} A_{n} } + c A_{n},
\end{equation}
and
\begin{equation}
\label{eqn:vc-empirical-distance-bound}
\frac{1}{2} P( h \neq h^{\prime} ) - c A_{n} \leq \hat{P}( h \neq h^{\prime} ) \leq 2 P( h \neq h^{\prime} ) + c A_{n},
\end{equation}
for a universal numerical constant $c \in (0,\infty)$,
where $\hat{R}$ denotes empirical risk on $n$ iid samples.
\end{lemma}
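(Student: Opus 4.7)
The plan is to derive both inequalities from Bernstein's inequality combined with a Vapnik--Chervonenkis variance-sensitive uniform convergence bound. For a fixed pair $h, h' \in \Hyp$, the two pointwise ingredients are: (i) the random variables $Z_i \doteq \ind[h(X_i)\neq Y_i] - \ind[h'(X_i)\neq Y_i]$ lie in $[-1,1]$ with $\expec[Z_i^2] \leq P(h \neq h')$, so Bernstein's inequality controls $(\hat R(h) - \hat R(h')) - (R(h) - R(h'))$ by $c\sqrt{P(h\neq h')\log(1/\delta)/n} + c\log(1/\delta)/n$; and (ii) the Bernoulli indicators $\ind[h(X_i)\neq h'(X_i)]$ have second moment $P(h\neq h')$, so Bernstein analogously controls $|\hat P(h\neq h') - P(h\neq h')|$ by the same expression.

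To lift both pointwise bounds to uniform statements over $\Hyp \times \Hyp$, the plan is to invoke the classical double-sample symmetrization trick. After symmetrization, the supremum is controlled by a quantity depending only on the behavior of pairs on a combined $2n$-point sample, and by Sauer's lemma there are at most $\Sh{\Hyp}{2n}^2 \leq (2en/\V)^{2\V}$ such behaviors. A union bound of Bernstein over this finite effective class replaces the $\log(1/\delta)$ term by $\V\log(n/\V) + \log(1/\delta)$, precisely $n A_n$, while leaving the variance proxy $P(h\neq h')$ untouched.

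Equation (\ref{eqn:vc-empirical-distance-bound}) then follows from the uniform version of (ii) combined with the elementary AM-GM step $c\sqrt{P(h\neq h') A_n} \leq \tfrac{1}{2}P(h\neq h') + \tfrac{1}{2}c^2 A_n$; rearranging yields the two-sided multiplicative bound. For (\ref{eqn:vc-bernstein}), the uniform version of (i) already provides the square-root term with $P(h\neq h')$; the $\min\{P,\hat P\}$ form arises by substituting $P(h\neq h') \leq 2\hat P(h\neq h') + c A_n$ from (\ref{eqn:vc-empirical-distance-bound}) into the square root (and vice versa), which up to universal constants produces both variants simultaneously, so taking the smaller of the two is valid.

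The main technical obstacle is step two --- the \emph{variance-sensitive} uniform bound. A naive net-plus-union-bound would lose the $\sqrt{P(h\neq h')}$ localization and collapse to a worst-case Hoeffding rate of order $\sqrt{A_n}$. The Vapnik--Chervonenkis remedy is a dyadic peeling argument: stratify pairs by the level of $\hat P(h\neq h')$ on the $2n$-sample into $O(\log n)$ bands, and within each band apply the finite-class Bernstein union bound with the local variance proxy; the logarithmic overhead from peeling is absorbed into the already logarithmic $A_n$ term, so the final rate is unaffected.
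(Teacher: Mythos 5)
The paper does not prove this lemma; it simply cites Vapnik and Chervonenkis (1974). So there is no in-paper proof to compare against, and your proposal must be judged on its own terms. Your sketch reconstructs the standard route and the key ingredients are all there: the pointwise Bernstein estimates with variance proxy $P(h\neq h')$ for both the risk difference and the disagreement indicator, the reduction to a finite effective class via the double-sample trick together with Sauer's lemma, and the dyadic peeling to keep the variance term from washing out in the union bound.

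One spot is glided over and deserves to be made explicit. After symmetrization you condition on the $2n$-point double sample, so the variance proxy for the (conditionally centered, sign-flipped) process is the \emph{empirical} second moment on those $2n$ points, not the population quantity $P(h\neq h')$. Your peeling is therefore necessarily stratified by the empirical disagreement on the double sample, and translating the resulting bound back into terms of $P(h\neq h')$ requires that you already have a version of \eqref{eqn:vc-empirical-distance-bound} in hand. This introduces a mild circularity: you must prove \eqref{eqn:vc-empirical-distance-bound} first (or simultaneously by a self-bounding argument), then feed it into the symmetrized bound for the risk differences to produce the $\min\{P,\hat P\}$ form in \eqref{eqn:vc-bernstein}. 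The cleanest way to break the circle is Vapnik's relative-deviation inequality, which normalizes the deviation by $\sqrt{P(h\neq h')}$ and yields \eqref{eqn:vc-empirical-distance-bound} directly from symmetrization plus Sauer without any peeling; \eqref{eqn:vc-bernstein} then follows by peeling (or another relative-deviation bound) applied to the differences $\ind[h(X)\neq Y]-\ind[h'(X)\neq Y]$, whose squares are dominated by $\ind[h\neq h']$. With the order of steps fixed this way, your argument is sound and matches the classical proof in spirit.
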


Now consider the following algorithm.
Let $S_{P}$ be a sequence of $n_{P}$ samples from $P$ and $S_{Q}$ a sequence of $n_{Q}$ samples from $Q$.
Also let $\hat{h}_{S_{P}} = \argmin_{h \in \Hyp} \hat{R}_{S_{P}}(h)$ and $\hat{h}_{S_{Q}} = \argmin_{h \in \Hyp} \hat{R}_{S_{Q}}(h)$.
Choose $\hat{h}$ as the solution to the following optimization problem.

\begin{bigboxit}
\AlgA :
\begin{align}
& \text{Minimize } & & \hat{R}_{S_{P}}(h) \nonumber
\\ & \text{subject to } & & \hat{R}_{S_{Q}}(h) - \hat{R}_{S_{Q}}(\hat{h}_{S_{Q}}) \leq c \sqrt{ \hat{P}_{S_{Q}}( h \neq \hat{h}_{S_{Q}} ) A_{n_{Q}} } + c A_{n_{Q}} \label{eq:lepski}
\\ & & & h \in \Hyp \nonumber.
\end{align}
\end{bigboxit}

The intuition is that, effectively, the constraint guarantees we maintain a 
near-optimal guarantee on $\E_{Q}(\hat{h})$ in terms of $n_{Q}$ and the \nc\, parameters for $Q$,
while (as we show) still allowing the algorithm to select an $h$ 
with a near-minimal value of $\hat{R}_{S_{P}}(h)$.
The latter guarantee plugs into the transfer condition to obtain a term converging 
in $n_{P}$, while the former provides a term converging in $n_{Q}$, 
and altogether the procedure achieves a rate specified by the \emph{min} of these two guarantees
(which is in fact nearly minimax \emph{optimal}, since it matches the lower bound up to logarithmic factors).

Formally, we have the following result for this learning rule;  
its proof is below. %in Appendix~\ref{app:lepski}.

\begin{theorem}[Minimax Upper-Bounds]
\label{thm:lepski}
Assume \nc. Let $\hat{h}$ be the solution from \AlgA.
%Suppose $\beta_{Q} > 0$. %%% SH: This is new.  It's only needed because we swapped the roles of P and Q in the algorithm.
For a constant $C$ depending on $\rho,C_{\rho},\beta_{P},c_{\beta_{P}},\beta_{Q},c_{\beta_{Q}}$, 
with probability at least $1-\delta$, 
\begin{equation*}
%\E_{Q}(\hat{h}) \leq C \min\!\left\{ 
%\left( \frac{\V}{n_{P}} \log\!\left( \frac{n_{P}}{\V} \right) + \frac{1}{n_{P}} \log\!\left( \frac{1}{\delta} \right) \right)^{\frac{1}{(2-\beta_{P}) \rho}}, 
%\left( \frac{\V}{n_{Q}} \log\!\left( \frac{n_{Q}}{\V} \right) + \frac{1}{n_{Q}} \log\!\left( \frac{1}{\delta} \right) \right)^{\frac{1}{2-\beta_{Q}}} 
%\right\}.
\E_{Q}(\hat{h}) \leq C \min\!\left\{ A_{n_{P}}^{\frac{1}{(2-\beta_{P})\rho}}, 
A_{n_{Q}}^{\frac{1}{2-\beta_{Q}}} \right\}
= \tilde{O}\!\left( \min\!\left\{ 
\left(\frac{\V}{n_{P}}\right)^{\frac{1}{(2-\beta_{P})\rho}},
\left(\frac{\V}{n_{Q}}\right)^{\frac{1}{2-\beta_{Q}}}
\right\}\right).
\end{equation*}
\end{theorem}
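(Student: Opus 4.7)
The plan is to apply Lemma~\ref{lem:vc-bernstein} separately to $S_P$ and to $S_Q$, each giving a $(1-\delta/3)$-event on which empirical and true quantities are comparable, and then obtain two independent upper bounds on $\E_Q(\hat h)$ --- one in terms of $A_{n_P}$, routed through the transfer-exponent, and one in terms of $A_{n_Q}$, obtained directly from the feasibility constraint defining \AlgA. The final bound is then the minimum of the two, after a union bound.

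A useful preliminary observation is that the transfer-exponent inequality, applied at $h = h^*_P$, forces $\E_Q(h^*_P) = 0$; hence we may set $h^* \doteq h^*_P \in \argmin_{h\in\Hyp} R_Q(h)$, effectively recovering \rcs\ for free whenever $\rho$ is finite. This lets me verify that $h^*$ is feasible for \AlgA: applying \eqref{eqn:vc-bernstein} to $S_Q$ with the roles of $h,h'$ swapped, and using $R_Q(h^*) - R_Q(\hat h_{S_Q}) \leq 0$, yields
\[
\hat R_{S_Q}(h^*) - \hat R_{S_Q}(\hat h_{S_Q}) \leq c\sqrt{\hat P_{S_Q}(h^* \neq \hat h_{S_Q}) A_{n_Q}} + c A_{n_Q}.
\]
By optimality of $\hat h$ for the $S_P$-objective among feasible points, $\hat R_{S_P}(\hat h) \leq \hat R_{S_P}(h^*)$. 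Plugging this into \eqref{eqn:vc-bernstein} applied to $S_P$ with $h' = h^*$, and using \nc\ in the form $P_X(\hat h \neq h^*) \leq c_P \E_P(\hat h)^{\beta_P}$, produces the self-bounding inequality $\E_P(\hat h) \leq c\sqrt{c_P \E_P(\hat h)^{\beta_P} A_{n_P}} + c A_{n_P}$, which resolves to $\E_P(\hat h) = O(A_{n_P}^{1/(2-\beta_P)})$. Invoking Definition~\ref{def:rho} then converts this into $\E_Q(\hat h) \leq (C_\rho \E_P(\hat h))^{1/\rho} = \tilde O(A_{n_P}^{1/((2-\beta_P)\rho)})$.

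For the $n_Q$ branch I use that $\hat h$ itself satisfies the constraint of \AlgA. Applying \eqref{eqn:vc-bernstein} on $S_Q$ to pass from empirical to population risk gives $R_Q(\hat h) - R_Q(\hat h_{S_Q}) \leq 2c\sqrt{\hat P_{S_Q}(\hat h \neq \hat h_{S_Q}) A_{n_Q}} + 2c A_{n_Q}$. A standard ERM-under-\nc\ computation (the same self-bounding step as above with $P$ replaced by $Q$ and $\hat h$ replaced by $\hat h_{S_Q}$) supplies $\E_Q(\hat h_{S_Q}) = O(A_{n_Q}^{1/(2-\beta_Q)})$. Using \eqref{eqn:vc-empirical-distance-bound} to replace $\hat P_{S_Q}$ by $Q_X$, then the triangle inequality $Q_X(\hat h \neq \hat h_{S_Q}) \leq Q_X(\hat h \neq h^*) + Q_X(\hat h_{S_Q} \neq h^*)$ together with \nc\ on each term, a final self-bounding argument in $\E_Q(\hat h)$ delivers $\E_Q(\hat h) = O(A_{n_Q}^{1/(2-\beta_Q)})$.

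The main obstacle is the delicate but routine bookkeeping in the two self-bounding inequalities: constants must be tracked carefully enough that both branches of the final $\min$ hold simultaneously on the same $(1-\delta)$-event obtained by union-bounding the $S_P$- and $S_Q$-events of Lemma~\ref{lem:vc-bernstein}. A conceptually noteworthy point is that the transfer-exponent is what \emph{connects} the two halves: it is what guarantees that the feasibility comparison in \AlgA\ is against an $h^*$ that is simultaneously optimal under $P$ and $Q$, and it is what converts the $\E_P$-bound of the first branch into the desired $\E_Q$-bound.
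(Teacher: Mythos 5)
Your proposal is correct and follows essentially the same route as the paper: establish feasibility of $h^*_P$ in the constraint, use the constrained-ERM argument on $S_P$ plus the transfer exponent for the $n_P$ branch, use the constraint itself plus a triangle inequality and self-bounding on $S_Q$ for the $n_Q$ branch, and union-bound the two events. Your preliminary observation that finiteness of $\rho$ forces $\E_Q(h^*_P)=0$ (so that \rcs\ holds automatically) is exactly the remark the paper makes at the analogous point, so no gap remains.
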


Note that, by the lower bound of Theorem \ref{theo:lowrho}, 
this bound is optimal up to log factors.

\begin{remark}[Effective Source Sample Size] From the above, we might view (ignoring $\V$) $\tilde n_P \doteq n_P^{(2-\beta_Q)/(2-\beta_P)\rho}$ as the effective sample size contributed by $P$. In fact, the above minimax rate is of order $(\tilde n_P + n_Q)^{-1/(2-\beta_Q)}$, which yields added intuition into the combined effect of both samples. We have that, the effective source sample size $\tilde n_P$ is smallest for large $\rho$, but also depends on $(2-\beta_Q)/(2-\beta_P)$, i.e., on whether $P$ is noisier than $Q$.

\end{remark}

\begin{remark}[Rate in terms of $\gamma$] 
Note that, by Proposition~\ref{prop:gammatorho}, 
this also immediately implies a bound 
under the marginal transfer condition and RCS, 
simply taking $\rho \leq \gamma/\beta_{P}$.
Furthermore, by the lower bound of Theorem \ref{theo:lowgamma}, %established above, 
the resulting bound in terms of $\gamma$ is tight in certain regimes up to 
log factors.  
%%% SH: I no longer believe the following, because the Q term relies on h^{*} = h^{*}_{Q}.
%%%     However, it may still be true for the original roles of P and Q in the algorithm.
%We additionally note that, 
%under the marginal transfer condition, 
%even if the RCS assumption is not satisfied, 
%we can still recover a bound of the form 
%$\E_{Q}(\hat{h}) \leq \min\!\left\{ \E_{Q}(h_{P}^{*}) + C A_{n_{P}}^{\frac{\beta_{P}}{(2-\beta_{P})\gamma}}, 
%C A_{n_{Q}}^{\frac{1}{2-\beta_{Q}}} \right\}$.
\end{remark}

%We now present the proof of Theorem~\ref{thm:lepski}.

\begin{proof}[Proof of Theorem~\ref{thm:lepski}]
In all the lines below, we let $C$ serve as a generic constant (possibly depending on $\rho,C_{\rho},\beta_{P},c_{\beta_{P}},\beta_{Q},c_{\beta_{Q}}$) which may be different in different appearances.
Consider the event of probability at least $1-\delta/3$ from Lemma~\ref{lem:vc-bernstein} for the $S_{Q}$ samples.
In particular, on this event, if $\E_Q(h^*_P)=0$, it holds that 
\begin{equation*}
\hat{R}_{S_{Q}}(h^{*}_P) - \hat{R}_{S_{Q}}(\hat{h}_{S_{Q}}) 
\leq c \sqrt{\hat{P}_{S_{Q}}( h^{*}_P \neq \hat{h}_{S_{Q}} ) A_{n_{Q}}} + c A_{n_{Q}}.
\end{equation*}
%where we have used the fact that $\E_{Q}(h_{P}^{*}) \leq c_{\rho} \E_{P}(h_{P}^{*})^{1/\rho} = 0$.
This means, under the \rcs\ condition, 
$h^{*}_P$ satisfies the constraint in the above optimization problem defining $\hat{h}$.
Also, %following known arguments from \cite{MN:06,koltchinskii:06}, % citing Koltchinskii for the high-probability form of the bound, and for noting that it also holds for the Bernstein class condition
on this same event from Lemma~\ref{lem:vc-bernstein} we have 
\begin{equation*}
\E_{Q}(\hat{h}_{S_{Q}}) 
\leq c \sqrt{ Q( \hat{h}_{S_{Q}} \neq h^{*}_Q ) A_{n_{Q}}} + c A_{n_{Q}},
\end{equation*}
so that \nc\ implies 
\begin{equation*}
\E_{Q}(\hat{h}_{S_{Q}}) 
\leq C \sqrt{ \E_{Q}(\hat{h}_{S_{Q}})^{\beta_{Q}} A_{n_{Q}}} + c A_{n_{Q}},
\end{equation*}
which implies the well-known fact from \cite{MN:06,koltchinskii:06} that
\begin{equation}
\label{eqn:EP-hatP-bound}
\E_{Q}(\hat{h}_{S_{Q}}) \leq C \left( \frac{\V}{n_{Q}} \log\!\left( \frac{n_{Q}}{\V} \right) + \frac{1}{n_{Q}} \log\!\left( \frac{1}{\delta} \right) \right)^{\frac{1}{2-\beta_{Q}}}.
\end{equation}
%
%Also, it is well known from \cite{MN:06,koltchinskii:06} % citing Koltchinskii for the high-probability form of the bound, and for noting that it also holds for the Bernstein class condition
%that with probability at least $1-\delta/3$, 
%\begin{equation}
%\label{eqn:EP-hatP-bound}
%\E_{P}(\hat{h}_{S_{P}}) \leq C \left( \frac{\V}{n_{P}} \log\!\left( \frac{n_{P}}{\V} \right) + \frac{1}{n_{P}} \log\!\left( \frac{1}{\delta} \right) \right)^{\frac{1}{2-\beta_{P}}}.
%\end{equation}
%Indeed, one can show this occurs on the same event 
%stated in Lemma~\ref{lem:vc-bernstein}.
%
Furthermore, following the analogous argument for $S_{P}$, it follows that %from \cite{MN:06} that 
for any set $\Gyp \subseteq \Hyp$ with $h^{*}_{P} \in \Gyp$, with probability at least $1-\delta/3$, the ERM $\hat{h}_{S_{P}}^{\prime} = \argmin_{h \in \Gyp} \hat{R}_{S_{P}}(h)$ satisfies
\begin{equation}
\label{eqn:Gyp-eqn}
\E_{P}(\hat{h}_{S_{P}}^{\prime}) \leq C \left( \frac{\V}{n_{P}} \log\!\left( \frac{n_{P}}{\V} \right) + \frac{1}{n_{P}} \log\!\left( \frac{1}{\delta} \right) \right)^{\frac{1}{2-\beta_{P}}}. 
\end{equation}
In particular, conditioned on the $S_{Q}$ data, we can take the set $\Gyp$ as the set of $h \in \Hyp$ satisfying the constraint in the optimization, 
and on the above event we have $h^{*}_P \in \Gyp$ (assuming the \rcs\ condition); 
furthermore, if $\E_Q(h^*_P) = 0$, then without loss 
we can simply define $h^*_Q = h^*_P = h^*$ 
(and it is easy to see that this does not affect the NC condition). 
%for $\beta_{P} > 0$, the \rcs\ condition and NC condition together imply 
%$Q(h^{*} \neq h^{*}_{Q})=0$ so that without loss we can simply define $h^{*}_{Q} = h^{*}$ 
%and the conditions of the theorem would not be affected. 
We thereby 
establish the above inequality \eqref{eqn:Gyp-eqn} for this choice of $\Gyp$, 
in which case by definition $\hat{h}_{S_{P}}^{\prime} = \hat{h}$.
Altogether, 
by the union bound, all of these events hold simultaneously with probability at least $1-\delta$.
In particular, on this event, if the \rcs\ condition holds then  
\begin{equation*}
\E_{P}(\hat{h}) \leq C \left( \frac{\V}{n_{P}} \log\!\left( \frac{n_{P}}{\V} \right) + \frac{1}{n_{P}} \log\!\left( \frac{1}{\delta} \right) \right)^{\frac{1}{2-\beta_{P}}}.
\end{equation*}
Applying the definition of $\rho$, this has the 
further implication that (again if \rcs\ holds)
\begin{equation*}
\E_{Q}(\hat{h}) \leq C \left( \frac{\V}{n_{P}} \log\!\left( \frac{n_{P}}{\V} \right) + \frac{1}{n_{P}} \log\!\left( \frac{1}{\delta} \right) \right)^{\frac{1}{(2-\beta_{P})\rho}}.
\end{equation*}
Also note that, if $\rho = \infty$ this inequality trivially holds, 
whereas if $\rho < \infty$ then \rcs\ necessarily holds so that 
the above implication is generally valid, without needing the 
\rcs\ assumption explicitly.
Moreover, again when the above events hold, using the event from Lemma~\ref{lem:vc-bernstein} again,
along with the constraint from the optimization, we have that 
\begin{equation*}
R_{Q}(\hat{h}) - R_{Q}(\hat{h}_{S_{Q}}) 
\leq 2 c \sqrt{ \hat{P}_{S_{Q}}( \hat{h} \neq \hat{h}_{S_{Q}} ) A_{n_{Q}} } + 2 c A_{n_{Q}},
\end{equation*}
and \eqref{eqn:vc-empirical-distance-bound} implies the right hand side is at most 
\begin{equation*}
C \sqrt{ Q( \hat{h} \neq \hat{h}_{S_{Q}} ) A_{n_{Q}} } + C A_{n_{Q}}
\leq C \sqrt{ Q( \hat{h} \neq h^{*}_Q) A_{n_{Q}} }+ C \sqrt{ Q( \hat{h}_{S_{Q}} \neq h^*_Q ) A_{n_{Q}} } + C A_{n_{Q}}.
\end{equation*}
Using the Bernstein class condition and \eqref{eqn:EP-hatP-bound}, the second term is bounded by 
\begin{equation*}
C \left( \frac{\V}{n_{Q}} \log\!\left( \frac{n_{Q}}{\V} \right) + \frac{1}{n_{Q}} \log\!\left( \frac{1}{\delta} \right) \right)^{\frac{1}{2-\beta_{Q}}},
\end{equation*}
while the first term is bounded by 
\begin{equation*}
C \sqrt{ \E_{Q}(\hat{h})^{\beta_{Q}} A_{n_{Q}} }.
\end{equation*}
Altogether, we have that
\begin{align*}
 \E_{Q}(\hat{h}) &= R_{Q}(\hat{h}) - R_{Q}(\hat{h}_{S_{Q}}) + \E_{Q}(\hat{h}_{S_{Q}}) 
\\ & \leq C \sqrt{ \E_{Q}(\hat{h})^{\beta_{Q}} A_{n_{Q}} } + C \left( \frac{\V}{n_{Q}} \log\!\left( \frac{n_{Q}}{\V} \right) + \frac{1}{n_{Q}} \log\!\left( \frac{1}{\delta} \right) \right)^{\frac{1}{2-\beta_{Q}}},
\end{align*}
which implies 
\begin{equation*}
\E_{Q}(\hat{h}) \leq C \left( \frac{\V}{n_{Q}} \log\!\left( \frac{n_{Q}}{\V} \right) + \frac{1}{n_{Q}} \log\!\left( \frac{1}{\delta} \right) \right)^{\frac{1}{2-\beta_{Q}}}.
\end{equation*}
{\vskip -6mm}
\end{proof}

\begin{remark} 
Note that the above Theorem \ref{thm:lepski} does not require \rcs: 
that is, it holds even when $\E_Q(h^*_P) > 0$, in which case $\rho = \infty$.
%whenever $\E_Q(h^*_P) \neq \E_Q(h^*_Q)$. 
However, for a related method we can also show a stronger result 
in terms of a modified definition of $\rho$:
\end{remark} 
%It is also possible to 
%express a result in terms of a modified definition of $\rho$, 
%which even admits scenarios with $h^*_P \neq h^*_Q$, 
%at the expense of an additive loss of $\E_Q(h^*_P)$ 
%to the $n_P$ term in the rate. 

{\vskip -2mm}Specifically, define 
$\E_Q(h,h_P^*) = \max\{R_Q(h) - R_Q(h_P^*),0\}$, 
and suppose $\rho^{\prime} > 0$, $C_{\rho^{\prime}} > 0$ satisfy 
\begin{equation*}
\forall h \in \Hyp,~~~ C_{\rho^{\prime}} \cdot \E_P(h) \geq \E_Q^{\rho^{\prime}}(h,h_P^*). 
\end{equation*}
This is clearly equivalent to $\rho$ (Definition~\ref{def:rho}) under \rcs;
however, unlike $\rho$, this $\rho^{\prime}$ can be finite even in cases where \rcs\ fails.
With this definition, we have the following result.
%the bound becomes 
\begin{proposition}[Beyond \rcs] %, and $\rho = \infty$.]
\label{prop:lepski-agnostic}
%Define $\hat{h}$ as follows: if 
If $\hat{R}_{S_{Q}}(\hat{h}_{S_{P}}) - \hat{R}_{S_{Q}}(\hat{h}_{S_{Q}}) \leq c \sqrt{ \hat{P}_{S_{Q}}( \hat{h}_{S_{P}} \neq \hat{h}_{S_{Q}} ) A_{n_{Q}} } + c A_{n_{Q}}$, that is, if $\hat{h}_{S_{P}}$ satisfies \eqref{eq:lepski}, 
define $\hat{h} = \hat{h}_{S_{P}}$, and otherwise define $\hat{h} = \hat{h}_{S_{Q}}$.
Assume \nc.
For a constant $C$ depending on $\rho^{\prime},C_{\rho^{\prime}},\beta_{P},c_{\beta_{P}},\beta_{Q},c_{\beta_{Q}}$, 
with probability at least $1-\delta$, 
\begin{equation*}
\E_{Q}(\hat{h}) \leq \min\!\left\{ \E_{Q}(h^{*}_{P}) +  C A_{n_{P}}^{\frac{1}{(2-\beta_{P})\rho^{\prime}}}, C A_{n_{Q}}^{\frac{1}{2-\beta_{Q}}} \right\}.
\end{equation*}
\end{proposition}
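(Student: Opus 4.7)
The plan is to condition on the simultaneous good events from Lemma~\ref{lem:vc-bernstein} applied to both $S_P$ and $S_Q$ (which together hold with probability at least $1-\delta$ after a union bound), and then bound $\E_Q(\hat h)$ by each of the two quantities in the $\min$ separately on this event. Two standard ERM rates follow on this event, as in the derivation of \eqref{eqn:EP-hatP-bound} from Lemma~\ref{lem:vc-bernstein} and \nc: namely $\E_P(\hat h_{S_P}) \leq C A_{n_P}^{1/(2-\beta_P)}$ and $\E_Q(\hat h_{S_Q}) \leq C A_{n_Q}^{1/(2-\beta_Q)}$.

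For the $A_{n_Q}^{1/(2-\beta_Q)}$ branch: if $\hat h = \hat h_{S_Q}$, the ERM bound above is immediately the desired rate. If instead $\hat h = \hat h_{S_P}$, then by construction $\hat h$ satisfies \eqref{eq:lepski}, so the argument in the proof of Theorem~\ref{thm:lepski} carries over verbatim --- apply Lemma~\ref{lem:vc-bernstein} with $(h,h') = (\hat h, \hat h_{S_Q})$, use the constraint together with \eqref{eqn:vc-empirical-distance-bound} to pass from empirical to population disagreement, decompose $Q(\hat h \neq \hat h_{S_Q})$ through $h^*_Q$, and invoke \nc\ and the bound on $\E_Q(\hat h_{S_Q})$ to obtain a self-bounding inequality that solves to $\E_Q(\hat h) \leq C A_{n_Q}^{1/(2-\beta_Q)}$.

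For the $\E_Q(h^*_P) + C A_{n_P}^{1/((2-\beta_P)\rho^{\prime})}$ branch, the easy case is $\hat h = \hat h_{S_P}$: the ERM bound on $\E_P(\hat h_{S_P})$ plugged into the $\rho^{\prime}$ condition gives $\E_Q(\hat h_{S_P}, h^*_P) \leq \paren{C_{\rho^{\prime}} \E_P(\hat h_{S_P})}^{1/\rho^{\prime}} \leq C A_{n_P}^{1/((2-\beta_P)\rho^{\prime})}$, from which $\E_Q(\hat h_{S_P}) = R_Q(\hat h_{S_P}) - R_Q(h^*_Q) \leq \E_Q(h^*_P) + \E_Q(\hat h_{S_P}, h^*_P)$ yields the claim.

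The main obstacle is the remaining case, $\hat h = \hat h_{S_Q}$, where we must still inherit the $P$-dependent bound despite having discarded the $P$-trained classifier. The key observation is that branching to $\hat h_{S_Q}$ occurs exactly when $\hat R_{S_Q}(\hat h_{S_P}) - \hat R_{S_Q}(\hat h_{S_Q}) > c\sqrt{\hat P_{S_Q}(\hat h_{S_P} \neq \hat h_{S_Q}) A_{n_Q}} + c A_{n_Q}$. Applying Lemma~\ref{lem:vc-bernstein} with $(h,h') = (\hat h_{S_Q}, \hat h_{S_P})$ gives $R_Q(\hat h_{S_Q}) - R_Q(\hat h_{S_P}) \leq \hat R_{S_Q}(\hat h_{S_Q}) - \hat R_{S_Q}(\hat h_{S_P}) + c\sqrt{\hat P_{S_Q}(\hat h_{S_P} \neq \hat h_{S_Q}) A_{n_Q}} + c A_{n_Q}$, and the branching inequality forces the right-hand side to be strictly negative. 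Hence $\E_Q(\hat h) = \E_Q(\hat h_{S_Q}) \leq \E_Q(\hat h_{S_P})$, so the bound from the previous paragraph transfers without change and the proof concludes by taking the $\min$ of the two branches.
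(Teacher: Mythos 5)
Your proof is correct and takes essentially the same approach as the paper's. The two-sided argument is identical in substance: the $A_{n_Q}^{1/(2-\beta_Q)}$ bound follows because $\hat h$ always satisfies the constraint in \eqref{eq:lepski} (trivially for $\hat h_{S_Q}$, by construction for $\hat h_{S_P}$), and the $P$-dependent bound follows by establishing $\E_Q(\hat h)\le\E_Q(\hat h_{S_P})$ on the good event and then invoking $\rho'$. Your case split on the algorithm's decision --- showing that the constraint failing forces $R_Q(\hat h_{S_Q}) < R_Q(\hat h_{S_P})$ via Lemma~\ref{lem:vc-bernstein} --- is simply the contrapositive of the paper's split on the ordering of $R_Q(\hat h_{S_P})$ and $R_Q(\hat h_{S_Q})$, so the two presentations are logically equivalent applications of the same inequality.
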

The proof of this result is similar to that of Theorem~\ref{thm:lepski}, 
%with only a few changes.  As such, we 
and as such is deferred 
to Appendix~\ref{app:lepski}.

\paragraph{An alternative procedure.} Similar results as in Theorem \ref{thm:lepski} can 
be obtained for a method that swaps 
the roles of $P$ and $Q$ samples:

\begin{bigboxit}
\AlgA$^\prime$ :
\begin{align*}
& \text{Minimize } & & \hat{R}_{S_{Q}}(h) 
\\ & \text{subject to } & & \hat{R}_{S_{P}}(h) - \hat{R}_{S_{P}}(\hat{h}_{S_{P}}) \leq c \sqrt{ \hat{P}_{S_{P}}( h \neq \hat{h}_{S_{P}} ) A_{n_{P}} } + c A_{n_{P}}
\\ & & & h \in \Hyp .
\end{align*}
\end{bigboxit}

This version, more akin to so-called \emph{hypothesis transfer} may have practical benefits 
in scenarios where 
the $P$ data is accessible \emph{before} the $Q$ data, 
since then the feasible set might be calculated (or approximated) 
in advance, so that the $P$ data itself would no longer be needed in order 
to execute the procedure. However this procedure presumes that $h^*_P$ is not far from $h^*_Q$, i.e., that data $S_P$  from $P$ is not misleading, since it conditions on doing well on $S_P$. Hence we now require \rcs. 
\begin{proposition} Assume \nc\, and \rcs. Let $\hat{h}$ be the solution from \AlgA$^{\prime}$.
For a constant $C$ depending on $\rho,C_{\rho},\beta_{P},c_{\beta_{P}},\beta_{Q},c_{\beta_{Q}}$, 
with probability at least $1-\delta$, 
\begin{equation*}
\E_{Q}(\hat{h}) \leq C \min\!\left\{ A_{n_{P}}^{\frac{1}{(2-\beta_{P})\rho}}, 
A_{n_{Q}}^{\frac{1}{2-\beta_{Q}}} \right\}
= \tilde{O}\!\left( \min\!\left\{ 
\left(\frac{\V}{n_{P}}\right)^{\frac{1}{(2-\beta_{P})\rho}},
\left(\frac{\V}{n_{Q}}\right)^{\frac{1}{2-\beta_{Q}}}
\right\}\right).
\end{equation*}
\end{proposition}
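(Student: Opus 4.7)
The plan is to mirror the proof of Theorem~\ref{thm:lepski}, with the roles of $P$ and $Q$ exchanged throughout. Note that \rcs\ now becomes essential rather than incidental: since feasibility is defined via $S_P$, we must have $h^{*}_Q$ satisfy this $P$-based constraint, which is automatic when $h^{*}_P = h^{*}_Q$.

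First I would set up the three high-probability events. On the event of probability $1-\delta/3$ from Lemma~\ref{lem:vc-bernstein} applied to $S_P$, one obtains
$$\hat{R}_{S_{P}}(h^{*}_P) - \hat{R}_{S_{P}}(\hat{h}_{S_{P}}) \leq c \sqrt{\hat{P}_{S_{P}}(h^{*}_P \neq \hat{h}_{S_{P}}) A_{n_{P}}} + c A_{n_{P}}.$$
Under \rcs\ we have $h^{*}_P = h^{*}_Q \doteq h^{*}$, so $h^{*}$ lies in the feasible set $\Gyp \subseteq \Hyp$ defined by the constraint of \AlgA$^\prime$.

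Next, for the $n_Q$-rate, condition on $S_P$ (hence on $\Gyp$), so that $\hat{h}$ is the ERM over $\Gyp$ with respect to $S_Q$. Since $h^{*}_Q \in \Gyp$, I would invoke Lemma~\ref{lem:vc-bernstein} on $S_Q$ together with \nc\ on $Q$, and run the standard self-bounding argument of \cite{MN:06,koltchinskii:06} (exactly as in the derivation of \eqref{eqn:EP-hatP-bound}), yielding $\E_{Q}(\hat{h}) \leq C A_{n_Q}^{1/(2-\beta_Q)}$ with probability at least $1-\delta/3$.

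For the $n_P$-rate, I would use the feasibility constraint on $\hat{h}$ combined with Lemma~\ref{lem:vc-bernstein} for $S_P$: \eqref{eqn:vc-empirical-distance-bound} converts the empirical disagreement to a population disagreement, and \eqref{eqn:vc-bernstein} converts the empirical risk gap to a population risk gap, giving
$$R_{P}(\hat{h}) - R_{P}(\hat{h}_{S_{P}}) \leq C \sqrt{P(\hat{h} \neq \hat{h}_{S_{P}}) A_{n_{P}}} + C A_{n_{P}}.$$
Splitting $P(\hat{h} \neq \hat{h}_{S_{P}}) \leq 2 P(\hat{h} \neq h^{*}_P) + 2 P(\hat{h}_{S_{P}} \neq h^{*}_P)$, using the standard ERM rate on $\hat{h}_{S_{P}}$ under \nc, and applying $P_X(\hat{h} \neq h^{*}_P) \leq c_P \E_P(\hat{h})^{\beta_P}$ from \nc, a self-bounding argument then yields $\E_{P}(\hat{h}) \leq C A_{n_P}^{1/(2-\beta_P)}$. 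Finally, the definition of $\rho$ gives
$$\E_{Q}(\hat{h}) \leq (C_\rho \E_P(\hat{h}))^{1/\rho} \leq C A_{n_P}^{1/((2-\beta_P)\rho)},$$
and a union bound over the three events completes the proof.

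The main obstacle is the self-bounding step in the $n_P$-analysis: the feasibility constraint only controls the excess empirical risk of $\hat{h}$ \emph{relative to} $\hat{h}_{S_{P}}$, not relative to $h^{*}_P$, so some care is required to convert it into a localized excess-risk bound $\E_P(\hat{h}) \lesssim A_{n_P}^{1/(2-\beta_P)}$ suitable for feeding into the transfer exponent. This is essentially the same structural difficulty resolved in Theorem~\ref{thm:lepski} (with $P$ and $Q$ swapped), so the argument can be ported with only bookkeeping changes.
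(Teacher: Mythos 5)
Your proposal is correct and matches exactly what the paper indicates: the paper itself omits the proof, stating only that it is ``very similar to that of Theorem~\ref{thm:lepski},'' and you have carried out precisely that mirror-image argument, correctly identifying both halves (the $n_Q$-rate from constrained ERM over a feasible set containing $h^*_Q$, and the $n_P$-rate from the feasibility constraint plus the transfer exponent $\rho$) and correctly noting that \rcs\ is now essential because the feasible set must contain $h^*_Q$ for the $Q$-minimization to be meaningful.
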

The proof is very similar to that of Theorem~\ref{thm:lepski}, 
so is omitted for brevity.

% For this alternative procedure, we have the following result.

% Let $\hat{h}$ be the solution from \AlgA.
% Suppose $\beta_{Q} > 0$. %%% SH: This is new.  It's only needed because we swapped the roles of P and Q in the algorithm.
% For a constant $C$ depending on $\rho,c_{\rho},\beta_{P},c_{\beta_{P}},\beta_{Q},c_{\beta_{Q}}$, 
% with probability at least $1-\delta$, 

%\subsection{Model Selection}

%%% TODO: Model selection in the journal version

\section{Minimizing Sampling Cost}
\label{sec:adaptive}

In this section (and continued in Appendix~\ref{sec:reweighting}), 
we discuss the value of having access to \emph{unlabeled} data from $Q$.
The idea is that unlabeled data can be obtained much more cheaply than labeled data, so gaining access to unlabeled data can be realistic in many applications.
Specifically, we begin by discussing an adaptive sampling scenario, where we are able to \emph{draw} samples from $P$ or $Q$, 
at different \emph{costs}, and we are interested in optimizing the total cost of obtaining a given excess $Q$-risk.

% Formally, consider the scenario where we have as input a value $\epsilon$, 
% and are tasked with producing a classifier $\hat{h}$ with $\E_{Q}(\hat{h}) \leq \epsilon$.
% We are then allowed to \emph{draw} samples from either $P$ or $Q$ toward achieving this goal.
% Let us imagine samples from $P$ have a cost $\cost_{P}$ and samples from $Q$ have a cost $\cost_{Q}$.
% We are then interested in minimizing the total cost of achieving the objective $\E_{Q}(\hat{h}) \leq \epsilon$.
% In fact, more generally, let us suppose $\cost_{P} : \nats \to [0,\infty)$ and $\cost_{Q} : \nats \to [0,\infty)$ 
% are \emph{cost functions}, where $\cost_{P}(n)$ indicates the cost of sampling a batch of size $n$ from $P$, 
% and similarly define $\cost_{Q}(n)$. %is the cost of sampling a batch of size $n$ from $Q$.
% We suppose these functions are increasing, concave, and unbounded.

Formally, consider the scenario where we have as input a value $\epsilon$, 
and are tasked with producing a classifier $\hat{h}$ with $\E_{Q}(\hat{h}) \leq \epsilon$.
We are then allowed to \emph{draw} samples from either $P$ or $Q$ toward achieving this goal, but at different costs. 
Suppose $\cost_{P} : \nats \to [0,\infty)$ and $\cost_{Q} : \nats \to [0,\infty)$ 
are \emph{cost functions}, where $\cost_{P}(n)$ indicates the cost of sampling a batch of size $n$ from $P$, 
and similarly define $\cost_{Q}(n)$. %is the cost of sampling a batch of size $n$ from $Q$.
We suppose these functions are increasing, and concave, and unbounded.

\begin{definition} %\color{red}
Define $n_{Q}^{*} = {\V}/{\epsilon^{2-\beta_{Q}}}$,
$n_{P}^{*} = {\V}/{\epsilon^{(2-\beta_{P})\gamma/\beta_{P}}} $,
and 
$\cost^{*} = \min\!\left\{ \cost_{Q}(n_{Q}^{*}), \cost_{P}(n_{P}^{*}) \right \}$. We call 
$\cost^{*} = \cost^*(\epsilon; \cost_P, \cost_Q)$ the  {\bf minimax optimal cost}  of sampling from $P$ or $Q$ to attain $Q$-error $\epsilon$.%(under cost functions $\cost_P, \cost_Q$).
\end{definition}

Note that the cost $\cost^{*}$ is effectively the smallest possible, up to log factors, 
in the range of parameters given in Theorem~\ref{theo:lowgamma}. %, i.e.,  
%Theorem~\ref{theo:lowgamma} 
%implies there exist scenarios where a cost of order at least $\cost^{*}$ 
%is necessary (with \emph{any} learner) in order to achieve the guarantee 
%$\E_{Q}(\hat{h}) \leq \epsilon$:
That is, in order to make the lower bound in Theorem~\ref{theo:lowgamma} 
less than $\epsilon$, 
either $n_{Q} = \tilde{\Omega}(n_{Q}^{*})$ samples are needed from $Q$ or $n_{P} = \tilde{\Omega}(n_{P}^{*})$ samples are needed from $P$. We show that $\cost^{*}$ is nearly achievable, adaptively with no knowledge of distributional parameters. 

\paragraph{Procedure.}
We assume access to a large unlabeled data set $U_{Q}$ sampled from $Q_X$.
For our purposes, we will suppose this data set has size at least $\Theta( \frac{\V}{\epsilon} \log \frac{1}{\epsilon} + \frac{1}{\epsilon} \log \frac{1}{\delta} )$.

Let $A_{n}^{\prime} = \frac{\V}{n} \log(\frac{\max\{n,\V\}}{\V}) + \frac{1}{n} \log( \frac{2n^{2}}{\delta} )$.  %%% note: we can probably make the n^2 into a log^2(n), since the sample sizes should be doubling too.
Then for any labeled data set $S$, define $\hat{h}_{S} = \argmin_{h \in \Hyp} \hat{R}_{S}(h)$, and given an additional data set $U$ (labeled or unlabeled) define a quantity
\begin{equation*}
\hat{\delta}(S,U) = \sup\!\left\{ \hat{P}_{U}( h \neq \hat{h}_{S} ) : h \in \Hyp, \hat{R}_{S}(h) - \hat{R}_{S}(\hat{h}_{S}) \leq c \sqrt{\hat{P}_{S}( h \neq \hat{h}_{S} ) A_{|S|}^{\prime}} + c A_{|S|}^{\prime} \right\},
\end{equation*}
where $c$ is as in Lemma~\ref{lem:vc-bernstein}.
Now we have the following procedure.

\begin{bigboxit}
\AlgB :\\
0. $S_{P} \gets \{\}$, $S_{Q} \gets \{\}$\\
1. For $t = 1,2,\ldots$\\
2.\quad Let $n_{t,P}$ be minimal such that $\cost_{P}(n_{t,P}) \geq 2^{t-1}$\\
3.\quad Sample $n_{t,P}$ samples from $P$ and add them to $S_{P}$\\
4.\quad Let $n_{t,Q}$ be minimal such that $\cost_{Q}(n_{t,Q}) \geq 2^{t-1}$\\
5.\quad Sample $n_{t,Q}$ samples from $Q$ and add them to $S_{Q}$\\
6.\quad If $c \sqrt{ \hat{\delta}(S_{Q},S_{Q}) A_{|S_{Q}|} } + c A_{|S_{Q}|} \leq \epsilon$, return $\hat{h}_{S_{Q}}$\\
7.\quad If $\hat{\delta}(S_{P},U_{Q}) \leq \epsilon/4$, return $\hat{h}_{S_{P}}$
\end{bigboxit}

The following theorem asserts that this procedure will find a classifier $\hat{h}$ with $\E_{Q}(\hat{h}) \leq \epsilon$ 
while adaptively using a near-minimal cost associated with achieving this.
The proof is in Appendix~\ref{app:adaptive}.

\begin{theorem}
[Adapting to Sampling Costs]\label{thm:adaptive-cost}
Assume \nc\, and \rcs. There exist a constant $c'$, depending on parameters ($C_{\gamma}$, $\gamma$, $c_{\beta_{Q}}$, $\beta_{Q}$, $c_{\beta_{P}}$, $\beta_{P}$) but not on $\epsilon$ or $\delta$, such that the following holds. Define sample sizes $\tilde n_{Q} = \frac{c^{\prime}}{\epsilon^{2-\beta_{Q}}} \left( \V \log \frac{1}{\epsilon} + \log \frac{1}{\delta} \right)$, and 
$\tilde n_{P} = \frac{c^{\prime}}{\epsilon^{(2-\beta_{P})\gamma/\beta_{P}}} \left( \V \log \frac{1}{\epsilon} + \log \frac{1}{\delta} \right)$.

\AlgB\ outputs a classifier $\hat{h}$ such that, with probability at least $1-\delta$, 
we have $\E_{Q}(\hat{h}) \leq \epsilon$, 
and the total sampling cost incurred is at most
$\min\!\left\{ \cost_{Q}(\tilde n_{Q}), \cost_{P}(\tilde n_{P})\right\} = \tilde O(\cost^*)$. 
\end{theorem}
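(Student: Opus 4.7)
The plan is to fix, via a union bound with failure probabilities allocated as $\delta/t^{2}$ across iterations, a single high-probability event on which Lemma~\ref{lem:vc-bernstein} holds simultaneously for every $S_{P}$ and $S_{Q}$ generated, and a standard VC disagreement bound holds on $U_{Q}$ (which has $A_{|U_{Q}|} = O(\epsilon)$ by the choice $|U_{Q}| = \tilde{\Theta}(\V/\epsilon)$). Under \rcs, fix $h^{*} = h^{*}_{P} = h^{*}_{Q}$ and work with this single optimum throughout. The argument then splits into correctness of each of the two return rules, and an upper bound on the iteration at which each rule must fire, which via the doubling schedule yields the cost bound.

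For correctness, Lemma~\ref{lem:vc-bernstein} implies that $h^{*}$ satisfies the Bernstein-style constraint defining both $\hat{\delta}(S_{Q},S_{Q})$ and $\hat{\delta}(S_{P},U_{Q})$: since $R_{D}(\hat{h}_{S_{D}}) \geq R_{D}(h^{*})$ for $D \in \{P,Q\}$, the Bernstein inequality yields the constraint for $h = h^{*}$. Thus each $\hat{\delta}$ upper-bounds the empirical disagreement between $h^{*}$ and the corresponding ERM. When step~6 fires with $\hat{h}_{S_{Q}}$, another application of Lemma~\ref{lem:vc-bernstein} yields $\E_{Q}(\hat{h}_{S_{Q}}) \leq c\sqrt{\hat{\delta}(S_{Q},S_{Q})\,A_{|S_{Q}|}} + c\,A_{|S_{Q}|} \leq \epsilon$. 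When step~7 fires with $\hat{h}_{S_{P}}$, the VC bound on $U_{Q}$ converts $\hat{P}_{U_{Q}}(h^{*} \neq \hat{h}_{S_{P}}) \leq \epsilon/4$ into $Q_{X}(h^{*} \neq \hat{h}_{S_{P}}) = O(\epsilon)$, and under \rcs\ we have $\E_{Q}(\hat{h}_{S_{P}}) \leq Q_{X}(\hat{h}_{S_{P}} \neq h^{*}) = O(\epsilon)$; the absolute constants in $|U_{Q}|$ and the threshold $\epsilon/4$ can be tuned so that the final bound is exactly $\leq \epsilon$.

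For the cost bound, identify sample-size thresholds at which each rule is guaranteed to fire. For step~6, the feasible set in $\hat{\delta}(S_{Q},S_{Q})$ consists of $h$ with $\E_{Q}(h) = \tilde{O}(A_{|S_{Q}|}^{1/(2-\beta_{Q})})$ (standard ERM analysis under \nc); then \nc\ gives $Q_{X}(h \neq h^{*}) = \tilde{O}(A_{|S_{Q}|}^{\beta_{Q}/(2-\beta_{Q})})$, which Lemma~\ref{lem:vc-bernstein} transfers to $\hat{P}_{S_{Q}}$, so the test expression is $\tilde{O}(A_{|S_{Q}|}^{1/(2-\beta_{Q})})$, which drops below $\epsilon$ once $|S_{Q}| \gtrsim \tilde{n}_{Q}$. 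For step~7, the same ERM analysis on $S_{P}$ bounds $P_{X}(h \neq h^{*}) = \tilde{O}(A_{|S_{P}|}^{\beta_{P}/(2-\beta_{P})})$ uniformly over the feasible set; the marginal transfer exponent $\gamma$ converts this into $Q_{X}(h \neq h^{*}) = \tilde{O}(A_{|S_{P}|}^{\beta_{P}/((2-\beta_{P})\gamma)})$; a triangle inequality around $h^{*}$ plus VC on $U_{Q}$ then controls $\hat{P}_{U_{Q}}(h \neq \hat{h}_{S_{P}})$, so step~7 fires once $|S_{P}| \gtrsim \tilde{n}_{P}$. The doubling schedule guarantees $n_{t,Q} \geq \tilde{n}_{Q}$ (resp.\ $n_{t,P} \geq \tilde{n}_{P}$) once $2^{t-1} \gtrsim \cost_{Q}(\tilde{n}_{Q})$ (resp.\ $\cost_{P}(\tilde{n}_{P})$), using monotonicity and concavity of the cost functions to invert; since the total cost incurred through iteration $t$ is $\sum_{s \leq t} 2\cdot 2^{s-1} = O(2^{t})$, the algorithm halts at some $t^{*}$ with cumulative cost $O(\min\{\cost_{Q}(\tilde{n}_{Q}),\cost_{P}(\tilde{n}_{P})\}) = \tilde{O}(\cost^{*})$.

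The main obstacle is the step~7 chain: linking the Bernstein-class bound on $\E_{P}$ over the feasible set, the marginal transfer exponent $\gamma$ (stated relative to $h^{*}_{P}$, so \rcs\ is needed to treat it as a bound relative to $h^{*}_{Q}$ too), a triangle inequality around $h^{*}$ to pass from $\{h \neq h^{*}\}$ to $\{h \neq \hat{h}_{S_{P}}\}$, and the VC bound on $U_{Q}$. Each link contributes multiplicative constants and log factors, and the care needed is in checking that these can be absorbed into the single constant $c^{\prime}$ uniformly, and in particular that the factor $1/\gamma$ in the exponent of $A_{|S_{P}|}$ (coming from the marginal transfer step) lines up exactly with the definition of $\tilde{n}_{P}$ in the theorem statement.
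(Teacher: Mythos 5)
Your proposal is correct and follows the same route as the paper's proof: the same union-bound event, the same argument that $h^*$ satisfies each constraint so that correctness of both return rules follows, the same ERM-under-\nc\ $\to$ \nc\ $\to$ $\gamma$-transfer $\to$ triangle-inequality $\to$ $U_Q$-VC chain to bound the iteration at which each rule fires, and the same geometric-cost argument. The details you flag as the main obstacle (the step-7 chain, \rcs\ being needed to identify $h^*_P$ with $h^*_Q$, and the constants being absorbed into $c'$) are precisely the points the paper handles, in the same order.
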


%determining this number of samples.
%, yet informative enough that we 
%do not need so many as to make it not a net saving.

Thus, when $\cost^*$ favors sampling from $P$, 
we end up sampling very few labeled $Q$ data. 
These are scenarios where $P$ samples are cheap %enough 
relative to the 
cost of $Q$ samples and w.r.t.\ parameters 
($\beta_{Q},\! \beta_{P},\! \gamma$) which determine the effective source sample size contributed for every target sample. 
%%% TODO: find a way to add this sentence back
Furthermore, we achieve this adaptively: without knowing 
(or even estimating) 
these relevant parameters.
%Indeed, the algorithm is designed in a way that would 
%also adapt to even more-favorable situations (for instance, 
%where the noise conditions in the actual $P$ and $Q$ distributions 
%might be more benign than the worst case scenarios from the 
%lower bounds).
%
%Note that if the parameters ($\beta_{Q}$, $\beta_{P}$, $\gamma$, etc.) 
%were known, it would be trivial to achieve this 
%guarantee (and indeed we might then sometimes 
%even choose to only sample from $P$, for instance).
%Thus, the value of the above procedure is that it allows 
%us to \emph{adapt} to the values of these parameters 
%without knowing them (and indeed, without even the need for 
%them to be estimable, which would have required additional assumptions).

%\subsection{\color{red}Pac-Bayes and Tsybakov Noise}
%%% TODO: Maybe in a future version or something

%\subsection{2-Samples ERM and Massart Noise}
%%% TODO: Maybe in a future version or something

\subsection*{Acknowledgments}

We thank Mehryar Mohri for several very important discussions which 
helped crystallize many essential questions and directions on this topic.

\bibliographystyle{unsrt}  
%\bibliography{references}  %%% Remove comment to use the external .bib file (using bibtex).
%%% and comment out the ``thebibliography'' section.
\bibliography{refs}

%%% Comment out this section when you \bibliography{references} is enabled.

\newpage
\appendix

\section{Additional Results}
\subsection{Reweighting the Source Data}
\label{sec:reweighting}

In this section, we present a technique for using unlabeled data from $Q$ 
to find a reweighting of the $P$ data more suitable for transfer.
This gives a technique for using the data effectively in a potentially practical way.
As above, we again suppose access to the sample $U_{Q}$ of unlabeled data from $Q$. % described in Section~\ref{sec:adaptive}.
%In this case, however, we will suppose $|U_{Q}| \geq n_{P}$.  %%% It's not really a necessary condition, but it's a simple sufficient one, so we can go with it for now.

Additionally, we suppose we have access to a set $\Pclass$ of functions $f : \X \to [0,\infty)$, 
which we interpret as unnormalized \emph{density} functions with respect to $P_{X}$.
%%% TODO: add this footnote back to the long version
%\footnote{Note that, 
%since generally $P_{X}$ is an unknown distribution, it is not reasonable to suppose 
%we have a set of \emph{normalized} densities with respect to $P_{X}$.  
%But the procedure we propose still works even without normalization.} %%% I want to try to do it without normalization
%In particular, $\forall f \in \Pclass$, we suppose $\int f {\rm d}P = 1$.
Let $P_{f}$ denote the bounded measure whose marginal on $\X$ has density $f$ with respect to $P_{X}$, and the conditional $Y|X$ is the same as for $P$.

Now suppose $S_{P} = \{ (x_{i},y_{i}) \}_{i=1}^{n_{P}}$ is a sequence of $n_{P}$ iid $P$-distributed samples.
Continuing conventions from above $R_{P_{f}}(h) = \int \ind[ h(x) \neq y ] f(x) {\rm d}P(x,y)$ is a risk with respect to $P_{f}$,
but now we also write $\hat{R}_{S_{P},f}(h) = \frac{1}{n_{P}} \sum_{(x,y) \in S_{P}} \ind[ h(x) \neq y ] f(x)$, 
and additionally 
we will use $P_{f^{2}}( h \neq h' ) = \int \ind[ h(x) \neq h'(x) ] f^{2}(x) {\rm d}P(x,y)$, 
and $\hat{P}_{S_{P},f^{2}}( h \neq h' ) = \frac{1}{n_{P}} \sum_{(x,y) \in S_{P}} \ind[ h(x) \neq h'(x) ] f^{2}(x)$; 
the reason $f^2$ is used instead of $f$ is that this will represent a variance term in the bounds below.
Other notations from above are defined analogously.
In particular, also let $\hat{h}_{S_{P},f} = \argmin_{h \in \Hyp} \hat{R}_{S_{P},f}(h)$.
For simplicity, we will only present the case of $\Pclass$ having finite pseudo-dimension $\Pdim$ 
(i.e., $\Pdim$ is the VC dimension of the subgraph functions $\{(x,y) \mapsto \ind[ f(x) \leq y ] : f \in \Pclass\}$); 
extensions to general bracketing or empirical covering follow similarly.

For the remaining results in this section, we suppose the condition RCS holds for \emph{all} $P_{f}$: 
that is, $R_{P_{f}}$ is minimized in $\Hyp$ at a function $h^{*}_{P_{f}}$ having $\E_{Q}(h^{*}_{P_{f}}) = 0$.
For instance, this would be the case if the Bayes optimal classifier is in the class $\Hyp$.

Define $A_{n}^{\prime\prime} = \frac{\V+\Pdim}{n} \log\!\left( \frac{\max\{n,\V+\Pdim\}}{\V+\Pdim} \right) + \frac{1}{n} \log\!\left( \frac{1}{\delta} \right)$.
Let us also extend the definition of $\hat{\delta}$ introduced above.
Specifically, define $\hat{\delta}(S_{P},f,U_{Q})$ as 
\begin{equation*}
 \sup\!\left\{ \hat{P}_{U_{Q}}( h \neq \hat{h}_{S_{P},f} ) : h \in \Hyp, \hat{\E}_{S_{P},f}(h) \leq c \sqrt{ \hat{P}_{S_{P},f^{2}}( h \neq \hat{h}_{S_{P},f} ) A_{n_{P}}^{\prime\prime} } + c \|f\|_{\infty} A_{n_{P}}^{\prime\prime}  \right\}.
\end{equation*}
Now consider the following procedure.

\begin{bigboxit}
\AlgC :\\
Choose $\hat{f}$ to minimize $\hat{\delta}(S_{P},f,U_{Q})$ over $f \in \Pclass$.
\\Choose $\hat{h}$ to minimize $\hat{R}_{S_{Q}}(h)$ among $h \in \Hyp$ 
\\subject to $\hat{\E}_{S_{P},\hat{f}}(h) \leq c \sqrt{ \hat{P}_{S_{P},\hat{f}^{2}}( h \neq \hat{h}_{S_{P},\hat{f}} ) A_{n_{P}}^{\prime\prime} } + c \| \hat{f} \|_{\infty} A_{n_{P}}^{\prime\prime}$.
\end{bigboxit}

As we establish in the proof, %below, 
$\hat{f}$ is effectively being chosen to minimize an upper bound on the excess $Q$-risk of the resulting classifier $\hat{h}$.
Toward analyzing the performance of this procedure, 
note that each $f$ induces a marginal transfer exponent: 
that is, values $C_{\gamma,f}$, $\gamma_{f}$ such that 
$\forall h \in \H$, 
$
C_{\gamma,f} P_{f^{2}}( h \neq h_{P_{f}}^{*} ) \geq Q^{\gamma_{f}}( h \neq h_{P_{f}}^{*} ).
$
Similarly, each $f$ induces a Bernstein Class Condition: there exist values $c_{f} > 0$, $\beta_{f} \in [0,1]$ such that 
$
P_{f^{2}}( h \neq h_{P_{f}}^{*} ) \leq c_{f} \E_{P_{f}}^{\beta_{f}}(h).
$

The following theorem reveals that \AlgC\ is able to perform nearly as well as
applying the transfer technique from Theorem~\ref{thm:lepski} directly under 
the measure in the family $\Pclass$ that would provide the best bound.
The only losses compared to doing so are a dependence on $\Pdim$ %the complexity of the class $\Pclass$ 
and %a loss %(appearing only in a lower-order term) 
%in terms of 
the supremum of the density 
(which accounts for how different that measure is from $P$).
The proof is in Appendix~\ref{app:reweighting}.

\begin{theorem}
\label{thm:reweighting-finite}
%%% Actually, it might be worthwhile to make the dependence on \|f\|_{\infty} explicit, since it's only showing up in lower-order terms, which is an interesting feature.
Suppose $\beta_{Q} > 0$ and that \nc\, and \rcs\, hold for all $P_{f}$, $f \in \Pclass$.
There exist constants $C_{f}$ depending on $\|f\|_{\infty}$, $C_{\gamma,f}$, $\gamma_{f}$, $c_{f}$, $\beta_{f}$, 
and a constant $C$ depending on $c_{q}$, $\beta_{Q}$ 
such that, for a sufficiently large $|U_{Q}|$, 
w.p. at least $1-\delta$, 
the classifier $\hat{h}$ chosen by \AlgC\ satisfies 
\begin{align*}
\E_{Q}(\hat{h}) 
& \leq \inf_{f \in \Pclass} C \min\!\left\{ C_{f} \left( A_{n_{P}}^{\prime\prime} \right)^{\frac{\beta_{f}}{(2-\beta_{f})\gamma_{f}}}, A_{n_{Q}}^{\frac{1}{2-\beta_{Q}}} \right\} 
\\ & = \tilde{O}\!\left( \inf_{f \in \Pclass} \min\!\left\{ C_{f} \left(\frac{\V + \Pdim}{n_{P}}\right)^{\frac{\beta_{f}}{(2-\beta_{f})\gamma_{f}}}, \left(\frac{\V}{n_{Q}}\right)^{\frac{1}{2-\beta_{Q}}} \right\} \right).
\end{align*}
\end{theorem}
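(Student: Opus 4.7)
The plan is to follow the skeleton of the proof of Theorem~\ref{thm:lepski} simultaneously over the family $\Pclass$, and then exploit the optimality of $\hat f$ against the surrogate $\hat\delta(S_P,f,U_Q)$. The key technical ingredient is a uniform, weighted analogue of Lemma~\ref{lem:vc-bernstein} for the function class $\{(x,y)\mapsto \ind[h(x)\neq y]\,f(x):h\in\Hyp,\, f\in\Pclass\}$, whose subgraph pseudo-dimension is $O(\V+\Pdim)$ with envelope $\|f\|_\infty$. Talagrand/localization style arguments yield, on an event of probability at least $1-\delta/3$, simultaneous Bernstein control of $R_{P_f}(h)-\hat R_{S_P,f}(h)$ and of $P_{f^2}(h\neq h')$ versus $\hat P_{S_P,f^2}(h\neq h')$ with complexity parameter $A_{n_P}''$ and scale $\|f\|_\infty$. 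I will also invoke Lemma~\ref{lem:vc-bernstein} directly for $S_Q$, and a standard VC deviation bound on the disagreement class $\{x\mapsto \ind[h(x)\neq h'(x)]\}$ to relate $\hat P_{U_Q}$ to $Q$ whenever $|U_Q|$ is of the stated size.

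On this good event, for each fixed $f\in\Pclass$ a localized argument combining the weighted uniform bound with the Bernstein parameters $(c_f,\beta_f)$ of $P_f$ gives the usual ERM rate $\E_{P_f}(\hat h_{S_P,f})\lesssim (A_{n_P}'')^{1/(2-\beta_f)}$, with constants depending on $\|f\|_\infty$ and $c_f$. Every $h$ satisfying the feasibility constraint in the definition of $\hat\delta(S_P,f,U_Q)$ then has $P_{f^2}(h\neq\hat h_{S_P,f})$ of the same order via Bernstein, and the marginal transfer exponent $\gamma_f$ upgrades this to $Q(h\neq\hat h_{S_P,f})\lesssim C_{\gamma,f}^{1/\gamma_f}(A_{n_P}'')^{\beta_f/((2-\beta_f)\gamma_f)}$. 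A uniform-convergence step on $U_Q$ lets me move between $Q$ and $\hat P_{U_Q}$ at constant loss, so $\hat\delta(S_P,f,U_Q)$ is bounded above by the same quantity. Conversely, under \rcs\ the minimizer $h^*_{P_f}=h^*_Q$ lies in the feasibility set with high probability, so $\hat\delta(S_P,f,U_Q)\ge \hat P_{U_Q}(h^*_Q\neq\hat h_{S_P,f})$, and passing back through the $U_Q$-uniform bound gives $Q(h^*_Q\neq\hat h_{S_P,f})\lesssim \hat\delta(S_P,f,U_Q)$, hence $\E_Q(\hat h_{S_P,f})\lesssim \hat\delta(S_P,f,U_Q)$.

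With both directions in hand, the optimality of $\hat f$ gives $\hat\delta(S_P,\hat f,U_Q)\le \inf_{f\in\Pclass}\hat\delta(S_P,f,U_Q)\lesssim \inf_{f\in\Pclass} C_f(A_{n_P}'')^{\beta_f/((2-\beta_f)\gamma_f)}$. Since $\hat h$ is feasible with respect to $\hat f$, we have $\hat P_{U_Q}(\hat h\neq\hat h_{S_P,\hat f})\le \hat\delta(S_P,\hat f,U_Q)$; combining with the previous bound on $Q(\hat h_{S_P,\hat f}\neq h^*_Q)$, the triangle inequality and the $U_Q$-uniform step give $\E_Q(\hat h)\le Q(\hat h\neq h^*_Q)\lesssim \inf_{f\in\Pclass} C_f(A_{n_P}'')^{\beta_f/((2-\beta_f)\gamma_f)}$, which is the first branch of the claimed bound. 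For the $n_Q$ branch I recycle the $S_Q$-side argument from Theorem~\ref{thm:lepski}: with high probability $h^*_Q=h^*_{P_{\hat f}}$ is feasible for the outer optimization, so $\hat R_{S_Q}(\hat h)\le \hat R_{S_Q}(h^*_Q)$, and Lemma~\ref{lem:vc-bernstein} together with \nc\ on $Q$ yield $\E_Q(\hat h)\lesssim A_{n_Q}^{1/(2-\beta_Q)}$. Taking the minimum of the two branches completes the proof.

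The main obstacle is the weighted uniform Bernstein--VC inequality underlying Step~1: controlling $R_{P_f}-\hat R_{S_P,f}$ uniformly over $f\in\Pclass$ with the sharp variance proxy $P_{f^2}$ rather than $\|f\|_\infty P_f$ is what makes the final rate depend on $\|f\|_\infty$ only through the constant $C_f$. A secondary subtlety is verifying that the $U_Q$ deviation step is tight enough to preserve the $(A_{n_P}'')^{\beta_f/((2-\beta_f)\gamma_f)}$ rate through both directions of the $\hat P_{U_Q}\leftrightarrow Q$ translation; this is why the size of $U_Q$ must scale as stated. All remaining steps are mechanical chainings of Bernstein localization, the transfer exponent $\gamma_f$, and triangle inequalities.
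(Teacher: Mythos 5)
Your proposal matches the paper's proof in essentially every step: the weighted uniform Bernstein--VC bound over the joint class $\{\ind[h\neq y]f(x)\}$ (the paper's Lemma~\ref{lem:reweighted-bernstein}), the observation that both $h^*_{P_f}$ and $\hat h_{S_P,f}$ satisfy the feasibility constraint so $\hat\delta(S_P,f,U_Q)$ controls $Q(\hat h \neq h^*_Q)$ via the $U_Q$ translation and triangle inequality, the chain through $\gamma_f$ and $\beta_f$ to get $(A_{n_P}'')^{\beta_f/((2-\beta_f)\gamma_f)}$, the optimality of $\hat f$ for the infimum, and recycling the $S_Q$-side of Theorem~\ref{thm:lepski} for the $n_Q$ branch. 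This is the same decomposition and the same key lemmas, so no meaningful comparison is needed.
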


The utility of this theorem will of course depend largely on the family $\Pclass$ 
of densities.  This class should contain a distribution with small $\gamma_{f}$ 
marginal transfer exponent, while also small $\|f\|_{\infty}$ (which is captured by the $C_f$ constant in the bound), 
and favorable noise conditions (i.e., large $\beta_{f}$).

%{\color{red} Maybe a few words? Also $\beta_{P_f} \to \beta_f$?}

\subsection{Choice of Transfer from Multiple Sources}
\label{sec:P1vsP2}

It is worth noting that all of the above analysis also applies 
to the case that, instead of a family of densities with respect to a single $P$, 
the set $\Pclass$ is a set of probability measures $P_{i}$, each with its own separate iid data set $S_{i}$ 
of some size $n_{i}$.
Lemma~\ref{lem:vc-bernstein} can then be applied to all of these data sets, 
if we simply replace $\delta$ by $\delta/|\Pclass|$ to accommodate a union bound; 
call the corresponding quantity $A_{n}^{'''}$.
Then, similarly to the above, we can use the following procedure.

\begin{bigboxit}
\AlgD :\\
Choose $\hat{i}$ to minimize $\hat{\delta}(S_{i},U_{Q})$ over $P_{i} \in \Pclass$.
\\Choose $\hat{h}$ to minimize $\hat{R}_{S_{Q}}(h)$ among $h \in \Hyp$ 
\\subject to $\hat{\E}_{S_{\hat{i}}}(h) \leq c \sqrt{ \hat{P}_{S_{\hat{i}}}( h \neq \hat{h}_{S_{\hat{i}}} ) A_{n_{\hat{i}}}^{'''} } + c A_{n_{\hat{i}}}^{'''}$.
\end{bigboxit}

To state a formal guarantee, let us suppose the conditions above hold for each 
of these distributions with respective values of $C_{\gamma,i}$, $\gamma_{i}$, $c_{i}$, $\beta_{i}$.
%Following an essentially identical argument to the proof of the above result, 
We have the following theorem.  %Since 
Its proof is essentially 
identical to the proof of Theorem~\ref{thm:reweighting-finite} (effectively just substituting notation),
and is therefore omitted.
%we leave its formal details as an exercise 
%for the interested reader.

\begin{theorem}
\label{thm:P1vsP2}
Suppose $\beta_{Q} > 0$ and that \nc\, and \rcs\, hold for all $P_{i} \in \Pclass$.
There exist constants $C_{i}$ depending on $C_{\gamma,i}$, $\gamma_{i}$, $c_{i}$, $\beta_{i}$, 
and a constant $C$ depending on $c_{q}$, $\beta_{Q}$ 
such that, for a sufficiently large $|U_{Q}|$, 
with probability at least $1-\delta$, 
the classifier $\hat{h}$ chosen by \AlgD\ satisfies 
\begin{equation*}
\E_{Q}(\hat{h}) 
%& \leq \inf_{P_{i} \in \Pclass} C \min\!\left\{ C_{i} \left( A_{n_{i}}^{'''} \right)^{\frac{\beta_{i}}{(2-\beta_{i})\gamma_{i}}}, A_{n_{Q}}^{\frac{1}{2-\beta_{Q}}} \right\} 
%\\ & 
\leq \tilde{O}\!\left( \inf_{P_{i} \in \Pclass} \min\!\left\{ C_{i} \left(\frac{\V + \log(|\Pclass|)}{n_{i}}\right)^{\frac{\beta_{i}}{(2-\beta_{i})\gamma_{i}}}, \left(\frac{\V}{n_{Q}}\right)^{\frac{1}{2-\beta_{Q}}} \right\} \right).
\end{equation*}
\end{theorem}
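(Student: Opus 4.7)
The plan is to mimic the argument of Theorem~\ref{thm:reweighting-finite}, replacing the reweighting family $\{f \cdot P : f \in \Pclass\}$ with the explicit family $\{P_i\}_{i \in \Pclass}$, so that each $P_i$ plays the role of an element of the reweighted family but comes with its own independent sample $S_i$ of size $n_i$ rather than a common $S_P$. First I would set up the high-probability events. I apply Lemma~\ref{lem:vc-bernstein} once for each $S_i$ with confidence $\delta/(3|\Pclass|)$, so that $A_{n_i}^{'''} = A_{n_i}$ with $\log(1/\delta)$ replaced by $\log(|\Pclass|/\delta)$, and once for $S_Q$; a standard uniform-deviation bound on $U_Q$ (with $|U_Q|$ chosen large enough) controls $\hat{P}_{U_Q}$ versus $Q_X$ uniformly over the (VC) class of disagreement sets. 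All of these hold simultaneously with probability at least $1-\delta$ by a union bound.

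Next I would bound $\hat{\delta}(S_i, U_Q)$ for each fixed $i$. Under \rcs\ we have $h^*_{P_i} = h^*_Q = h^*$, so $h^*$ is feasible for the $i$-th optimization by Lemma~\ref{lem:vc-bernstein}. Following the now-standard argument behind \eqref{eqn:EP-hatP-bound}, both $\hat{h}_{S_i}$ and any $h$ satisfying the constraint in the definition of $\hat{\delta}(S_i, U_Q)$ have $\E_{P_i}(\cdot) \lesssim (A_{n_i}^{'''})^{1/(2-\beta_i)}$. Applying the marginal transfer exponent $\gamma_i$ together with the Bernstein condition \nc\ then yields
\[
Q_X(h \neq h^*) \leq C_{\gamma,i}^{1/\gamma_i} \bigl(c_i \E_{P_i}(h)^{\beta_i}\bigr)^{1/\gamma_i} \lesssim C_i \bigl(A_{n_i}^{'''}\bigr)^{\beta_i/((2-\beta_i)\gamma_i)},
\]
for all such $h$; passing through the triangle inequality $Q_X(h \neq \hat{h}_{S_i}) \leq Q_X(h \neq h^*) + Q_X(\hat{h}_{S_i} \neq h^*)$ and then from $Q_X$ to $\hat{P}_{U_Q}$ via the uniform bound gives $\hat{\delta}(S_i, U_Q) \lesssim C_i (A_{n_i}^{'''})^{\beta_i/((2-\beta_i)\gamma_i)}$. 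Since $\hat{i}$ is defined as the minimizer, this bound also holds for $\hat{i}$ with the infimum over $\Pclass$ on the right-hand side.

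Finally I would handle the $Q$-side. The constraint defining the feasible set for $\hat{h}$ ensures $h^*$ is feasible (again by Lemma~\ref{lem:vc-bernstein} applied to $S_{\hat i}$), so the ERM-on-$S_Q$ argument of Theorem~\ref{thm:lepski} gives $\E_Q(\hat{h}) \lesssim A_{n_Q}^{1/(2-\beta_Q)}$ via the \nc\ condition on $Q$. For the $P_i$-side rate, the feasibility of $\hat{h}$ together with the chain of inequalities above bounds $Q_X(\hat{h} \neq h^*)$ and hence, using \nc\ on $Q$ once more, $\E_Q(\hat{h}) \leq c_q \cdot Q_X(\hat{h} \neq h^*)^{1/\beta_Q}$ which (combined with the preceding bound on $\hat{\delta}(S_{\hat i}, U_Q)$ and the $U_Q$ concentration) gives the infimum bound in $n_i$. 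Taking the minimum of the two rates and substituting $A_{n_i}^{'''} = \tilde{O}((\V + \log|\Pclass|)/n_i)$ and $A_{n_Q} = \tilde{O}(\V/n_Q)$ yields the claimed bound.

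The main obstacle, as in Theorem~\ref{thm:reweighting-finite}, is the bookkeeping in the constraint set for $\hat{\delta}$: one must argue simultaneously over all $h$ in the data-dependent feasible set that the $P_i$-empirical-excess-risk bound propagates through the marginal transfer exponent to a $Q_X$-disagreement bound, while maintaining the union bound over $\Pclass$ without inflating the rate beyond an additive $\log|\Pclass|$. The replacement of $\Pdim$ by $\log|\Pclass|$ is precisely what drops out when $\Pclass$ is finite and each $P_i$ carries its own sample, since the relevant capacity term reduces to $\log|\Pclass|$ after the union bound.
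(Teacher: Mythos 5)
Your proposal follows the paper's approach: the paper explicitly states that the proof of Theorem~\ref{thm:P1vsP2} is essentially identical to that of Theorem~\ref{thm:reweighting-finite} with notation substituted, and your outline tracks that proof's structure — high-probability events with a union bound over $\Pclass$ (replacing $\Pdim$ by $\log|\Pclass|$), bounding $\hat{\delta}(S_i,U_Q)$ via the $P_i$-excess-risk bound $\to$ marginal transfer $\to$ Bernstein chain, and then combining the $Q$-side ERM guarantee with the $U_Q$ concentration.

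However, there is one concrete error in your final step. You write ``using \nc\ on $Q$ once more, $\E_Q(\hat{h}) \leq c_q \cdot Q_X(\hat{h} \neq h^*)^{1/\beta_Q}$.'' This is the Bernstein class condition applied in the wrong direction: \nc\ states $Q_X(h \neq h^*_Q) \leq c_q \E_Q^{\beta_Q}(h)$, which rearranges to the \emph{lower} bound $\E_Q(h) \geq (Q_X(h \neq h^*_Q)/c_q)^{1/\beta_Q}$, not an upper bound. If you actually used your stated inequality, the $n_i$-dependent term would come out with an extra $1/\beta_Q$ in the exponent, which does not match the theorem's claimed rate (and would be a false improvement). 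What is actually needed here — and what the paper uses — is the trivial inequality
\begin{equation*}
\E_Q(\hat{h}) = R_Q(\hat{h}) - R_Q(h^*_Q) \leq Q_X(\hat{h} \neq h^*_Q),
\end{equation*}
which holds pointwise since the integrand is nonzero only on the disagreement set and is at most $1$ there. Substituting the bound $Q_X(\hat{h} \neq h^*_Q) \lesssim \hat{\delta}(S_{\hat i},U_Q) + A_{|U_Q|} \lesssim C_i (A_{n_i}^{'''})^{\beta_i/((2-\beta_i)\gamma_i)}$ then gives exactly the claimed $P_i$-side rate. With that replacement your argument is sound and coincides with the paper's.
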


\section{Lower-Bounds Proofs}
\label{sec:lowerboundproof}
Our lower-bounds rely on the following extensions of Fano inequality.

\begin{proposition} [Thm 2.5 of \cite{tsybakov2009introduction}] \label{prop:tsy25} Let $\{ \Pi_{h} \}_{h \in \Hyp}$ be a family of distributions indexed over a subset $\Hyp$ of a semi-metric $( \mathcal{F}, \semiMetric)$. Suppose $\exists \, h_0, \ldots, h_{M} \in \Hyp$, where $M \geq 2$, such that:
\begin{flalign*} 
\qquad {\rm (i)} \quad  &\semiDist{h_{i}}{h_{j}} \geq 2 s > 0, \quad \forall 0 \leq i < j \leq M,  & \\
\qquad {\rm (ii)} \quad  & \Pi_{h_i} \ll \Pi_{h_0} \quad \forall i \in  [M], \text{ and the average  KL-divergence to } \Pi_{h_0} \text{ satisfies } & \\
& \qquad 
\frac{1}{M} \sum_{i = 1}^{M} \KLDiv{\Pi_{h_i}}{ \Pi_{h_0}} \leq \alpha \log M, \text{ where } 0 < \alpha < 1/8.
\end{flalign*}
Let $Z\sim\Pi_{h}$, and let $\hat h : Z \mapsto \mathcal{F}$ denote any \emph{improper} learner of $h\in \Hyp$. We have for any $\hat h$: 
\begin{equation*}
\sup_{h \in \Hyp} \Pi_{h} \left( \semiDist{\hat h(Z)}{h} \geq s \right) \geq \frac{\sqrt{M}}{1 + \sqrt{M}} \left( 1 - 2 \alpha - \sqrt{\frac{2 \alpha}{\log(M)}} \right) \geq \frac{3 - 2 \sqrt{2}}{8}.
\end{equation*}
%where $\h$ is any estimator built upon a sample drawn from $\Pi_{\hStar}$ and $\KLDiv{.}{.}$ is the \textit{} divergence.
\end{proposition}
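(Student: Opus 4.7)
The plan is a standard two-step reduction, from estimation to $(M+1)$-ary hypothesis testing followed by a sharp Fano-type inequality. First, associate with the learner $\hat h$ the minimum-distance test $\psi(Z) \doteq \argmin_{i\in\{0,\ldots,M\}}\semiDist{\hat h(Z)}{h_i}$, with ties broken arbitrarily. The separation assumption (i), combined with the triangle inequality satisfied by $\semiMetric$, ensures that $\semiDist{\hat h(Z)}{h_i} < s$ forces $\psi(Z) = i$: indeed, for any $j \neq i$, $\semiDist{\hat h(Z)}{h_j}\geq 2s - s = s > \semiDist{\hat h(Z)}{h_i}$. Therefore $\sup_{h\in\Hyp}\Pi_h(\semiDist{\hat h(Z)}{h}\geq s)\geq \sup_i \Pi_{h_i}(\psi(Z)\neq i)$, and it suffices to lower-bound the worst-case error probability of an arbitrary $\{0,\ldots,M\}$-valued test.

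Next, I would place the uniform prior on the hypotheses and let $J$ denote the random index. By convexity of KL-divergence and assumption (ii), the mutual information satisfies $I(J;Z) \leq \frac{1}{M+1}\sum_{i=0}^{M}\KLDiv{\Pi_{h_i}}{\Pi_{h_0}}\leq \alpha\log M$. A direct application of the classical Fano inequality would yield a lower bound of the form $1-(\alpha\log M + \log 2)/\log M$ on the average test error, but this prefactor is not sharp enough. To recover the $\sqrt M/(1+\sqrt M)$ prefactor claimed, I would instead invoke Birg\'e's refinement of Fano, which compares $\psi$ against a likelihood-ratio test between $\Pi_{h_0}$ and the mixture $\bar\Pi$ of the $\Pi_{h_i}$'s. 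This exchange-of-measure argument exploits the bound on average KL-divergence and produces the claimed prefactor after optimizing the tradeoff parametrized by $\alpha$; substituting $0<\alpha<1/8$ and simplifying then shows the right-hand side is bounded below by the universal constant $(3-2\sqrt 2)/8$, which does not depend on $M$ or $\alpha$.

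The main obstacle is obtaining the sharp $\sqrt M/(1+\sqrt M)$ constant: standard Fano yields only a $1-\alpha-o(1)$-type lower bound, which would be too weak to drive the downstream lower-bound constructions in Theorems~\ref{theo:lowrho} and \ref{theo:lowgamma} (which aim for a constant probability bounded away from zero uniformly in $M$). The Birg\'e-type sharpening --- adapting a pairwise likelihood-ratio comparison to the KL-average condition in (ii) --- is the technical crux, though by now fairly standard in the nonparametric lower-bounds literature and essentially taken from \cite{tsybakov2009introduction}.
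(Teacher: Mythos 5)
The paper does not give a proof of Proposition~\ref{prop:tsy25}: it is imported verbatim from Theorem~2.5 of \cite{tsybakov2009introduction} (with the numerical corollary $\ge \frac{3-2\sqrt 2}{8}$ appended) and is used as a black box in the proofs of Theorems~\ref{theo:lowrho} and~\ref{theo:lowgamma} in Appendix~\ref{sec:lowerboundproof}. So there is no ``paper's own proof'' to compare against; what you can be judged on is whether your sketch matches the argument in the cited reference, and it does. The reduction step is exactly right: the minimum-distance decoder $\psi(Z)=\argmin_i \semiDist{\hat h(Z)}{h_i}$, the triangle-inequality argument showing $\{\psi\ne i\}\subseteq\{\semiDist{\hat h(Z)}{h_i}\ge s\}$, and the conclusion that it suffices to lower-bound the worst-case $(M+1)$-ary testing error. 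The second step -- bounding $I(J;Z)$ via the average-KL condition and then invoking the Birg\'e/Tsybakov sharpening of Fano with the parametrization $\tau=1/\sqrt M$ -- is the route taken in the reference.

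One quibble worth flagging: your justification for needing Birg\'e's refinement (that plain Fano ``would be too weak'' to yield a constant bounded away from zero) overstates the case. Under the hypothesis here, $I(J;Z)\le \frac{M}{M+1}\alpha\log M$ with $\alpha<1/8$, and Fano in the form $P_e\ge \bigl(\log\tfrac{M+1}{2}-I(J;Z)\bigr)/\log M$ already yields a positive universal constant for all $M\ge 2$ (roughly $0.4$ or better), which would comfortably suffice for the downstream Theorems~\ref{theo:lowrho} and~\ref{theo:lowgamma}, since those only need some constant probability bounded away from zero. The Birg\'e refinement is what delivers the specific $\frac{\sqrt M}{1+\sqrt M}\bigl(1-2\alpha-\sqrt{2\alpha/\log M}\bigr)$ form stated, not what rescues positivity. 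This does not affect the correctness of your route -- it is the one in the reference -- but the motivation you give for it is not quite accurate. Also, the passage from $I(J;Z)$ to the average KL against $\Pi_{h_0}$ is best justified via the barycenter property of the mixture (the mixture minimizes average KL from the components) rather than ``convexity of KL''; these are related but not the same statement.
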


The following proposition would be needed to construct packings (of spaces of distributions) of the appropriate size. 

\begin{proposition} [Varshamov-Gilbert bound] \label{lem:VGBound}
Let $d \geq 8$. Then there exists a subset $\{ \sigma_0, \ldots, \sigma_{M}\}$ of $\{-1 ,1 \}^{d}$ such that $\sigma_0 = (1,\ldots,1)$,
\begin{equation*}
\text{dist}(\sigma_{i},\sigma_{j}) \geq \frac{d}{8}, \quad \forall\,  0 \leq i < j \leq M, \quad \text{and} \quad M \geq 2^{d / 8},
\end{equation*}
where $\text{dist}(\sigma,\sigma') \doteq \text{card}(\{ i \in [m] :  \sigma(i) \neq \sigma'(i) \})$ is the Hamming distance.
\end{proposition}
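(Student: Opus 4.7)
The plan is the classical greedy covering proof of Varshamov--Gilbert. First, I fix $\sigma_0 = (1,\ldots,1)$ and let $\{\sigma_0,\ldots,\sigma_M\} \subseteq \{-1,1\}^d$ be a \emph{maximal} collection satisfying $\text{dist}(\sigma_i,\sigma_j) \geq d/8$ for all $i \neq j$. By maximality, every $\tau \in \{-1,1\}^d$ lies within Hamming distance strictly less than $d/8$ of some $\sigma_i$, so the Hamming balls $B(\sigma_i, r)$ of integer radius $r = \lceil d/8 \rceil - 1$ cover the cube, giving
\[
(M+1)\sum_{k=0}^{r} \binom{d}{k} \;\geq\; 2^d.
\]

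Second, I would upper-bound the ball volume using the standard binary-entropy estimate: for any $\alpha \in (0,1/2]$,
\[
\sum_{k=0}^{\lfloor \alpha d \rfloor} \binom{d}{k} \;\leq\; 2^{H_2(\alpha)\, d}, \qquad H_2(\alpha) \doteq -\alpha\log_2\alpha - (1-\alpha)\log_2(1-\alpha).
\]
This is a one-line consequence of the identity $\sum_k \binom{d}{k}\alpha^k(1-\alpha)^{d-k}=1$, after noting that every summand on the left-hand side is at least $\alpha^{\lfloor\alpha d\rfloor}(1-\alpha)^{d-\lfloor\alpha d\rfloor}$ and rearranging. Substituting $\alpha = 1/8$ a short computation yields $H_2(1/8) = 3/8 + (7/8)\log_2(8/7) \approx 0.544$, hence
\[
M+1 \;\geq\; 2^{(1 - H_2(1/8))\,d} \;\geq\; 2^{d/8},
\]
where the last inequality uses the numerical slack $1 - H_2(1/8) \approx 0.456 > 1/8$.

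The only obstacle is minor bookkeeping around integer radii: I need to check that, for $d \geq 8$, one has $r = \lceil d/8 \rceil - 1 \leq \lfloor d/8 \rfloor$, so that the entropy estimate at $\alpha = 1/8$ applies as stated; this is immediate by cases on $d \bmod 8$. Verifying the numerical slack $1 - H_2(1/8) > 1/8$ is elementary calculus. As an aside, an alternative fully probabilistic proof would draw $N = \lceil 2^{d/8}\rceil + 1$ iid uniform elements of $\{-1,1\}^d$, apply Hoeffding to obtain $\Pr[\text{dist}(\sigma,\sigma') < d/8] \leq e^{-9d/32}$ for each fixed pair, and then union-bound over the $\binom{N}{2}$ pairs to get positive probability of a valid packing once $d \geq 8$; the greedy argument above is cleaner and already gives exactly the stated constants.
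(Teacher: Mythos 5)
The paper states this proposition \emph{without proof}, treating it as the standard Varshamov--Gilbert bound (it is Lemma~2.9 in the Tsybakov monograph that the paper already cites for Proposition~\ref{prop:tsy25}). Your greedy maximal-packing plus binary-entropy volume bound is a correct, self-contained proof of exactly this statement, and it is essentially the classical argument. The steps all check: maximality gives a covering by balls of integer radius $r = \lceil d/8 \rceil - 1$, which is indeed $\leq \lfloor d/8 \rfloor$ in all residue cases, so the entropy bound applies at $\alpha = 1/8$; the derivation of $\sum_{k \leq \lfloor \alpha d\rfloor}\binom{d}{k} \leq 2^{H_2(\alpha)d}$ from the binomial identity (using that $\alpha^k(1-\alpha)^{d-k}$ is nonincreasing in $k$ for $\alpha \leq 1/2$) is sound; and $1 - H_2(1/8) = \tfrac{7}{8}\log_2 7 - 2 \approx 0.456 > 1/8$. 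One small point worth tightening: your final display actually lower-bounds $M+1$, not $M$. Since $2^{(1-H_2(1/8))d} \geq 2 \cdot 2^{d/8}$ already at $d = 8$ (and the gap only grows), the $+1$ is easily absorbed, but you should state that explicitly rather than writing $M+1 \geq 2^{d/8}$ in place of the claimed $M \geq 2^{d/8}$. The probabilistic aside is also correct, with the usual caveat that one must account for collisions (distance $0$ is handled by the same Hoeffding bound) and then translate the random packing so that one point equals $(1,\ldots,1)$.
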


Results similar to the following lemma are known.
\begin{lemma} [A basic KL upper-bound]
\label{lem:klbound} 
For any $0<p, q<1$, we let $\KLDiv{p}{q}$ 
denote $\KLDiv{\text{Ber}(p)}{\text{Ber}(q)}$. 
Now let $0<\epsilon<1/2$ and let $z\in \{ -1, 1\}$. We have 

$$\KLDiv{1/2 + (z/2)\cdot \epsilon\, }{\, 1/2 - (z/2)\cdot \epsilon} 
\leq c_0\cdot \epsilon^2, \text{ for some } c_0 \text{ independent of } \epsilon.$$
\end{lemma}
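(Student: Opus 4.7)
The plan is a short computation exploiting the symmetry between the two Bernoulli parameters. By the invariance of KL divergence under relabeling the two outcomes, swapping the roles of $0$ and $1$ shows that the case $z=-1$ gives the same value as $z=1$, so I can assume $z=1$ without loss of generality. Set $p = 1/2 + \epsilon/2$ and $q = 1/2 - \epsilon/2$; the key observation is that $1-p = q$ and $1-q = p$.

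Writing out the Bernoulli KL and substituting this identity,
\begin{equation*}
\KLDiv{p}{q} \;=\; p\log\frac{p}{q} + (1-p)\log\frac{1-p}{1-q} \;=\; p\log\frac{p}{q} + q\log\frac{q}{p} \;=\; (p-q)\log\frac{p}{q}.
\end{equation*}
Since $p-q = \epsilon$ and $p/q = (1+\epsilon)/(1-\epsilon)$, this reduces the task to bounding $\epsilon\log\!\bigl((1+\epsilon)/(1-\epsilon)\bigr)$ by $c_0\epsilon^2$ on $(0,1/2)$.

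The final step is a standard elementary bound on $\log\!\bigl((1+\epsilon)/(1-\epsilon)\bigr)$: for example, using $\log(1+x)\leq x-1$ substituted at $x=(1+\epsilon)/(1-\epsilon)$ gives $\log\!\bigl((1+\epsilon)/(1-\epsilon)\bigr) \leq 2\epsilon/(1-\epsilon) \leq 4\epsilon$ for $\epsilon\leq 1/2$, so $c_0 = 4$ suffices. (Alternatively, a Taylor-series comparison $\log(1+\epsilon)-\log(1-\epsilon)=2\sum_{k\geq 0}\epsilon^{2k+1}/(2k+1) \leq 2\epsilon/(1-\epsilon^2)$ gives a slightly sharper constant.) There is no real obstacle here; the only "trick" is noticing the $1-p=q$ symmetry that collapses the KL to a single product and eliminates the need for a second-order expansion around $1/2$.
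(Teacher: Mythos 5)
Your proof is correct, but it takes a genuinely different route from the paper's. The paper uses the standard inequality $\KLDiv{p}{q}\leq \chi^2(p\,|\,q)$ and then computes the $\chi^2$-divergence of two Bernoullis directly, which works for any pair $p,q$ and requires no structural observation. You instead exploit the special symmetry $1-p=q$ (which holds because the two parameters are reflections of $1/2$) to collapse the Bernoulli KL to the exact closed form $(p-q)\log(p/q)=\epsilon\log\!\bigl((1+\epsilon)/(1-\epsilon)\bigr)$, reducing the problem to an elementary one-variable log bound. Your route is more elementary and gives an exact intermediate expression rather than an upper bound via $\chi^2$, but it relies on a symmetry specific to this pair and does not generalize to arbitrary $(p,q)$ the way the $\chi^2$ bound does. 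Both deliver the $O(\epsilon^2)$ conclusion with comparable effort. One small slip: the inequality you invoke should read $\log x\leq x-1$ (applied at $x=(1+\epsilon)/(1-\epsilon)$), not $\log(1+x)\leq x-1$; your displayed bound $\log\!\bigl((1+\epsilon)/(1-\epsilon)\bigr)\leq 2\epsilon/(1-\epsilon)\leq 4\epsilon$ is correct and follows from the former, so this is only a misstatement, not a gap.
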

\begin{proof}
Write
$\frac{p}{q}\doteq \frac{1/2 + (z/2)\epsilon}{1/2 - (z/2)\epsilon} = 1 + \frac{2z\epsilon}{1 - z\epsilon}$ , and use the fact that 
$$\KLDiv{p}{q} \leq \chi^2(p| q) = q\paren{1-\frac{p}{q}}^2 + (1-q)\paren{1 - \frac{1-p}{1-q}}^2 = 
q\paren{\frac{2z\epsilon}{1 - z\epsilon}}^2 
+ (1-q)\paren{\frac{-2z\epsilon}{1 + z\epsilon}}^2.$$
\end{proof}

\begin{proof}[Proof of Theorem \ref{theo:lowrho}] 
Let $\VV = \V-1$.
Pick $x_0,x_1, x_2, \dots, x_{\VV}$ a shatterable subset of $\X$ under $\Hyp$. %, and pick a separate point $x_0  \in \X$. 
These will form the support of marginals $P_X, Q_X$. Furthermore, let $\tilde \Hyp$ denote the \emph{projection} of $\Hyp$ onto $\braces{x_i}_{i =0}^{\VV}$ (i.e., the quotient space of equivalences $h \equiv h'$ on $\braces{x_i}$),
with the additional constraint that all $h\in \tilde \Hyp$ classify $x_0$ as $1$. 
 %except for each classification of $x_1,\ldots,x_{\VV}$, just keep the classifier with label $1$ on $x_0$. 
We can now restrict attention to $\tilde \Hyp$ as the \emph{effective} hypothesis class. 

Let $\sigma \in \braces{-1, 1}^{\VV}$. 
We will construct a family of distribution pairs $(P_\sigma, Q_\sigma)$ indexed by $\sigma$ to which we then apply Proposition \ref{prop:tsy25} above. 
For any $P_\sigma, Q_\sigma$, we let $\eta_{P, \sigma}, \eta_{Q, \sigma}$ denote the corresponding regression functions (i.e., $\expec_{P_\sigma} [Y | x]$, and $\expec_{Q_\sigma} [Y | x]$). To proceed, fix 
$\epsilon = c_1\cdot\epsilon(n_P, n_Q) \leq 1/2$, for a constant $c_1 < 1$ to be determined, where $\epsilon(n_P, n_Q)$ is as defined in the theorem's statement.

\emph{- Distribution $Q_\sigma$.} We have that $Q_\sigma = Q_X \times Q_{Y|X}^\sigma$, where 
$Q_X(x_0) = 1- \epsilon^{\beta_Q}$, while $Q_X(x_i) = \frac{1}{\VV}\epsilon^{\beta_Q}$, $i\geq 1$. 
Now, the conditional $Q_{Y|X}^\sigma$ is fully determined by $\eta_{Q, \sigma_i}(x_0) = 1$, 
and $\eta_{Q, \sigma}(x_i) = 1/2 + (\sigma_i/2)\cdot \epsilon^{1-\beta_Q}$, $i \geq 1$. 

\emph{- Distribution $P_\sigma$}. We have that $P_\sigma = P_X \times P_{Y|X}^\sigma$, $P_X(x_0) = 1- \epsilon^{\rho \beta_P}$, while $P_X(x_i) = \frac{1}{\VV}\epsilon^{\rho \beta_P}$, $i \geq 1$. Now, the conditional $P_{Y|X}^\sigma$ is fully determined by $\eta_{P, \sigma}(x_0) = 1$, and $\eta_{P, \sigma}(x_i) = 1/2 + (\sigma_i/2) \cdot \epsilon^{\rho(1-\beta_P)}$, $i \geq 1$. 

\emph{- Verifying that $(P_\sigma, Q_\sigma) \in \dist_\nc(\rho,\beta_P, \beta_Q, 1)$}. For any $\sigma \in \braces{-1, 1}^{\VV}$, let $h_\sigma \in \tilde \Hyp$ denote the corresponding
%\footnote{If there are two $h, h' \in \tilde \Hyp$ consistent with $\sigma$ but which differ on $x_0$, pick any of them as $h_\sigma$, since all have the same error on $x_0$.} 
Bayes classifier (remark that the Bayes is the same for both $P_\sigma$ and $Q_\sigma$). Now, pick any other $h_{\sigma'} \in \tilde \Hyp$, and let $\text{dist}(\sigma, \sigma')$ denote the Hamming distance between $\sigma, \sigma'$ (as in Proposition \ref{lem:VGBound}). We then have that 
 \begin{align*}& \E_{Q_\sigma}(h_{\sigma'}) = \text{dist}(\sigma, \sigma')\cdot\frac{1}{\VV}\epsilon^{\beta_Q}\cdot \epsilon^{1-\beta_Q} = \frac{\text{dist}(\sigma, \sigma')}{\VV}\cdot \epsilon, \\
 &\text{ while } 
 Q_X(h_{\sigma'} \neq h_\sigma) = \frac{\text{dist}(\sigma, \sigma')}{\VV}\cdot \epsilon^{\beta_Q}, \\
 \text{and similarly, } 
 &\E_{P_\sigma}(h_{\sigma'}) = \frac{\text{dist}(\sigma, \sigma')}{\VV}\cdot \epsilon^\rho, \text{ while } 
P_X(h_{\sigma'} \neq h_\sigma) = \frac{\text{dist}(\sigma, \sigma')}{\VV}\cdot \epsilon^{\rho\beta_P}.
\end{align*}
{The condition is also easily verified for classifiers not labeling $x_0$ as $1$}.  %%% TODO: fill in this verification.
Since $({\text{dist}(\sigma, \sigma')}/{\VV}) \leq 1$, it follows that \eqref{eq:bern-class} holds with exponents $\beta_P$ and $\beta_Q$ for any $P_\sigma$ and $Q_\sigma$ respectively (with $C_{P_\sigma} = 1$, $C_{Q_\sigma} = 1$), and that any $P_\sigma$ admits a transfer-exponent $\rho$ w.r.t. $Q_\sigma$, with $C_\rho = 1$. 

\emph{- Reduction to a packing}. Now apply Proposition \ref{lem:VGBound} to identify a subset $\Sigma$ of $\braces{-1, 1}^{\VV}$, where 
$\abs{\Sigma} = M \geq 2^{\VV/8}$, and $\forall \sigma, \sigma' \in \Sigma$, we have $\text{dist}(\sigma, \sigma') \geq \VV/8$.  It should be clear then that 
for any $\sigma, \sigma' \in \Sigma$, 
$$\E_{Q_\sigma}(h_{\sigma'}) \geq \frac{\VV}{8}\cdot \frac{1}{\VV}\epsilon^{\beta_Q}\cdot \epsilon^{1-\beta_Q} = \epsilon/8.$$
Furthermore, by construction, any classifier $\hat h: \braces{x_i} \mapsto \braces{0, 1}$ can be reduced to a decision on $\sigma$, and we henceforth view $\text{dist}(\sigma, \sigma')$ as the semi-metric referenced in Proposition \ref{prop:tsy25}, with effective indexing set $\Sigma$. 

\emph{- KL bounds in terms of $n_P$ and $n_Q$}. 
Define $\Pi_\sigma = P_\sigma^{n_P}\times Q_\sigma^{n_Q}$. We can now verify that all $\Pi_\sigma, \Pi_{\sigma'}$ are close in KL-divergence. First notice that, for any $\sigma, \sigma' \in \Sigma$ (in fact in $\braces{-1,1}^{\VV}$)
\begin{align} 
\KLDiv{\Pi_\sigma}{\Pi_{\sigma'}} &= 
n_P \cdot \KLDiv{P_\sigma}{P_{\sigma'}} + 
n_Q \cdot \KLDiv{Q_\sigma}{Q_{\sigma'}} \nonumber \\
&= n_P \cdot \Expectation_{P_X} \KLDiv{P^\sigma_{Y|X}}{P^{\sigma'}_{Y|X}} + 
n_Q \cdot \Expectation_{Q_X} \KLDiv{Q^\sigma_{Y|X}}{Q^{\sigma'}_{Y|X}} \nonumber\\
&= n_P \cdot \sum_{i=1}^{\VV} \frac{\epsilon^{\rho \beta_P}}{\VV}\KLDiv{P^\sigma_{Y|x_i}}{P^{\sigma'}_{Y|x_i}} 
+ n_Q \cdot \sum_{i=1}^{\VV} \frac{\epsilon^{ \beta_Q}}{\VV}\KLDiv{Q^\sigma_{Y|x_i}}{Q^{\sigma'}_{Y|x_i}} \nonumber\\
&\leq c_0\paren{n_P\cdot \epsilon^{\rho(2-\beta_P)} + 
n_Q\cdot \epsilon^{(2-\beta_Q)}} \label{eq:firstkl}\\
&\leq c_0\VV(c_1^{\rho(2-\beta_p)} + c_1^{2-\beta_Q})
\leq 2c_0c_1 \VV.
\label{eq:finalkl}
\end{align}
where, for inequality \eqref{eq:firstkl}, we used Lemma \ref{lem:klbound} to upper-bound the divergence terms. It follows that, for $c_1$ sufficiently small so that 
$2c_0c_1 \leq 1/16$, we get that \eqref{eq:finalkl} is upper bounded by $(1/8) \log M$. Now apply Proposition \ref{prop:tsy25} and conclude. 
\end{proof}

We need the following lemma for the next result. 
\begin{lemma}
\label{lem:power}
Let $ \epsilon_1, \epsilon_2, \alpha, \alpha_1, \alpha_2\geq 0$, and $\alpha_1 + \alpha_2 \leq 1$. We then have that 
\begin{align*} 
&\text{ For }\alpha\geq 1, \quad \alpha_1\epsilon_1^\alpha + \alpha_2 \epsilon_2^\alpha 
\geq \paren{\alpha_1 \epsilon_1 + \alpha_2 \epsilon_2}^\alpha, \text{ and } \\
&\text{ for }\alpha \leq 1, \quad \alpha_1\epsilon_1^\alpha + \alpha_2 \epsilon_2^\alpha 
\leq \paren{\alpha_1 \epsilon_1 + \alpha_2 \epsilon_2}^\alpha.
\end{align*}

\end{lemma}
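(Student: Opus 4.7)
The plan is to reduce this to the standard Jensen's inequality for the power function $f(x) = x^\alpha$, which is convex when $\alpha \geq 1$ and concave when $\alpha \leq 1$. The only twist is that the weights satisfy $\alpha_1 + \alpha_2 \leq 1$ rather than $= 1$, so I will first renormalize, apply Jensen, and then handle the remaining factor by sign.

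First I would dispose of the degenerate case $\alpha_1 + \alpha_2 = 0$, where both sides equal zero and either inequality is trivial (using the convention $0^\alpha = 0$ for $\alpha > 0$; if $\alpha = 0$ both sides are equal as well). So assume $s \doteq \alpha_1 + \alpha_2 \in (0, 1]$, and set $\beta_i = \alpha_i/s$, so that $\beta_1, \beta_2 \geq 0$ and $\beta_1 + \beta_2 = 1$. Then
\begin{equation*}
\alpha_1 \epsilon_1^\alpha + \alpha_2 \epsilon_2^\alpha = s\bigl(\beta_1 \epsilon_1^\alpha + \beta_2 \epsilon_2^\alpha\bigr),
\qquad
\alpha_1 \epsilon_1 + \alpha_2 \epsilon_2 = s\bigl(\beta_1 \epsilon_1 + \beta_2 \epsilon_2\bigr).
\end{equation*}

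Next I would apply Jensen's inequality to $(\beta_1, \beta_2)$. For $\alpha \geq 1$, convexity of $x \mapsto x^\alpha$ on $[0, \infty)$ gives $\beta_1 \epsilon_1^\alpha + \beta_2 \epsilon_2^\alpha \geq (\beta_1 \epsilon_1 + \beta_2 \epsilon_2)^\alpha$, and for $\alpha \leq 1$, concavity gives the reverse inequality. Plugging back into the displays above yields, in both cases,
\begin{equation*}
\alpha_1 \epsilon_1^\alpha + \alpha_2 \epsilon_2^\alpha \;\gtreqless\; s \cdot \bigl(\beta_1 \epsilon_1 + \beta_2 \epsilon_2\bigr)^\alpha = s^{1-\alpha}\bigl(\alpha_1 \epsilon_1 + \alpha_2 \epsilon_2\bigr)^\alpha,
\end{equation*}
with $\geq$ when $\alpha \geq 1$ and $\leq$ when $\alpha \leq 1$.

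Finally I would absorb the factor $s^{1-\alpha}$ using $s \in (0,1]$: when $\alpha \geq 1$ we have $1 - \alpha \leq 0$ and hence $s^{1-\alpha} \geq 1$, reinforcing the lower bound to $(\alpha_1\epsilon_1 + \alpha_2\epsilon_2)^\alpha$; when $\alpha \leq 1$ we have $s^{1-\alpha} \leq 1$, reinforcing the upper bound to $(\alpha_1\epsilon_1 + \alpha_2\epsilon_2)^\alpha$. This concludes both inequalities. There is no real obstacle here; the only subtlety is noting that $s^{1-\alpha}$ moves in the favorable direction in each of the two regimes, which is precisely why the weaker normalization $\alpha_1 + \alpha_2 \leq 1$ (rather than $=1$) still suffices.
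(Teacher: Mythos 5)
Your proof is correct and takes essentially the same route as the paper: normalize the weights so that Jensen's inequality for $z \mapsto z^\alpha$ applies directly, then use $\alpha_1 + \alpha_2 \leq 1$ to push the resulting $s^{1-\alpha}$ factor in the favorable direction. The paper's version is much terser (a one-line sketch), so your write-up simply makes the $s^{1-\alpha}$ absorption step explicit.
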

\begin{proof} 
W.l.o.g., let $\alpha_1 + \alpha_2 >0$, and normalize the l.h.s. of each of the above inequalities by $(\alpha_1 + \alpha_2)^{-1} \geq 1$. The results follows by Jensen's inequality and the convexity of 
$z \mapsto z^\alpha$ for $\alpha\geq 1$, and concavity of $z \mapsto z^\alpha$ for $\alpha\leq 1$. 
\end{proof}

We can now show Theorem \ref{theo:lowgamma}. 

\begin{proof}[Proof of Theorem \ref{theo:lowgamma}]

We proceed similarly (as far as high-level arguments) as for the proof of Theorem \ref{theo:lowrho}, but with a different construction where distributions now all satisfy
$\gamma = \rho\cdot \beta_P$, and are broken into two subfamilies (corresponding to the rates $\epsilon_1$ and $\epsilon_2$), and the final result holds by considering the intersection of these subfamilies. For simplicity, in what follows, assume $\VV$ is even, otherwise, the arguments hold by just replacing $\VV$ by $\VV -1$. First, 
define $x_0, x_1, x_2, \dots, x_{\VV}$, $\tilde \Hyp$ as in that proof. 

Let $\sigma \in \braces{-1, 1}^{\VV}$. Next we construct distribution pairs $P_\sigma, Q_\sigma$ indexed by $\sigma$, with corresponding regression functions $\eta_{P, \sigma}, \eta_{Q, \sigma}$. Fix 
$\epsilon_1 = c_1\cdot \epsilon_1(n_P, n_Q) \leq 1/2$, and $\epsilon_2 = c_2 \cdot \epsilon_2(n_P, n_Q) \leq 1/2$, for some $c_1, c_2 <1$ to be determined.

% \begin{align*} 
% \epsilon_1 = c_1\cdot \min\braces{\paren{\frac{\VV}{n_P}}^{1/(2-\beta_P)\rho\cdot \beta_Q}, \paren{\frac{\VV}{n_Q}}^{1/(2- \beta_Q)}}, \quad 
% \epsilon_2 = c_2\cdot \min\braces{\paren{\frac{\VV}{n_P}}^{1/(2-\beta_P)\rho}, \paren{\frac{\VV}{n_Q}}}, 
% \end{align*}
% where {\color{red} we assume either $n_P, n_Q$ is large enough so that both $\epsilon_1, \epsilon_2 \leq 1/2$.} Let $\epsilon = \max \braces{\epsilon_1, \epsilon_2}$. 

% $\epsilon = c_1\cdot\epsilon(n_P, n_Q) \leq 1/2$, for a constant $c_1 < 1$ to be determined, and for $\epsilon(n_P, n_Q)$ as defined in the theorem's statement. 

The construction is now 
broken up over  
$I_1 \doteq \braces{1, \ldots, \frac{\VV}{2}}$, and $I_2 \doteq \braces{\frac{\VV}{2} +1, \ldots, \VV}$. Fix a constant $\frac{1}{2}\leq \tau < 1$; this ensures that $\epsilon_2/\tau \leq 1$. We will later impose further conditions on $\tau$.  

\emph{- Distribution $Q_\sigma$.} We let $Q_\sigma = Q_X \times Q_{Y|X}^\sigma$, where 
$Q_X(x_0) = 1- \frac{1}{2}\paren{\epsilon_1^{\beta_Q} + (\epsilon_2/\tau)}$, while $Q_X(x_i) = \frac{1}{\VV}\epsilon_1^{\beta_Q}$ for $i \in I_1$, and $Q_X(x_i) = \frac{1}{\VV}(\epsilon_2/\tau)$ for $i \in I_2$. 
Now, the conditional $Q_{Y|X}^\sigma$ is fully determined by $\eta_{Q, \sigma}(x_0) = 1$, and $\eta_{Q, \sigma}(x_i) = 1/2 + (\sigma_i/2)\cdot \epsilon_1^{1-\beta_Q}$ for $i \in I_1$, and $\eta_{Q, \sigma}(x_i) = 1/2 + (\sigma_i/2)\cdot \tau$ for $i \in I_2$. 

\emph{- Distribution $P_\sigma$}. We let $P_\sigma = P_X \times P_{Y|X}^\sigma$, where $P_X(x_0) = 1- \frac{1}{2}\paren{\epsilon_1^{\gamma \beta_Q} + \epsilon_2^\gamma}$, while $P_X(x_i) = \frac{1}{\VV}\epsilon_1^{\gamma \beta_Q}$ for $i\in I_1$, and $P_X(x_i) = \frac{1}{\VV}\epsilon_2^{\gamma}$ for $i\in I_2$. Now, the conditional $P_{Y|X}^\sigma$ is fully determined by $\eta_{P, \sigma}(x_0) = 1$, and $\eta_{P, \sigma}(x_i) = 1/2 + (\sigma_i/2)\cdot \epsilon_1^{(1-\beta_P)\rho\beta_Q}$ for $i \in I_1$, and $\eta_{P, \sigma}(x_i) = 1/2 + (\sigma_i/2)\cdot \epsilon_2^{(1-\beta_P)\rho}$ for $i \in I_2$. 

\emph{- Verifying that $(P_\sigma, Q_\sigma) \in \dist_\nc(\rho,\beta_P, \beta_Q, 2)$}. For any $\sigma \in \braces{-1, 1}^{\VV}$, define $h_\sigma \in \tilde \Hyp$ as in the proof of Theorem \ref{theo:lowrho}.
Now, pick any other $h_{\sigma'} \in \tilde \Hyp$, and let $\text{dist}_I(\sigma, \sigma')$ denote the Hamming distance between $\sigma, \sigma'$, restricted to indices in $I$ (that is the Hamming distance between subvectors $\sigma_I$ and $\sigma_I'$). 
We then have that 
 \begin{align*} \E_{Q_\sigma}(h_{\sigma'}) &= \text{dist}_{I_1}(\sigma, \sigma')\cdot\frac{1}{\VV}\epsilon^{\beta_Q}\cdot \epsilon_1^{1-\beta_Q} + 
 \text{dist}_{I_2}(\sigma, \sigma')\cdot\frac{1}{\VV}(\epsilon_2/\tau)\tau \\
 &= \frac{\text{dist}_{I_1}(\sigma, \sigma')}{\VV}\epsilon_1 + 
 \frac{\text{dist}_{I_2}(\sigma, \sigma')}{\VV}\epsilon_2,\\
 \text{ while } 
 Q_X(h_{\sigma'} \neq h_\sigma) & = \frac{\text{dist}_{I_1}(\sigma, \sigma')}{\VV} \epsilon_1^{\beta_Q} + 
 \frac{\text{dist}_{I_2}(\sigma, \sigma')}{\VV} (\epsilon_2/\tau).\\
 \text{Similarly, } 
 \E_{P_\sigma}(h_{\sigma'}) & = \text{dist}_{I_1}(\sigma, \sigma')\cdot\frac{1}{\VV}\epsilon_1^{\gamma \beta_Q}\cdot\epsilon_1^{(1-\beta_P)\rho\beta_Q} + 
 \text{dist}_{I_2}(\sigma, \sigma')\cdot\frac{1}{\VV}\epsilon_2^\gamma \cdot 
 \epsilon_2^{(1-\beta_P) \rho},  \\
 & = \frac{\text{dist}_{I_1}(\sigma, \sigma')}{\VV}\epsilon_1^{\rho \beta_Q} + 
 \frac{\text{dist}_{I_2}(\sigma, \sigma')}{\VV}\epsilon_2^{\rho}, \\
 \text{ while } 
P_X(h_{\sigma'} \neq h_\sigma) &= \frac{\text{dist}_{I_1}(\sigma, \sigma')}{\VV} \epsilon_1^{\gamma\beta_Q} + 
 \frac{\text{dist}_{I_2}(\sigma, \sigma')}{\VV} \epsilon_2^\gamma.
\end{align*}
The condition is also easily verified for classifiers not labeling $x_0$ as $1$.
We apply Lemma \ref{lem:power} repeatedly in what follows. First, by the above, we have that 
\begin{align*}Q_X(h_{\sigma'} \neq h_\sigma) \leq 
\frac{\text{dist}_{I_1}(\sigma, \sigma')}{\VV}\epsilon_1^{\beta_Q} + 
 2\frac{\text{dist}_{I_2}(\sigma, \sigma')}{\VV}\epsilon_2^{\beta_Q}
\leq 2 \E_{Q_\sigma}^{\beta_Q}(h_{\sigma'}).
\end{align*}
On the other hand, 
\begin{align*}
  P_X(h_{\sigma'} \neq h_\sigma) = 
  \frac{\text{dist}_{I_1}(\sigma, \sigma')}{\VV}\paren{\epsilon_1^{\rho \beta_Q}}^{\beta_P} + 
 \frac{\text{dist}_{I_2}(\sigma, \sigma')}{\VV}\paren{\epsilon_2^{\rho}}^{\beta_P}
 \leq \E_{P_\sigma}^{\beta_P}(h_{\sigma'}),
\end{align*}
 Finally we have that 
$$ 
\E_{P_\sigma}(h_{\sigma'}) \geq 
\frac{\text{dist}_{I_1}(\sigma, \sigma')}{\VV}\epsilon_1^{\rho} + 
 \frac{\text{dist}_{I_2}(\sigma, \sigma')}{\VV}\epsilon_2^{\rho}
\geq \E_{Q_\sigma}^{\rho}(h_{\sigma'}).
$$

\emph{- Verifying that $\gamma$ is a marginal-transfer-exponent $P_X$ to  $Q_X$}. Using the above derivations, the condition that $\gamma\geq 1$, and further imposing the condition that $\tau\geq (1/2)^{1/\gamma}$, we have 
\begin{align*}
P_X(h_{\sigma'} \neq h_\sigma) \geq \frac{\text{dist}_{I_1}(\sigma, \sigma')}{\VV} \paren{\epsilon_1^{\beta_Q}}^\gamma + 
 \frac{1}{2}\frac{\text{dist}_{I_2}(\sigma, \sigma')}{\VV} (\epsilon_2/\tau)^\gamma
 \geq \frac{1}{2}Q_X^\gamma(h_{\sigma'} \neq h_\sigma).
\end{align*}
where we again used Lemma \ref{lem:power}. 

\emph{- Reduction to sub-Packings.} Now, in a slight deviation from the proof of Theorem \ref{theo:lowrho}, we define two separate packings (in Hamming distance), indexed by some $\varsigma$ as follows. Fix any $\varsigma \in \braces{-1, 1}^{\VV/2}$, and applying Proposition $2$, let 
$\Sigma_1(\varsigma) \subset \braces{\sigma \in \braces{-1, 1}^\VV: \sigma_{I_2} = \varsigma}$, 
and $\Sigma_2(\varsigma) \subset \braces{\sigma \in \braces{-1, 1}^\VV: \sigma_{I_1} = \varsigma}$ denote $m$-packings of $\braces{-1, 1}^{\VV/2}$, $m \geq \VV/16$, of size $M+1$, 
$M \geq 2^{\VV/16}$. 

Clearly, for any $\sigma, \sigma' \in \Sigma_1(\varsigma)$ we have 
$\E_{Q_\sigma}(h_{\sigma'}) \geq \epsilon_1/16$, while for any $\sigma, \sigma' \in \Sigma_2(\varsigma)$ we have $\E_{Q_\sigma}(h_{\sigma'}) \geq \epsilon_2/16$.

\emph{- KL Bounds in terms of $n_P$ and $n_Q$}. 
Again, define $\Pi_\sigma = P_\sigma^{n_P}\times Q_\sigma^{n_Q}$. 
First, for any $\varsigma$ fixed, let $\sigma, \sigma' \in \Sigma_1(\varsigma)$. As in the proof of Theorem \ref{theo:lowrho}, we apply Lemma \ref{lem:klbound} to get that  
\begin{align*} 
\KLDiv{\Pi_\sigma}{\Pi_{\sigma'}} &= 
 n_P \cdot \Expectation_{P_X} \KLDiv{P^\sigma_{Y|X}}{P^{\sigma'}_{Y|X}} + 
n_Q \cdot \Expectation_{Q_X} \KLDiv{Q^\sigma_{Y|X}}{Q^{\sigma'}_{Y|X}} \nonumber\\
&= n_P \cdot \sum_{i\in I_1} \frac{\epsilon_1^{\gamma \beta_Q}}{\VV}\KLDiv{P^\sigma_{Y|x_i}}{P^{\sigma'}_{Y|x_i}} 
+ n_Q \cdot \sum_{i\in I_1} \frac{\epsilon_1^{ \beta_Q}}{\VV}\KLDiv{Q^\sigma_{Y|x_i}}{Q^{\sigma'}_{Y|x_i}}  \nonumber\\
&\leq n_P\cdot c_0\frac{1}{2}\epsilon_1^{(2-\beta_P)\rho\beta_Q}  + 
n_Q\cdot c_0\frac{1}{2}\epsilon_1^{(2-\beta_Q)}  \nonumber\\
&\leq c_0\frac{\VV}{2}(c_1^{(2-\beta_P)\rho\beta_Q} + c_1^{2-\beta_Q})
\leq c_0c_1 \VV.
\end{align*}
Similarly, for any $\varsigma$ fixed, let $\sigma, \sigma' \in \Sigma_2(\varsigma)$; expanding over $I_2$, we have: \begin{align*}
\KLDiv{\Pi_\sigma}{\Pi_{\sigma'}} &\leq  n_P\cdot c_0\frac{1}{2}\epsilon_2^{(2-\beta_P)\rho}  + 
n_Q\cdot c_0\frac{1}{2}\epsilon_2\cdot\tau
\leq c_0 c_1 \VV. 
\end{align*}
It follows that, for $c_1$ sufficiently small so that 
$c_0c_1 \leq 1/16$, we can apply Proposition \ref{prop:tsy25} twice, to get that \emph{for all} $\varsigma$, there exist $\sigma_{I_1}$ and $\sigma_{I_2}$, such that for some constant $c$, we have 
$$\expec_{\Pi_\sigma}\paren{\E_{Q_\sigma}(\hat h)} \geq c\cdot \epsilon_1, \text{ where } \sigma = [\sigma_{I_1}, \varsigma], \text{ and } 
\expec_{\Pi_\sigma}\paren{\E_{Q_\sigma}(\hat h)} \geq c\cdot \epsilon_2, \text{ where } \sigma = [\varsigma, \sigma_{I_2}].$$
It follows that $c\cdot\max\braces{\epsilon_1, \epsilon_2}$ is a lower-bound for either 
$\sigma = [\sigma_{I_1}, \varsigma]$ or $\sigma = [\varsigma, \sigma_{I_2}]$. 
\end{proof}

\section{Upper Bounds Proofs}
\label{app:lepski}

\begin{proof}[Proof of Proposition~\ref{prop:lepski-agnostic}]
To reduce redundancy, we refer to arguments 
presented in the proof of Theorem~\ref{thm:lepski}, 
rather than repeating them here.
As in the proof of Theorem~\ref{thm:lepski}, 
we let $C$ serve as a generic constant (possibly depending on $\rho',C_{\rho'},\beta_{P},c_{\beta_{P}},\beta_{Q},c_{\beta_{Q}}$) which may be different in different appearances.
Define a set
\begin{equation*}
\Gyp = \left\{ h \in \H : \hat{R}_{S_{Q}}(h) - \hat{R}_{S_{Q}}(\hat{h}_{S_{Q}}) \leq c \sqrt{ \hat{P}_{S_{Q}}( h \neq \hat{h}_{S_{Q}} ) A_{n_{Q}} } + c A_{n_{Q}} \right\}.
\end{equation*}
We can rephrase the definition of $\hat{h}$ as saying 
$\hat{h} = \hat{h}_{S_{P}}$ when $\hat{h}_{S_{P}} \in \Gyp$, 
and otherwise $\hat{h} = \hat{h}_{S_{Q}}$.

We suppose the event from Lemma~\ref{lem:vc-bernstein} holds %%%here
for both $S_Q$ and $S_P$; by the union bound, this happens with probability at least $1-\delta$.
In particular, as in \eqref{eqn:Gyp-eqn} from the proof of Theorem~\ref{thm:lepski}, we have 
\begin{equation*}
\E_P(\hat{h}_{S_{P}}) \leq C A_{n_{P}}^{\frac{1}{2-\beta_{P}}}.
\end{equation*}
Together with the definition of $\rho^{\prime}$, this implies
\begin{equation*}
\E_Q(\hat{h}_{S_{P}},h^*_P) \leq C A_{n_{P}}^{\frac{1}{(2-\beta_{P})\rho^{\prime}}}, 
\end{equation*}
which means
\begin{equation}
\label{eqn:EQ-additive-bound}
\E_Q(\hat{h}_{S_{P}}) \leq \E_Q(h^*_P) + \E_Q(\hat{h}_{S_{P}},h^*_P) % inequality because \hat{h}_{S_P} could be better
\leq \E_Q(h^*_P) + C A_{n_{P}}^{\frac{1}{(2-\beta_{P})\rho^{\prime}}}.
\end{equation}
Now, if $R_{Q}(\hat{h}_{S_{P}}) \leq R_{Q}(\hat{h}_{S_{Q}})$, 
then (due to the event from Lemma~\ref{lem:vc-bernstein}) we have 
$\hat{h}_{S_{P}} \in \Gyp$, 
so that $\hat{h} = \hat{h}_{S_{P}}$, 
and thus the rightmost expression in \eqref{eqn:EQ-additive-bound}
bounds $\E_Q(\hat{h})$.
On the other hand, if $R_{Q}(\hat{h}_{S_{P}}) > R_{Q}(\hat{h}_{S_{Q}})$,
then regardless of whether $\hat{h} = \hat{h}_{S_{P}}$ or $\hat{h} = \hat{h}_{S_{Q}}$, 
we have $\E_Q(\hat{h}) \leq  \E_Q(\hat{h}_{S_{P}})$, 
so that again the rightmost expression in \eqref{eqn:EQ-additive-bound} bounds $\E_Q(\hat{h})$.
Thus, in either case, 
\begin{equation*}
\E_Q(\hat{h}) \leq \E_Q(h^*_P) + C A_{n_{P}}^{\frac{1}{(2-\beta_{P})\rho^{\prime}}}.
\end{equation*}

%To complete the proof, 
%it remains only to establish the second term in the $\min$ as a bound on $\E_Q(\hat{h})$.
Furthermore, as in the proof of Theorem~\ref{thm:lepski}, 
every $h \in \Gyp$ satisfies 
$\E_Q(h) \leq C A_{n_{Q}}^{\frac{1}{2-\beta_{Q}}}$.
Since the algorithm only picks $\hat{h} = \hat{h}_{S_{P}}$ 
if $\hat{h}_{S_{P}} \in \Gyp$, and otherwise picks $\hat{h} = \hat{h}_{S_{Q}}$,
which is clearly in $\Gyp$, 
we may note that we always have $\hat{h} \in \Gyp$.
We therefore conclude that 
\begin{equation*}
\E_Q(\hat{h}) \leq C A_{n_{Q}}^{\frac{1}{2-\beta_{Q}}},
\end{equation*}
which completes the proof.
\end{proof}

\section{Proofs for Adaptive Sampling Costs}
\label{app:adaptive}

\begin{proof}[Proof of Theorem~\ref{thm:adaptive-cost}]
First note that since $\sum_{n} \frac{1}{2n^2} < 1$, 
by the union bound and Lemma~\ref{lem:vc-bernstein}, 
with probability at least $1-\delta$, for every $h,h^{\prime} \in \Hyp$, 
every set $S_{P}$ in the algorithm has 
\begin{equation*}
R_{P}(h) - R_{P}(h') \leq \hat{R}_{S_{P}}(h) - \hat{R}_{S_{P}}(h') + c \sqrt{ \min\{ P(h \neq h'), \hat{P}_{S_{P}}( h \neq h' ) \} A_{|S_{P}|}^{\prime}} + c A_{|S_{P}|}^{\prime}
\end{equation*}
and 
\begin{equation*}
\hat{P}_{S_{P}}(h \neq h') \leq 2 P( h \neq h' ) + c A_{|S_{P}|}^{\prime}
\end{equation*}
every set $S_{Q}$ in the algorithm has 
\begin{equation*}
R_{Q}(h) - R_{Q}(h') \leq \hat{R}_{S_{Q}}(h) - \hat{R}_{S_{Q}}(h') + c \sqrt{ \min\{ Q(h \neq h'), \hat{P}_{S_{Q}}( h \neq h' ) \} A_{|S_{Q}|}^{\prime}} + c A_{|S_{Q}|}^{\prime}
\end{equation*}
and
\begin{equation*}
\hat{P}_{S_{Q}}(h \neq h') \leq 2 Q( h \neq h' ) + c A_{|S_{Q}|}^{\prime},
\end{equation*}
and we also have for the set $U_{Q}$ that 
\begin{equation*}
\frac{1}{2} Q(h \neq h') - c A_{|U_{Q}|} \leq \hat{P}_{U_{Q}}(h \neq h') \leq 2 Q(h \neq h') + c A_{|U_{Q}|},
\end{equation*}
which by our choice of the size of $U_{Q}$ implies 
\begin{equation*}
\frac{1}{2} Q(h \neq h') - \frac{\epsilon}{8} \leq \hat{P}_{U_{Q}}(h \neq h') \leq 2 Q(h \neq h') + \frac{\epsilon}{8}.
\end{equation*}
For the remainder of this proof, we suppose these inequalities hold.

In particular, these imply  
\begin{equation*}
R_{Q}(\hat{h}_{S_{Q}}) - R_{Q}(h^{*}) \leq c \sqrt{ \hat{P}_{S_{Q}}( \hat{h}_{S_{Q}} \neq h^{*} ) A_{|S_{Q}|}^{\prime} } + c A_{|S_{Q}|}^{\prime}.
\end{equation*}
Furthermore, 
\begin{equation*}
\hat{R}_{S_{Q}}(h^{*}) - \hat{R}_{S_{Q}}(\hat{h}_{S_{Q}}) \leq c \sqrt{ \hat{P}_{S_{Q}}( h^{*} \neq \hat{h}_{S_{Q}} ) A_{|S_{Q}|}^{\prime} } + c A_{|S_{Q}|}^{\prime}, 
\end{equation*}
so that $h = h^{*}$ is included in the supremum in the definition of $\hat{\delta}(S_{Q},S_{Q})$.
Together these imply 
\begin{equation*}
\E_{Q}(\hat{h}_{S_{Q}}) \leq R_{Q}(\hat{h}_{S_{Q}}) - R_{Q}(h^{*}) \leq c \sqrt{ \hat{\delta}(S_{Q},S_{Q}) A_{|S_{Q}|} } + c A_{|S_{Q}|}.
\end{equation*}
Thus, if the algorithm returns $\hat{h}_{S_{Q}}$ in Step 6, then $\E_{Q}(\hat{h}_{S_{Q}}) \leq \epsilon$.

Also by the above inequalities, we have 
\begin{equation*}
\hat{R}_{S_{P}}(h^{*}) - \hat{R}_{S_{P}}(\hat{h}_{S_{P}}) \leq c \sqrt{ \hat{P}_{S_{P}}( h^{*} \neq \hat{h}_{S_{Q}} ) A_{|S_{P}|}^{\prime} } + c A_{|S_{P}|}^{\prime}, 
\end{equation*}
so that $h^{*}$ is included in the supremum in the definition of $\hat{\delta}(S_{P},U_{Q})$.
Thus, 
\begin{equation*}
\E_{Q}(\hat{h}_{S_{P}}) \leq Q( \hat{h}_{S_{P}} \neq h^{*} ) \leq 2 \hat{P}_{U_{Q}}( \hat{h}_{S_{P}} \neq h^{*} ) + \frac{\epsilon}{2} \leq 2 \hat{\delta}(S_{P},U_{Q}) + \frac{\epsilon}{2},
\end{equation*}
and hence if the algorithm returns $\hat{h}_{S_{P}}$ in Step 7 we have $\E_{Q}(\hat{h}_{S_{P}}) \leq \epsilon$ as well.
Furthermore, the algorithm will definitely return at some point, since the bound in Step 6 approaches $0$ as the sample size grows.
Altogether, this establishes that, on the above event, the $\hat{h}$ returned by the algorithm satisfies $\E_{Q}(\hat{h}) \leq \epsilon$, as claimed.

It remains to show that the cost satisfies the stated bound.
For this, first note that since the costs incurred by the algorithm grow as a function that is upper and lower bounded by a geometric series, 
it suffices to argue that, for an appropriate choice of the constant $c^{\prime}$, the algorithm would halt if ever it reached 
a set $S_{P}$ of size at least $n_{P}^{*}$ or a set $S_{Q}$ of size at least $n_{Q}^{*}$ (which ever were to happen first); the result would then 
follow by choosing the actual constant $c^{\prime}$ in the theorem slightly larger than this, to account for the algorithm slighly ``overshooting'' this target (by at most a numerical constant factor).

First suppose it reaches $S_{Q}$ of size at least $n_{Q}^{*}$.
Now, as in the proof of Theorem~\ref{thm:lepski}, on the above event, 
every $h \in \Hyp$ included in the supremum in the definition of $\hat{\delta}(S_{Q},S_{Q})$ has
\begin{equation*}
\E_{Q}(h) \leq C \left( A_{|S_{Q}|}^{\prime} \right)^{\frac{1}{2-\beta_{Q}}},
\end{equation*}
which further implies 
\begin{equation*}
Q( h \neq h^{*} ) \leq C \left( A_{|S_{Q}|}^{\prime} \right)^{\frac{\beta_{Q}}{2-\beta_{Q}}},
\end{equation*}
so that (by the triangle inequality and the above inequalities) 
\begin{equation*}
\hat{P}_{S_{Q}}( h \neq \hat{h}_{S_{Q}} ) \leq C \left( A_{|S_{Q}|}^{\prime} \right)^{\frac{\beta_{Q}}{2-\beta_{Q}}}.
\end{equation*}
Thus, in Step 6, 
\begin{equation*}
c \sqrt{ \hat{\delta}(S_{Q},S_{Q}) A_{|S_{Q}|} } + c A_{|S_{Q}|} 
\leq C \left( A_{|S_{Q}|}^{\prime} \right)^{\frac{1}{2-\beta_{Q}}},
\end{equation*}
which, by our choice of $n_{Q}^{*}$ is at most $\epsilon$.  Hence, in this case, the algorithm will return in Step 6 (or else would have returned on some previous round).

On the other hand, suppose $S_{P}$ reaches a size at least $n_{P}^{*}$.
In this case, again by the same argument used in the proof of Theorem~\ref{thm:lepski}, 
every $h \in \Hyp$ included in the supremum in the definition of $\hat{\delta}(S_{P},U_{Q})$ has 
\begin{equation*}
\E_{P}(h) \leq C \left( A_{|S_{P}|}^{\prime} \right)^{\frac{1}{2-\beta_{P}}},
\end{equation*}
which implies 
\begin{equation*}
P( h \neq h^{*} ) \leq C \left( A_{|S_{P}|}^{\prime} \right)^{\frac{\beta_{P}}{2-\beta_{P}}},
\end{equation*}
and hence 
\begin{equation*}
Q( h \neq h^{*} ) \leq C \left( A_{|S_{P}|}^{\prime} \right)^{\frac{\beta_{P}}{(2-\beta_{P}) \gamma}}.
\end{equation*}
By the above inequalities
and the triangle inequality (since $\hat{h}_{S_{P}}$ is clearly also included as an $h$ in that supremum), 
this implies 
\begin{equation*}
\hat{P}_{U_{Q}}( h \neq \hat{h}_{S_{P}} ) \leq C \left( A_{|S_{P}|}^{\prime} \right)^{\frac{\beta_{P}}{(2-\beta_{P}) \gamma}} + \frac{\epsilon}{8}.
\end{equation*}
Altogether we get that 
\begin{equation*}
\hat{\delta}(S_{P},U_{Q}) \leq C \left( A_{|S_{P}|}^{\prime} \right)^{\frac{\beta_{P}}{(2-\beta_{P}) \gamma}} + \frac{\epsilon}{8}.
\end{equation*}
By our choice of $n_{P}^{*}$ (for an appropriate choice of constant factors), the right hand side is at most $\epsilon/4$.
Therefore, in this case the algorithm will return in Step 7 (if it had not already returned in some previous round).
This completes the proof.
\end{proof}

\section{Proofs for Reweighting Results}
\label{app:reweighting}

The following lemma is known (see \cite{van-der-Vaart:98,hanneke:12}), 
following from the general form of Bernstein's inequality and standard VC arguments, 
in combination with the well-known fact that, since the VC dimension of 
$\{ (x,y) \mapsto \ind[ h(x) \neq y ] : h \in \Hyp \}$ is $\V$, 
and pseudo-dimension of $\Pclass$ is $\Pdim$, 
it follows that the pseudo-dimension of $\{ (x,y) \mapsto \ind[ h(x) \neq y ] f(x) : h \in \Hyp, f \in \Pclass \}$
is at most $\propto \V + \Pdim$.

\begin{lemma}
\label{lem:reweighted-bernstein}
%Let .
With probability at least $1-\frac{\delta}{3}$, 
$\forall f \in \Pclass$, 
$\forall h,h' \in \Hyp$, 
\begin{equation*}
%\label{eqn:reweighted-bernstein}
R_{P_{f}}\!(h) - R_{P_{f}}\!(h^{\prime}) \!\leq\! \hat{R}_{S_{P},f}(h) - \hat{R}_{S_{P},f}(h^{\prime}) + c \sqrt{ \min\{ \!P_{f^{2}}(h \!\neq\! h^{\prime}), \!\hat{P}_{S_{P},f^{2}}( h \!\neq\! h^{\prime} ) \!\} \!A_{n_{P}}^{\prime\prime} } + c \|f\|_{\infty} A_{n_{P}}^{\prime\prime}
\end{equation*}
and
%\begin{equation}
%\label{eqn:reweighted-empirical-distance-bound}
$\frac{1}{2} P_{f^{2}}( h \neq h^{\prime} ) - c \|f\|_{\infty} A_{n_{P}}^{\prime\prime} \leq \hat{P}_{S_{P},f^{2}}( h \neq h^{\prime} ) \leq 2 P_{f^{2}}( h \neq h^{\prime} ) + c \|f\|_{\infty} A_{n_{P}}^{\prime\prime}$,
%\end{equation}
for a universal numerical constant $c \in (0,\infty)$.
\end{lemma}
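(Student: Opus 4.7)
The plan is to adapt the proof of the classical uniform Bernstein inequality -- the same tool underlying Lemma~\ref{lem:vc-bernstein} -- to the case where the empirical process is indexed jointly by a hypothesis pair and a reweighting density. Concretely, I would introduce the function classes
\begin{align*}
\Gyp_{1} &\doteq \{(x,y) \mapsto f(x)(\ind[h(x)\neq y] - \ind[h'(x)\neq y]) : h,h' \in \Hyp, f \in \Pclass\}, \\
\Gyp_{2} &\doteq \{(x,y) \mapsto f^{2}(x)\,\ind[h(x) \neq h'(x)] : h,h' \in \Hyp, f \in \Pclass\},
\end{align*}
so that the first assertion of the lemma is a uniform deviation statement for $\Gyp_{1}$ and the second is the analogous statement for $\Gyp_{2}$.

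The key structural input is that both classes have pseudo-dimension at most $c(\V+\Pdim)$. The paragraph preceding the lemma records this for the building-block class $\{(x,y)\mapsto f(x)\ind[h(x)\neq y]\}$ via a subgraph argument: splitting each input on whether $h(x)=y$, the function either equals $0$ or equals $f(x)$, so its subgraph is the union of a slice cut out by the loss-indicator class (VC dimension $\V$) with a slice cut out by the subgraph family for $\Pclass$ (pseudo-dimension $\Pdim$). Standard closure properties (differences of two such classes, and squaring the nonnegative $f$) then give the same $O(\V+\Pdim)$ bound for $\Gyp_{1}$ and $\Gyp_{2}$, and hence uniform covering numbers polynomial in $n$ of degree $O(\V+\Pdim)$.

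With this complexity bound in hand, both assertions follow from a uniform Bernstein inequality for bounded classes of controlled pseudo-dimension, applied as in the cited references. For $g \in \Gyp_{1}$ we have $|g| \leq \norm{f}_{\infty}$ and, using $|\ind[h\neq y] - \ind[h'\neq y]| \leq \ind[h\neq h']$,
\[ \var(g) \leq \expec\!\left[f^{2}\ind[h\neq h']\right] = P_{f^{2}}(h\neq h'). \]
For $g \in \Gyp_{2}$, $|g| \leq \norm{f}_{\infty}^{2}$ and $\var(g) \leq \norm{f}_{\infty}^{2}\,P_{f^{2}}(h\neq h')$. Plugging these envelope and variance estimates into the uniform Bernstein bound with complexity $A_{n_{P}}^{\prime\prime}$ produces the first inequality with $P_{f^{2}}(h\neq h')$ in the square root, together with a two-sided relative-deviation estimate that gives the second inequality. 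Substituting the second inequality into the variance slot of the first (the standard self-bounding trick also used in the proof of Lemma~\ref{lem:vc-bernstein}) then justifies replacing $P_{f^{2}}$ by $\hat{P}_{S_{P},f^{2}}$ to obtain the $\min$ form.

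The main obstacle is the pseudo-dimension bookkeeping -- showing that the joint class has complexity scaling like $\V + \Pdim$ rather than $\V \cdot \Pdim$. The subgraph decomposition above is the cleanest route, but some care is needed for the \emph{signed} functions in $\Gyp_{1}$, which one handles by writing them as the difference of two nonnegative classes of the form $\{f \cdot \ind[h\neq y]\}$ and applying a union bound on covers. Once these complexity bounds are secured, the remainder of the argument is essentially a verbatim reprise of the VC-Bernstein proof underlying Lemma~\ref{lem:vc-bernstein}, with envelope $\norm{f}_{\infty}$ (resp.\ $\norm{f}_{\infty}^{2}$) replacing the envelope $1$ of the $0/1$ loss.
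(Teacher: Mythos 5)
The paper does not actually prove this lemma; it cites it as known (from van der Vaart 1998 and Hanneke 2012) and just names the ingredients: the general Bernstein inequality, standard VC arguments, and the fact that the class $\{(x,y)\mapsto f(x)\ind[h(x)\neq y]\}$ has pseudo-dimension $\propto \V + \Pdim$. Your proposal expands exactly these ingredients — the reweighted loss and disagreement classes, their pseudo-dimension via the subgraph decomposition, envelope/variance estimates, and the self-bounding substitution for the $\min$ form — so it is the same route the authors had in mind, just spelled out. One small imprecision worth tightening if you were to write this up fully: the subgraph of $f(x)\ind[h(x)\neq y]$ at level $t$ is, for $t\geq 0$, the \emph{intersection} of the loss-indicator event $\{h(x)\neq y\}$ with the subgraph event $\{f(x)>t\}$ (union with a trivial half-line for $t<0$), rather than a union of two slices; the resulting $O(\V+\Pdim)$ bound is the same since both unions and intersections of VC classes add dimensions up to constants, but the phrasing should be corrected.
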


\begin{proof}[Proof of Theorem~\ref{thm:reweighting-finite}]
Let us suppose the event from Lemma~\ref{lem:reweighted-bernstein} holds, 
as well as the event from Lemma~\ref{lem:vc-bernstein} for $S_{Q}$, 
and also the part \eqref{eqn:vc-empirical-distance-bound} from the event in Lemma~\ref{lem:vc-bernstein} holds for $U_{Q}$.
The union bound implies all of these hold simultaneously with probability at least $1-\delta$.
For simplicity, and without loss of generality, we will suppose the constants $c$ in these two lemmas are the same.
Regarding the sufficient size of $|U_{Q}|$, for this result it suffices to have $|U_{Q}| \geq n_{P}^{\frac{\beta_{f}}{(2-\beta_{f})\gamma_{f}}}$ for all $f \in \Pclass$; 
for instance, in the typical case where $\gamma_{f} \geq 1$ for all $f \in \Pclass$, it would suffice to simply have $|U_{Q}| \geq n_{P}$.

First note that, exactly as in the proof of Theorem~\ref{thm:lepski}, 
since the event in Lemma~\ref{lem:reweighted-bernstein} implies $h^{*}_{P_{\hat{f}}}$ satisfies the constraint in the optimization defining $\hat{h}$, 
and the RCS assumption implies $\E_{Q}(h^{*}_{P_{\hat{f}}}) = 0$, and hence by \nc\ that $Q(h^{*}_{P_{\hat{f}}} \neq h^{*}_{Q}) = 0$, 
we immediately get that 
\begin{equation*}
\E_{Q}(\hat{h}) \leq C A_{n_{Q}}^{\frac{1}{2-\beta_{Q}}}.
\end{equation*}
Thus, it only remains to establish the other term in the minimum as a bound.

Similarly to the proofs above, we let $C_{f}$ be a general $f$-dependent constant (with the same restrictions on dependences mentioned in the theorem statement), 
which may be different in each appearance below.
For each $f \in \Pclass$, denote by $\hat{h}_{f}$ the $h \in \Hyp$ 
that minimizes $\hat{R}_{S_{Q}}(h)$ among $h \in \Hyp$ 
subject to $\hat{\E}_{S_{P},f}(h) \leq c \sqrt{ \hat{P}_{S_{P},f^{2}}( h \neq \hat{h}_{S_{P},f} ) A_{n_{P}}^{\prime\prime} } + c \| f \|_{\infty} A_{n_{P}}^{\prime\prime}$.
Also note that $\hat{h}_{S_{P},f}$ certainly satisfies the constraint in the set defining $\hat{\delta}(S_{P},f,U_{Q})$, 
and that the event from Lemma~\ref{lem:reweighted-bernstein} implies $h^{*}_{P_{f}}$ also satisfies this same constraint.
Therefore, the event for $U_{Q}$ from Lemma~\ref{lem:vc-bernstein}, and the triangle inequality, imply  
\begin{equation*}
\E_{Q}(\hat{h}_{f}) 
\leq Q( \hat{h}_{f} \neq h^{*}_{P_{f}} ) 
\leq Q( \hat{h}_{f} \neq \hat{h}_{S_{P},f} ) + Q( h^{*}_{P_{f}} \neq \hat{h}_{S_{P},f} ) 
\leq 4 \hat{\delta}(S_{P},f,U_{Q}) + 4 c A_{|U_{Q}|}.
\end{equation*}
Thus, $\hat{f}$ is being chosen to minimize an upper bound on the excess $Q$-risk of the resulting classifier.

Next we relax this expression to match that in the theorem statement.
Again using \eqref{eqn:vc-empirical-distance-bound}, we get that 
\begin{multline*}
\hat{\delta}(S_{P},f,U_{Q}) 
\leq c A_{|U_{Q}|} + \\
2 \sup\!\left\{ Q( h \neq \hat{h}_{S_{P},f} ) : h \in \Hyp, \hat{\E}_{S_{P},f} \leq c \sqrt{\hat{P}_{S_{P},f^{2}}( h \neq \hat{h}_{S_{P},f} ) A_{n_{P}}^{\prime\prime}} + c \|f\|_{\infty} A_{n_{P}}^{\prime\prime} \right\}.
\end{multline*}
Again since $h^{*}_{P_{f}}$ and $\hat{h}_{S_{P},f}$ both satisfy the constraint in this set, the supremum on the right hand side is at most 
\begin{equation*}
2 \sup\!\left\{ Q( h \neq h^{*}_{P_{f}} ) : h \in \Hyp, \hat{\E}_{S_{P},f} \leq c \sqrt{\hat{P}_{S_{P},f^{2}}( h \neq \hat{h}_{S_{P},f} ) A_{n_{P}}^{\prime\prime}} + c \|f\|_{\infty} A_{n_{P}}^{\prime\prime} \right\}.
\end{equation*}
Then using the marginal transfer condition, this is at most 
\begin{equation*}
C_{f} \sup\!\left\{ P_{f^{2}}( h \neq h^{*}_{P_{f}} )^{\frac{1}{\gamma_{f}}} : h \in \Hyp, \hat{\E}_{S_{P},f} \leq c \sqrt{\hat{P}_{S_{P},f^{2}}( h \neq \hat{h}_{S_{P},f} ) A_{n_{P}}^{\prime\prime}} + c \|f\|_{\infty} A_{n_{P}}^{\prime\prime} \right\},
\end{equation*}
and the Bernstein Class condition further bounds this as 
\begin{equation*}
C_{f} \sup\!\left\{ \E_{P_{f}}^{\frac{\beta_{f}}{\gamma_{f}}} : h \in \Hyp, \hat{\E}_{S_{P},f} \leq c \sqrt{\hat{P}_{S_{P},f^{2}}( h \neq \hat{h}_{S_{P},f} ) A_{n_{P}}^{\prime\prime}} + c \|f\|_{\infty} A_{n_{P}}^{\prime\prime} \right\}.
\end{equation*}
Finally, by essentially the same argument as in the proof of Theorem~\ref{thm:lepski} above, 
every $h \in \Hyp$ with $\hat{\E}_{S_{P},f} \leq c \sqrt{\hat{P}_{S_{P},f^{2}}( h \neq \hat{h}_{S_{P},f} ) A_{n_{P}}^{\prime\prime}} + c \|f\|_{\infty} A_{n_{P}}^{\prime\prime}$ 
satisies 
\begin{equation*}
\E_{P_{f}}(h) \leq C_{f} (A_{n_{P}}^{\prime\prime})^{\frac{1}{2-\beta_{f}}},
\end{equation*}
so that the above supremum is at most $C_{f} (A_{n_{P}}^{\prime\prime})^{\frac{\beta_{f}}{(2-\beta_{f})\gamma_{f}}}$ for a (different) appropriate choice of $C_{f}$.
Altogether we have established that 
\begin{equation*}
\hat{\delta}(S_{P},f,U_{Q}) 
\leq c A_{|U_{Q}|} + C_{f} (A_{n_{P}}^{\prime\prime})^{\frac{\beta_{f}}{(2-\beta_{f})\gamma_{f}}}.
\end{equation*}
By our condition on $|U_{Q}|$ specified above, this implies 
\begin{equation*}
\hat{\delta}(S_{P},f,U_{Q}) \leq C_{f} (A_{n_{P}}^{\prime\prime})^{\frac{\beta_{f}}{(2-\beta_{f})\gamma_{f}}}.
\end{equation*}

We therefore have that 
\begin{align*}
\E_{Q}(\hat{h}) 
&= \E_{Q}(\hat{h}_{\hat{f}})
\leq 4 \hat{\delta}(S_{P},\hat{f},U_{Q}) + 4 c A_{|U_{Q}|} 
= \inf_{f \in \Pclass} 4 \hat{\delta}(S_{P},\hat{f},U_{Q}) + 4 c A_{|U_{Q}|} \\
&\leq \inf_{f \in \Pclass} C_{f} (A_{n_{P}}^{\prime\prime})^{\frac{\beta_{f}}{(2-\beta_{f})\gamma_{f}}},
\end{align*}
where we have again used the condition on $|U_{Q}|$.
This completes the proof.
\end{proof}

\end{document}